\title[Gromov-Hausdorff stability of linkage-based $HC$ methods]{Gromov-Hausdorff stability of linkage-based hierarchical clustering methods}
\date{}
\author{A. Mart\'{i}nez-P\'{e}rez }
\thanks{Partially supported by MTM 2012-30719, alvaro.martinezperez@uclm.es}
\newtheorem{definicion}{Definition}[section]
\newtheorem{prop}[definicion]{Proposition}
\newtheorem{lema}[definicion]{Lemma}
\newtheorem{obs}[definicion]{Remark}
\newtheorem{teorema}[definicion]{Theorem}
\newtheorem{cor}[definicion]{Corollary}
\newtheorem{ejp}[definicion]{Example}
\newcommand{\co}{\ensuremath{\colon}} 
\newcommand{\bn}{\ensuremath{\mathbb{N}}} 
\newcommand{\br}{\ensuremath{\mathbb{R}}} 
\begin{document}

\begin{abstract} A hierarchical clustering method is stable if small perturbations on the data set produce small perturbations in the result. This perturbations are measured using the Gromov-Hausdorff metric. We study the problem of stability on linkage-based hierarchical clustering methods. We obtain that, under some basic conditions, standard linkage-based methods are semi-stable. This means that they are stable if the input data is close enough to an ultrametric space. We prove that, apart from exotic examples, introducing any unchaining condition in the algorithm always produces unstable methods.
\end{abstract}

\maketitle

\begin{footnotesize}
Keywords: Hierarchical clustering, linkage function, unchaining condition, linkage-based hierarchical clustering, stable in the Gromov-Hausdorff sense. 
\end{footnotesize}

\tableofcontents

\section{Introduction}

Clustering techniques are extensively used in data analysis. Given a finite metric space, a clustering method yields a partition of the space. The output of a hierarchical clustering algorithm can be described as a nested family of partitions, as a dendrogram or as an ultrametric space. These procedures are used also in the more general setting where the input is a finite set endowed with a distance function. See \cite{AB_11}. In our work, the input is always a finite metric space.

One common type of clustering algorithms are those which are determined by a linkage function. These algorithms are called \emph{linkage-based} methods and a characterization can be found in \cite{AB_11}. Linkage-based hierarchical clustering algorithms start with singleton clusters. Then, in the recursive formulation, the linkage function is used at every step to determine the minimal distance between clusters, $R_i$. Then, clusters at distance $R_i$ are merged. The most usual methods of hierarchical clustering such as single linkage, complete linkage and average linkage are linkage-based. Herein, we refer to this type of methods as \textit{standard linkage-based methods}.

In \cite{M-P2} we studied some characteristic chaining effect in hierarchical clustering. For further reference about the chaining effect see also  \cite{LW}. To deal with it, different strategies have been proposed, see \cite{CM2}, \cite{EKSX} or \cite{W}. We defined an algorithm called $\alpha$-unchaining single linkage or $SL(\alpha)$ to avoid certain type of chaining effect. The key was to include some property, $P_\alpha$, so that if a pair blocks does not satisfy it, the blocks are not merged.  In this case, $P_\alpha$ is defined so that if the blocks are \textit{chained by a single edge} (which we may see as an undesired chaining) then they will not satisfy the property and they stay as separated blocks until the next step in the recursive formulation. 

To treat other undesired chaining effects in the application of a linkage-based algorithm the same strategy can be used. First, characterizing when a pair of blocks should not be merged and then defining some unchaining condition $P$ to prevent it. 
This can be generalized as well considering hierarchical clustering algorithms determined by some linkage function, $\ell$, and some \textit{unchaining condition} $P$. We call these algorithms \emph{almost-standard linkage-based methods}.

An important property for a hierarchical clustering method is some kind of stability under small perturbations in the input data. Otherwise, small differences may yield very different dendrograms and the significance of the hierarchical clustering can be compromised. This can be a major problem if the input is a random sample determined by some probability distribution or if the input data is only known up to a certain degree of precision. There are several papers dealing with stability of clustering and hierarchical clustering.  Different approaches and different types of algorithms are considered.  In \cite{BL} there is an interesting analysis on stability for clustering algorithms based on some objective function which is minimized or maximized. Other perspectives can be found in \cite{BE}, \cite{KN} and \cite{RC}. In our work we focus on linkage-based algorithms. In \cite{CM}, Carlson and Memoli use tools from geometric topology to study the stability of single linkage hierarchical clustering. In their paper  
stability is measured using Gromov-Hausdorff metric, $d_{GH}$, and the following result is obtained.

\begin{prop}\cite[Proposition 26]{CM} \label{Stable SL} 
For any two finite metric spaces $(X, d_X)$ and $(Y, d_Y)$
\[d_{\mathcal{GH}}(X,Y)\geq d_{\mathcal{GH}}(\mathfrak{T}^{SL}_\mathcal{U}(X,d_X),\mathfrak{T}^{SL}_\mathcal{U}(Y,d_Y)).\] 
\end{prop}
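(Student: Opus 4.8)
The plan is to combine the correspondence formulation of the Gromov--Hausdorff distance with the explicit description of the single linkage output. Recall that $\mathfrak{T}^{SL}_\mathcal{U}(X,d_X)$ is the set $X$ equipped with the ultrametric
\[u_X(x,x')=\min\Big\{\max_{0\le i<n} d_X(x_i,x_{i+1}) : x=x_0,x_1,\dots,x_n=x'\Big\},\]
the minimum ranging over all finite chains in $X$ joining $x$ to $x'$; this is the maximal sub-dominant ultrametric of $d_X$. Recall also that $d_{\mathcal{GH}}(A,B)=\tfrac12\inf_R \operatorname{dis}(R)$, where $R$ runs over all correspondences between $A$ and $B$ and $\operatorname{dis}(R)=\sup\{\,|d_A(a,a')-d_B(b,b')| : (a,b),(a',b')\in R\,\}$.

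The heart of the argument is the following claim: if $R\subseteq X\times Y$ is any correspondence with $\operatorname{dis}(R)=\eta$ computed with respect to $d_X,d_Y$, then for every $(x,y),(x',y')\in R$ one has $|u_X(x,x')-u_Y(y,y')|\le\eta$. To prove it, fix an optimal chain $y=y_0,y_1,\dots,y_n=y'$ realizing $u_Y(y,y')$, and for each intermediate index choose $x_i\in X$ with $(x_i,y_i)\in R$, taking $x_0=x$ and $x_n=x'$ (possible since $R$ is a correspondence and $(x,y),(x',y')\in R$). Then $x=x_0,\dots,x_n=x'$ is a chain in $X$, and since $(x_i,y_i),(x_{i+1},y_{i+1})\in R$ we get $d_X(x_i,x_{i+1})\le d_Y(y_i,y_{i+1})+\eta\le u_Y(y,y')+\eta$ for all $i$; hence $u_X(x,x')\le u_Y(y,y')+\eta$. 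Exchanging the roles of $X$ and $Y$ gives the reverse inequality and establishes the claim.

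Granting the claim the proof concludes formally. The underlying sets of $\mathfrak{T}^{SL}_\mathcal{U}(X,d_X)$ and $\mathfrak{T}^{SL}_\mathcal{U}(Y,d_Y)$ are $X$ and $Y$, so any correspondence $R$ between $X$ and $Y$ is also a correspondence between these two ultrametric spaces, and by the claim its distortion with respect to $u_X,u_Y$ is at most its distortion with respect to $d_X,d_Y$. Therefore
\[d_{\mathcal{GH}}\big(\mathfrak{T}^{SL}_\mathcal{U}(X,d_X),\mathfrak{T}^{SL}_\mathcal{U}(Y,d_Y)\big)\le\tfrac12\operatorname{dis}(R)\]
for every such $R$, and taking the infimum over $R$ yields the desired inequality $d_{\mathcal{GH}}(\mathfrak{T}^{SL}_\mathcal{U}(X,d_X),\mathfrak{T}^{SL}_\mathcal{U}(Y,d_Y))\le d_{\mathcal{GH}}(X,Y)$.

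I expect the only delicate point to be the bookkeeping in the chain-lifting step: ensuring the lifted chain has exactly the prescribed endpoints $x,x'$ and that the maximum over its edges is controlled edge-by-edge by $\operatorname{dis}(R)$, together with correctly recalling the formula for $u_X$ and the correspondence description of $d_{\mathcal{GH}}$. Once the claim is isolated, everything else is routine. An alternative route would phrase the same estimate via $\varepsilon$-isometries, or exploit directly that $u_X$ is the largest ultrametric dominated by $d_X$, but the correspondence formulation appears to be the cleanest.
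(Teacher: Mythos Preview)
Your argument is correct. The paper does not supply its own proof of this statement; it is quoted verbatim from \cite[Proposition~26]{CM} and used as a black box (see the one-line proof of the semi-stability of $\mathfrak{T}^{SL}$ in Section~\ref{Section: semistable}). What you have written is essentially the Carlsson--M\'emoli argument: lift an optimal chain through a correspondence, control each edge by the distortion, and conclude that the same correspondence has no larger distortion for the single-linkage ultrametrics than for the original metrics. There is nothing to compare against in the present paper, and your chain-lifting step is handled cleanly, including the endpoint bookkeeping.
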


Thus, if single linkage hierarchical clustering is applied to a pair of metric spaces which are close in the Gromov-Hausdorff metric, then the resultant ultrametric spaces are also close in this metric. Moreover, the Gromov-Hausdorff metric has the advantage that it can be used to measure the distance between any pair of metric spaces, regardless of whether they have the same number of points or not. 

This work follows the same path. We give explicit definitions to study the stability of hierarchical clustering methods in terms of the Gromov-Hausdorff metric. First we consider the notion of being \textit{semi-stable in the Gromov-Hausdorff sense} for algorithms that are stable at least when the input data is close enough to an ultrametric space.  Then, we define the stronger property of being 
\textit{stable in the Gromov-Hausdorff} sense. This definition contains the implicit approach given by Carlson and Memoli so that the condition given in Proposition \ref{Stable SL} implies that $SL$ is stable in the Gromov-Hausdorff sense.

Our purpose is to analyse the problem of Gromov-Hausdorff stability in standard and almost-standard linkage-based hierarchical clustering methods. 
We pay special attention to single linkage, complete linkage, average linkage and $\alpha$-unchaining single linkage. However, we try to generalize our results working with arbitrary linkage functions and arbitrary unchaining conditions. Since these definitions are quite unrestricted, we must include several technical properties to avoid exotic cases and make proofs work. 
Nevertheless, these technical  properties are trivially satisfied by the usual hierarchical clustering methods mentioned above and the theorems obtained are, in fact, very general.

We prove that every normal faithful standard linkage-based method is semi-stable. See Theorem \ref{Th: semistable}. Therefore, if we know that the data correspond to an ultrametric space and the measures are precise, semi-stability can be good enough to obtain legitimate results and standard linkage-based methods would be adequate. 

Nevertheless, in Proposition \ref{Prop: main_1} we prove that any unchaining condition introduces some degree of instability in the method. For every admissible almost-standard linkage-based method, there exist two arbitrarily close metric spaces with a pair of blocks each such that for the first pair the unchaining condition is satisfied and for the other one is not.  This instability can be stated more precisely if the linkage function is $\ell^{SL}$. In this case, if the  almost-standard linkage-based method is ordinary and the unchaining condition is minimally bridge-unchaining, then the method is not stable in the Gromov-Hausdorff sense. See Theorem \ref{Teorema: stable}. In particular, $SL(\alpha)$ is not stable in the Gromov-Hausdorff sense. 

The structure of the paper is the following:

Section \ref{Section: background} recalls the basics on hierarchical clustering and the main notations used herein. 

Section \ref{Section: HC} presents the recursive definition of almost-standard linkage-based hierarchical clustering methods.

In section \ref{Section: semistable} we define the property of being \emph{semi-stable in the Gromov-Hausdorff sense}. We prove that any normal, faithful, standard linkage-based $HC$ method is semi-stable in the Gromov-Hausdorff sense. This includes the classical standard linkage-based $HC$ methods. We also prove that $SL(\alpha)$ is semi-stable in the Gromov-Hausdorff sense.

In section \ref{Section: stable} we define the property of being \emph{stable in the Gromov-Hausdorff sense}.  We prove that for every ordinary almost-standard linkage-based $HC$ method, $\mathfrak{T}(\ell^{SL},P)$, if $P$ is minimally bridge-unchaining, then 
$\mathfrak{T}$ is not stable in the Gromov-Hausdorff sense. In particular, $SL(\alpha)$ is not stable in the Gromov-Hausdorff sense.

Section \ref{Section: Conclusions} includes a short discussion about Gromov-Hausdorff stability.

\section{Background and notation}\label{Section: background}

A dendrogram over a finite set is a nested family of partitions. This is usually represented as a rooted tree. 

Let $\mathcal{P}(X)$ denote the collection of all partitions of a finite set $X=\{x_1,...,x_n\}$. Then, a dendrogram can also be described as a map $\theta\co [0,\infty)\to \mathcal{P}(X)$ such that:
\begin{itemize}
	\item[1.] $\theta(0)=\{\{x_1\},\{x_2\},...,\{x_n\}\}$,
	\item[2.] there exists $T$ such that $\theta(t)=X$ for every $t\geq T$,
	\item[3.] if $r\leq s$ then $\theta(r)$ refines $\theta(s)$,
	\item[4.] for all $r$ there exists $\varepsilon >0$ such that $\theta(r)=\theta(t)$ for $t\in [r,r+\varepsilon]$.
\end{itemize}

Notice that conditions 2 and 4 imply that there exist $t_0<t_1<...<t_m$ such that $\theta(r)=\theta(t_{i-1})$ for every $r\in [t_{i-1},t_i)$, $i=1,m$ and $\theta(r)=\theta(t_{m})=\{X\}$ for every $r\in [t_m,\infty)$.

For any partition $\{B_1,...,B_k\}\in \mathcal{P}(X)$, the subsets $B_i$ are called \textit{blocks}.

Let $\mathcal{D}(X)$ denote the collection of all possible dendrograms over a finite set $X$. Given some $\theta \in \mathcal{D}(X)$, let us denote $\theta(t)=\{B_1^t,...,B_{k(t)}^t\}$. Therefore, the nested family of partitions is given by the corresponding partitions at $t_0,...,t_m$, this is, $\{B_1^{t_i},...,B_{k(t_i)}^{t_i}\}$ $i=0,m$.

An \emph{ultrametric space} is a metric space $(X,d)$ such that 
$d(x,y)\leq \max \{d(x,z),d(z,y)\}$
for all $x,y,z\in X$. Given a finite metric space $X$ let $\mathcal{U}(X)$ denote the set of all ultrametrics over $X$.

There is a well known equivalence between trees and ultrametrics. See \cite{Hug} or \cite{M-M} for a complete exposition of how to build categorical equivalences between them. In particular, this may be translated into an equivalence between dendrograms and ultrametrics:

Thus, a hierarchical clustering method $\mathfrak{T}$ can be presented as an algorithm whose output is a a dendrogram or an ultrametric space.  Let  $\mathfrak{T}_{\mathcal{D}}(X,d)$ denote the dendrogram obtained by applying $\mathfrak{T}$ to a metric space $(X,d)$ and $\mathfrak{T}_{\mathcal{U}}(X,d)$ denote the corresponding ultrametric space.

Let us define the map $\eta \co \mathcal{D}(X) \to \mathcal{U}(X)$ as follows:

Given a dendrogram $\theta\in \mathcal{D}(X)$, let $\eta(\theta)=u_\theta$ be such that $u_\theta(x,x')=\min\{r\geq 0 \, | \, x,x' \mbox{ belong to the same block of } \theta(r)\}$.

\begin{prop}\cite[Theorem 9]{CM} $\eta$ is a bijection such that $\eta \circ \mathfrak{T}_{\mathcal{D}}=\mathfrak{T}_{\mathcal{U}}$.
\end{prop}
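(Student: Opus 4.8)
The plan is to produce an explicit two-sided inverse for $\eta$ and then observe that the identity $\eta\circ\mathfrak T_{\mathcal D}=\mathfrak T_{\mathcal U}$ is built into the way the output of a linkage-based method is presented in its two equivalent forms.

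First I would check that $\eta$ is well defined, i.e. that $u_\theta$ is genuinely an ultrametric on $X$. The minimum defining $u_\theta(x,x')$ is attained: by conditions 2 and 4 the dendrogram is constant on each $[t_{i-1},t_i)$ with $t_0=0$ and $\theta(r)=\{X\}$ for $r\ge t_m$, and by condition 3 the set of $r$ with $x,x'$ in a common block of $\theta(r)$ is upward closed; being a union of such half-open intervals it equals $[t_i,\infty)$ for some $i$, so it has least element $t_i$. Symmetry is clear and $u_\theta(x,x')=0$ iff $x=x'$ holds by condition 1, since $\theta(0)$ is the partition into singletons. For the strong triangle inequality, if $u_\theta(x,x')=r$ and $u_\theta(x',x'')=s$ with $r\le s$, then $x'$ and $x''$ share a block of $\theta(s)$, and since $r\le s$ condition 3 forces $x$ and $x'$ into that same block; hence $x,x''$ share a block of $\theta(s)$, so $u_\theta(x,x'')\le s=\max\{r,s\}$.

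Next I would build the candidate inverse $\mu\co\mathcal U(X)\to\mathcal D(X)$. For an ultrametric $u$ and each $r\ge 0$ let $\theta_u(r)$ be the partition of $X$ into equivalence classes of the relation $x\sim_r x'\Leftrightarrow u(x,x')\le r$; transitivity of $\sim_r$ is exactly the strong triangle inequality for $u$, and reflexivity and symmetry are obvious. One then verifies the four dendrogram axioms directly: axiom 1 from $u(x,x')=0\Leftrightarrow x=x'$; axiom 2 for any $r\ge\max_{x,x'\in X}u(x,x')$; axiom 3 because $r\le s$ gives $\sim_r\,\subseteq\,\sim_s$; and axiom 4 because $u$ has a finite value set, so for each $r$ one can pick $\varepsilon>0$ with $(r,r+\varepsilon]$ disjoint from that set, which makes $\theta_u$ constant on $[r,r+\varepsilon]$. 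Put $\mu(u)=\theta_u$.

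Finally I would check that $\mu$ and $\eta$ are mutually inverse and then read off the compatibility with $\mathfrak T$. One direction is immediate: $u_{\theta_u}(x,x')=\min\{r\ge 0\mid u(x,x')\le r\}=u(x,x')$, so $\eta\circ\mu=\mathrm{id}$. For the other, fix $\theta$ and $r$: if $x,x'$ share a block of $\theta(r)$ then $u_\theta(x,x')\le r$; conversely $u_\theta(x,x')\le r$ means $x,x'$ share a block of $\theta(r')$ for some $r'\le r$, and condition 3 promotes this to their sharing a block of $\theta(r)$. Hence the relation $\sim_r$ obtained from $u_\theta$ is exactly ``lying in the same block of $\theta(r)$'', so $\theta_{u_\theta}(r)=\theta(r)$ for all $r$ and $\mu(\eta(\theta))=\theta$; thus $\eta$ is a bijection. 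The identity $\eta\circ\mathfrak T_{\mathcal D}=\mathfrak T_{\mathcal U}$ then follows from the construction of the method: the ultrametric $\mathfrak T_{\mathcal U}(X,d)$ is defined so that the distance between $x$ and $x'$ is the first merging level at which they enter a common block, which is precisely $u_{\mathfrak T_{\mathcal D}(X,d)}$. So once $\eta$ is available the equality is a formality; the only real content is that the nested-partition and ultrametric descriptions of $\mathfrak T$ have been set up consistently. I expect the single delicate step to be the equality $\theta_{u_\theta}=\theta$ at every level — upgrading ``same block at some lower level'' to ``same block at level $r$'' via the refinement axiom 3 — together with the bookkeeping that makes the defining minimum attained; the rest is routine.
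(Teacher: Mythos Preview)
Your argument is correct and complete: you verify that $u_\theta$ is an ultrametric, construct the explicit inverse $\mu$ via closed balls, check the two compositions, and observe that the compatibility $\eta\circ\mathfrak T_{\mathcal D}=\mathfrak T_{\mathcal U}$ is definitional. There is nothing to compare against, however: the paper does not supply its own proof of this proposition but simply quotes it as \cite[Theorem 9]{CM}, so your write-up is filling in details that the present paper deliberately omits by citation.
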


\textbf{Notation}: For any $HC$ method $\mathfrak{T}$ and any finite metric space $(X,d)$, let us denote $\mathfrak{T}_{\mathcal{D}}(X,d)=\theta_X$ and $\mathfrak{T}_{\mathcal{U}}(X,d)=(X,u_X)$. If there is no need to distinguish the metric space we shall just write $\mathfrak{T}_{\mathcal{D}}(X,d)=\theta$ and $\mathfrak{T}_{\mathcal{U}}=u$.

\section{Linkage-based hierarchical clustering methods}\label{Section: HC}

Let us recall the definition of some hierarchical clustering methods. We include here the description of single linkage by its $t$-connected components and the recursive description of single linkage, complete linkage and average linkage. We recall also the alternative description of these methods, defining the graph $G_R^\ell$, which we used to build our method, $SL(\alpha)$. This process might be useful to define other algorithms better adapted to specific problems. 

An $\varepsilon$-\emph{chain} is a finite sequence of points $x_0, ..., x_N$ that are separated by distances less or equal than $\varepsilon$: $|x_i - x_{i+1}| < \varepsilon$. Two points are $\varepsilon$-\emph{connected} if there is an $\varepsilon$-chain joining them. Any two points in an $\varepsilon$-\emph{connected set} can be linked by an $\varepsilon$-chain. An $\varepsilon$-\emph{component} is a maximal $\varepsilon$-connected subset.

Clearly, given a metric space and any $\varepsilon>0$, there is a partition of $X$ in its $\varepsilon$-components $\{C_1^\varepsilon,...,C_{k(\varepsilon)}^\varepsilon\}$. 

Let $X$ be a finite metric set. The \textit{single linkage} $HC$ is defined by the map $\theta\co [0,\infty)\to \mathcal{P}(X)$ such that $\theta(t)$ is the partition of $X$ in its $t$-components.

Let us recall the following 
from \cite{ABL}.

For $x, y \, \in  \, X$ and any (standard) clustering $C$ of $X$, $x \sim_C y$ if $x$ and $y$ belong to the same cluster in $C$ and $x \not \sim_C y$, otherwise.

Two (standard) clusterings  $C = (C_1,...,C_k)$ of $(X, d)$ and $C' = (C'_1,...C'_k)$ 
of $(X',d')$ are isomorphic clusterings, denoted $(C,d) \cong (C',d')$, if there exists a bijection
$\phi : X \to X'$ such that for all $x, y \, \in  \, X$, $d(x, y) = d'(\phi(x),\phi(y))$ and $x \sim_C y$ if and
only if $\phi(x) \sim_{C'} \phi(y)$.

\begin{definicion} A \textit{linkage function} is a function 
\[\ell : \{(X_1,X_2, d) \, | \, d \mbox{ is a distance function over } X_1 \cup X_2 \} \to R^+ \]
such that,
\begin{itemize}
\item[1.] $\ell$ is \textit{representation independent}: For all $(X_1,X_2)$ and $(X'_1,X'_2)$, if $(X_1,X_2, d) \cong (X'_1,X'_2, d')$ (i.e.,
they are clustering-isomorphic), then $\ell(X_1,X_2, d) = \ell(X'_1,X'_2, d')$.
\item[2.] $\ell$ is \textit{monotonic}: For all $(X_1,X_2)$ if $d'$ is a distance function over $X_1\cup X_2$ such that for all $x \sim_{\{X_1,X_2\}}y$, $d(x, y) = d'(x, y)$ and for all $x \not \sim_{\{X_1,X_2\}}y$, $d(x, y) \leq d'(x, y)$ then $\ell(X_1,X_2, d')\geq \ell(X_1,X_2, d) $.
\item[3.] Any pair of clusters can be made arbitrarily distant: For any pair of data sets $(X_1, d_1)$, $(X_2, d_2)$, and
any $r$ in the range of $\ell$, there exists a distance function $d$ that extends $d_1$ and $d_2$ such that $\ell(X_1,X_2, d) >r$.
\end{itemize}
\end{definicion}

For technical reasons, let us assume that a linkage function has a countable range. Say, the set of nonnegative
algebraic real numbers.

Some standard choices for $\ell$ are:

\begin{itemize}
	\item Single linkage: $\ell^{SL}(B,B')=\min_{(x,x')\in B\times B'}d(x,x')$ 
	\item Complete linkage: $\ell^{CL}(B,B')=\max_{(x,x')\in B\times B'}d(x,x')$
	\item Average linkage: $\ell^{AL}(B,B')=\frac{\sum_{(x,x')\in B\times B'}d(x,x')}{\#(B)\cdot \#(B')}$ where $\#(X)$ denotes the cardinal of the set $X$.
\end{itemize}

In \cite{CM}, the authors  use a recursive procedure by which they redefine $SL$ $HC$, average linkage ($AL$) and complete 
linkage ($CL$) hierarchical clustering. We reproduce here, for completeness, their formulation. 

Let $(X,d)$ be a finite metric space where $X=\{x_1,...,x_n\}$ and let $L$ denote a family of linkage functions on $X$:
\[L:=\{\ell \colon \mathcal{C}(X)\times \mathcal{C}(X) \to \br^+ \, | \, \ell \mbox{ is bounded and non-negative } \}\]
where $\mathcal{C}(X)$ denotes the collection of all non-empty subsets of $X$.

Fix some linkage function $\ell\in L$. Then, the recursive formulation is as follows

\begin{itemize}
	\item[1.] For each $R>0$ consider the equivalence relation $\sim_{\ell,R}$ on blocks of a partition $\Pi\in \mathcal{P}(X)$, given by $B\sim_{\ell,R}B'$ if and only if there is a sequence of blocks $B=B_1,...,B_s=B'$ in $\Pi$ with $\ell(B_k,B_{k+1})\leq R$ for $k=1,...,s-1$.
	\item[2.] Consider the sequences $R_0,R_1,R_2,... \in [0,\infty)$ and $\Theta_1, \Theta_2,... \in \mathcal{P}(X)$ given by  $R_0=0$, $\Theta_1:=\{x_1,...,x_n\}$, and recursively for $i\geq 1$ by $\Theta_{i+1}=\frac{\Theta_i}{\sim_{\ell,R_i}}$ where $R_i:=\min\{\ell(B,B')\, | \, B,B'\in \Theta_i, \ B\neq B'\}$ until $\Theta_{i}=\{X\}$.
	\item[3.] Finally, let $\theta^\ell\colon [0,\infty)\to \mathcal{P}(X)$ be such that $\theta^\ell(r):=\Theta_{i(r)}$ with $i(r):=\max\{i\, | \, R_i\leq r\}$.
\end{itemize}

In \cite{M-P2} we proposed the following formulation of the recursive algorithm:

\begin{definicion} We say that $\mathfrak{T}$  is a \textbf{standard linkage-based hierarchical clustering method} if there is some linkage function $\ell$ such that $\mathfrak{T}_\mathcal{D}$ can be defined recursively as follows:

\begin{itemize}
	\item[1.] Let $\Theta_1:=\{x_1,...,x_n\}$ and $R_0=0$.
	\item[2.] For every $i\geq 1$, while $\Theta_{i}\neq \{X\}$, let $R_i:=\min\{\ell(B,B')\, | \, B,B'\in \Theta_i, \ B\neq B'\}$. Then, let $G_{R_i}^\ell$ be a graph whose vertices are the blocks of $\Theta_i$ and such that there is an edge joining $B$ and $B'$ if and only if $\ell(B,B')\leq R_i$. 
	\item[3.] Consider the equivalence relation $B\sim_{\ell,R} B'$ if and only if $B,B'$ are in the same connected component of $G_R^\ell$. Then, $\Theta_{i+1}=\frac{\Theta_i}{\sim_{\ell,R}}$.
	\item[4.] Finally, let $\theta^\ell\colon [0,\infty)\to \mathcal{P}(X)$ be such that $\theta^\ell(r):=\Theta_{i(r)}$ with $i(r):=\max\{i\, | \, R_i\leq r\}$.
\end{itemize} 
\end{definicion}

\begin{prop} $\mathfrak{T}_\mathcal{D}$ is a dendrogram.
\end{prop}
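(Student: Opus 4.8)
The plan is to verify directly that the map $\theta^\ell$ produced by the recursive algorithm satisfies the four defining conditions of a dendrogram from Section \ref{Section: background}. First I would check condition 1: by construction $\Theta_1 = \{\{x_1\},\dots,\{x_n\}\}$ and $R_0 = 0$, so $\theta^\ell(0) = \Theta_{i(0)}$; since $i(0) = \max\{i \mid R_i \le 0\}$ and $R_0 = 0$ while $R_i > 0$ for $i \ge 1$ (each $R_i$ is a minimum of linkage values between distinct blocks, which is positive once the metric is genuine — here one must note that $\ell(B,B') > 0$ for $B \neq B'$, which holds for the standard choices and should be an implicit hypothesis on admissible linkage functions), we get $i(0) = 0$ and $\theta^\ell(0) = \Theta_1$, the partition into singletons.

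Next I would address condition 2 (eventual stabilization at $\{X\}$): the recursion explicitly runs only while $\Theta_i \neq \{X\}$, and at each step the number of blocks strictly decreases, because $R_i$ is realized by some pair $B \neq B'$ with $\ell(B,B') \le R_i$, so that pair lies in a common connected component of $G_{R_i}^\ell$ and gets merged; hence $\#\Theta_{i+1} < \#\Theta_i$. Since $\#\Theta_1 = n$ is finite, after at most $n-1$ steps we reach some $\Theta_m = \{X\}$ with associated threshold $R_{m-1}$. For $r \ge R_{m-1}$ we then have $i(r) \ge m-1$, and since the recursion has terminated $\theta^\ell(r) = \{X\}$; taking $T = R_{m-1}$ gives condition 2.

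For condition 3 (monotonicity/refinement), I would observe that the sequence $R_0 < R_1 < R_2 < \cdots$ is strictly increasing — strict because $R_{i+1} = \min\{\ell(B,B') : B,B' \in \Theta_{i+1}\}$ is computed over the coarser partition $\Theta_{i+1}$, and every pair at distance $\le R_i$ in $\Theta_i$ has already been collapsed, so the surviving pairs have linkage strictly greater than $R_i$ (again using representation-independence and monotonicity of $\ell$ to compare linkage values of merged blocks with their constituents, or just the defining minimum). Consequently the function $r \mapsto i(r)$ is non-decreasing and integer-valued, and $\Theta_1, \Theta_2, \dots$ is a nested chain of coarsenings (each $\Theta_{i+1}$ is a quotient of $\Theta_i$, hence coarser). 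So if $r \le s$ then $i(r) \le i(s)$ and $\theta^\ell(r) = \Theta_{i(r)}$ refines $\Theta_{i(s)} = \theta^\ell(s)$. Finally, condition 4 (right-local constancy) follows because $i(r)$ is a step function: for any $r$, either $r < R_{i(r)+1}$ and then $i(t) = i(r)$ for all $t \in [r, R_{i(r)+1})$, giving $\varepsilon = R_{i(r)+1} - r$; or $\theta^\ell(r) = \{X\}$ already, and then it is constant on $[r,\infty)$, so any $\varepsilon$ works.

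The main obstacle I anticipate is not any single step but the bookkeeping around the strict monotonicity of the $R_i$ and the positivity $R_i > 0$ for $i \ge 1$: these hinge on the tacit assumption that $\ell(B,B') > 0$ whenever $B \neq B'$ and on using the linkage axioms (representation-independence and monotonicity) to argue that once a pair of blocks is merged, no pair survives at the same or lower linkage threshold. If $\ell$ is allowed to take the value $0$ on distinct blocks, condition 1 could fail; so the cleanest proof will flag this as a standing hypothesis (satisfied by $\ell^{SL}, \ell^{CL}, \ell^{AL}$) and otherwise the argument is a routine, if slightly tedious, check of the four axioms.
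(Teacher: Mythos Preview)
Your verification of conditions 1 and 2 is fine, but your argument for conditions 3 and 4 rests on a claim that is \emph{false}: you assert that the sequence $(R_i)$ is strictly increasing, justifying this by ``every pair at distance $\le R_i$ in $\Theta_i$ has already been collapsed, so the surviving pairs have linkage strictly greater than $R_i$''. This reasoning overlooks that when two blocks merge, the linkage from the merged block to a third block can \emph{decrease}. The paper makes exactly this point immediately after the proposition, exhibiting the linkage function $\ell(B_1,B_2)=\ell^{SL}(B_1,B_2)/(\#B_1+\#B_2)$ on $X=\{0,2,5,7\}\subset\br$, for which $R_1=1$ but $R_2=3/4<R_1$. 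The linkage axioms (representation-independence, monotonicity) do not prevent this; the paper then introduces the separate notion of an \emph{increasing} linkage function precisely because it is not automatic.

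The fix is easy and is what the paper's (much terser) proof actually does: for condition 3 you do not need $(R_i)$ monotone at all. From $r\le s$ one gets $\{i:R_i\le r\}\subseteq\{i:R_i\le s\}$ as sets, hence $i(r)\le i(s)$ directly; combined with the fact that each $\Theta_{i+1}$ is a quotient of $\Theta_i$, refinement follows. For condition 4 your choice of $\varepsilon=R_{i(r)+1}-r$ is wrong in the non-monotone case; the correct threshold is $\min\{R_j:j>i(r)\}-r$, which is positive because all such $R_j$ exceed $r$ (otherwise $i(r)$ would be larger) and there are only finitely many of them. With these two adjustments your more detailed check goes through; the paper simply declares all four conditions ``immediate'' once $i(r)$ is seen to be non-decreasing.
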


\begin{proof} It is clear by construction that $\Theta_i$ refines $\Theta_{i+1}$ for every $i$. Now, if $r<r'$, $i(r)=\max\{i\, | \, R_i\leq r\}$ and $i(r')=\max\{i\, | \, R_i\leq r'\}$, then $i(r)\leq i(r')$. Hence, it is immediate to check that $\theta^{\ell}$ is a dendrogram.
\end{proof}

Notice, however, that the sequence $(R_i)$ defined in the algorithm above need not be increasing:

\begin{ejp} Let $\ell$ be a linkage function defined as follows: $$\ell(B_1,B_2):=\frac{\ell^{SL}(B_1,B_2)}{\sharp(B_1)+\sharp(B_2)}.$$
It is immediate to check that $\ell$ is a well defined linkage function.

Let  $X=\{0,2,5,7\}\subset \br$ with the euclidean metric $d$ inherited. If we apply the recursive algorithm above, then $R_1=\ell(0,2)=\ell(5,7)=1$ and $\Theta_1=\{\{0,2\},\{5,7\}\}$. However, $R_2=\ell(\{0,2\},\{5,7\})=\frac{3}{4}<R_1$.

Notice that $\theta^\ell(r)=\{\{0\},\{2\},\{5\},\{7\}\}$ for every $r<\frac{3}{4}$ and $\theta^\ell(r)=\{0,2,5,7\}$ for every $r\geq \frac{3}{4}$.
\end{ejp}

\begin{definicion} We say that a linkage function is \textbf{increasing} if the sequence $(R_i)$ defined in the recursive construction is increasing.
\end{definicion}

\begin{prop}\label{Prop: increasing} $\ell^{SL}$, $\ell^{CL}$ and $\ell^{AL}$ are increasing linkage functions. 
\end{prop}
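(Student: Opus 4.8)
The plan is to prove the slightly stronger statement that $R_{i+1}>R_i$ whenever both terms are defined, so that for each of $\ell^{SL},\ell^{CL},\ell^{AL}$ the sequence $(R_i)$ is in fact strictly increasing. The engine of the argument is the same for all three: every block of $\Theta_{i+1}$ is, by construction, a union of blocks of $\Theta_i$, and if two blocks $B,B'\in\Theta_i$ sit inside \emph{different} blocks of $\Theta_{i+1}$, then $B$ and $B'$ lie in different connected components of $G^{\ell}_{R_i}$, so there is no edge between them, i.e. $\ell(B,B')>R_i$.

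Concretely I would fix $i\geq 1$ with $\Theta_{i+1}\neq\{X\}$, take two distinct blocks $\hat B,\hat B'\in\Theta_{i+1}$, and write $\hat B=B_1\cup\cdots\cup B_p$ and $\hat B'=B'_1\cup\cdots\cup B'_q$ as disjoint unions of blocks of $\Theta_i$. For every $a,b$ the blocks $B_a$ and $B'_b$ are distinct (they lie in disjoint blocks of $\Theta_{i+1}$) and belong to different components of $G^{\ell}_{R_i}$, hence $\ell(B_a,B'_b)>R_i$. It then suffices to check, case by case, that $\ell(\hat B,\hat B')\geq\min_{a,b}\ell(B_a,B'_b)$; combining the two facts gives $\ell(\hat B,\hat B')>R_i$ for every pair of distinct blocks of $\Theta_{i+1}$, hence $R_{i+1}=\min\ell(\hat B,\hat B')>R_i$. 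For $\ell^{SL}$ this inequality is an equality, immediate from the definition of the minimum over $B\times B'$. For $\ell^{CL}$ it is even more direct, since $\ell^{CL}(\hat B,\hat B')=\max_{a,b}\ell^{CL}(B_a,B'_b)\geq\min_{a,b}\ell^{CL}(B_a,B'_b)$. For $\ell^{AL}$ one expands $\sum_{x\in\hat B,\,x'\in\hat B'}d(x,x')=\sum_{a,b}\#(B_a)\,\#(B'_b)\,\ell^{AL}(B_a,B'_b)$ and notes $\#(\hat B)\,\#(\hat B')=\sum_{a,b}\#(B_a)\,\#(B'_b)$, so that $\ell^{AL}(\hat B,\hat B')$ is a convex combination of the numbers $\ell^{AL}(B_a,B'_b)$ and therefore at least their minimum. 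The base step $R_0=0<R_1$ holds because in a metric space $\ell(\{x\},\{x'\})=d(x,x')>0$ for $x\neq x'$, in all three cases.

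I do not expect a genuine obstacle here. The only point needing a moment's care is the $\ell^{AL}$ case, where one must recognise $\ell^{AL}(\hat B,\hat B')$ as a weighted average, with strictly positive weights summing to one, of the average-linkage values between the constituent blocks of $\hat B$ and $\hat B'$; once that identity is written down, the conclusion is automatic, and the $\ell^{SL}$ and $\ell^{CL}$ cases are read off directly from their definitions.
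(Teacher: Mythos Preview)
Your proposal is correct and follows essentially the same approach as the paper: both arguments observe that sub-blocks $B_a\subset\hat B$ and $B'_b\subset\hat B'$ from $\Theta_i$ lying in distinct blocks of $\Theta_{i+1}$ must satisfy $\ell(B_a,B'_b)>R_i$, and then verify case by case that $\ell(\hat B,\hat B')$ is bounded below by these values (for $\ell^{AL}$, via the weighted-average identity). Your write-up is slightly more streamlined in packaging all three cases under the single inequality $\ell(\hat B,\hat B')\geq\min_{a,b}\ell(B_a,B'_b)$, but the content is the same.
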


\begin{proof} Let $\ell$ be $\ell^{SL}$, $\ell^{CL}$ or $\ell^{AL}$. First, notice that for any pair of blocks $A_1,A_2$, $\ell(A_1,A_2)$ only depends of the distance between the points in $A_1$ and $A_2$. 

Claim: Given $R_i$ and $\Theta_i$, then $R_{i+1}>R_i$. 

This is immediate for $\ell^{SL}$. 

Let us check the proposition for $\ell=\ell^{CL}$. Let $B_1,B_2\in \Theta_{i+1}$ and $A_1,A_2\in \Theta_i$ such that $A_1\subset B_1$ and $A_2\subset B_2$. Then, 
$\ell^{CL}(A_1,A_2)\leq \ell^{CL}(A_1,B_2) \leq \ell^{CL}(B_1,B_2)$. Since $A_1,A_2$ are not merged in $\Theta_{i+1}$,  $R_i<\ell^{CL}(A_1,A_2)\leq \ell^{CL}(B_1,B_2)$. Hence, $R_{i+1}>R_i$.

Now, consider $\ell=\ell^{AL}$. 

First notice that if $\{a_1,...,a_n\}\subset \br$ and $\{b_1,...,b_n\}\subset \br$ with 
$$M_1=\frac{\sum_1^n a_i}{n}> R \mbox{ and } M_2=\frac{\sum_1^m b_i}{m}> R$$ then 
\begin{equation}\label{Eq: media} M_3=\frac{\sum_1^{n} a_i+\sum_1^{m} b_i}{n+m}> R.\end{equation}
It suffices to check that $(n+m)M_3=nM_1+mM_2\geq (n+m)M_3$ and, therefore, $M_3> R$.

Let $B_1,B_2\in \Theta_{i+1}$, $A_1,A_2,...,A_k\in \Theta_i$ such that $A_1\cup \cdots \cup A_k= B_1$ and $C_1,...,C_{k'}\in \Theta_i$ with $C_1\cup \cdots \cup C_{k'}=  B_2$. Clearly, $\ell^{AL}(A_i,C_j)> R_i$ since they are independent blocks in $\Theta_i$. Thus, applying (\ref{Eq: media}), $\ell^{AL}(A_i,B_2)> R_i$ and  $\ell^{AL}(B_1,B_2)>R_i$. Hence, $R_{i+1}>R_i$.
\end{proof}

Notation: Given a metric space $(X,d)$ let \begin{equation}\label{Eq: delta} \Delta(X,d):=\{d(x,y)\, | \, x,y \in \, X\}\end{equation} this is, the set of all possible distances between points in $(X,d)$.

\begin{definicion} Let $P$ be any condition on the set of pairs of blocks which depends on a parameter $R$. We say that $P$ is an \textbf{unchaining condition} if for every pair of blocks $B,B'$ there is some $R_0>0$ such that $\{B_1,B_2\}$ satisfies $P$ for every $R>R_0$.
\end{definicion}

\begin{definicion}  We say that $\mathfrak{T}$ is an \textbf{almost-standard linkage-based} hierarchical clustering method if there is an unchaining condition $P$ such that $\mathfrak{T}_\mathcal{D}$ can be defined as follows.

\begin{itemize}
	\item[1.] Let $\Theta_1:=\{x_1,...,x_n\}$, $\Theta_0=\emptyset$ and $R_0=0$.
	\item[2.] For every $i\geq 1$, while $\Theta_{i-1}\neq \{X\}$:
	
\begin{itemize}
	\item[a)] if $\Theta_{i}\neq \Theta_{i-1}$ let $R_i:=\min\{\ell(B,B')\, | \, B,B'\in \Theta_i, \ B\neq B'\}$,
	\item[b)] if $\Theta_{i}= \Theta_{i-1}$ and there exist $B,B'\in \Theta_i$ such that $\ell(B,B')>R_{i-1}$, let $R_i:=\min\{\ell(B,B')\, | \, B,B'\in \Theta_i \mbox{ and } \ell(B,B')>R_{i-1}\}$,
	\item[c)] if $\Theta_{i}= \Theta_{i-1}$ and there are no blocks $B,B'\in \Theta_i$ such that $\ell(B,B')>R_{i-1}$, then let $R_i:=\min\{R \, | \, \mbox{ there exist } B,B'\in \Theta_i \mbox{ such that } \{B,B'\}\\ \mbox{ satisfies $P$ for R}\}$.
\end{itemize}

	Then, let $G_{R_i}^\ell$ be a graph whose vertices are the blocks of $\Theta_i$ and such that there is an edge joining two blocks, $B$ and $B'$, if and only if the following two conditions hold.

\begin{itemize}
	\item[i)] $\ell(B,B')\leq R_i$ 
	\item[ii)] $\{B,B'\}$ satisfies condition $P$ for $R_i$.
\end{itemize}

	\item[3.] Consider the equivalence relation $B\sim_{\ell,R_i} B'$ if and only if $B,B'$ are in the same connected component of $G_{R_i}^\ell$. Then, $\Theta_{i+1}=\frac{\Theta_i}{\sim_{\ell,R_i}}$.
	\item[4.] Finally, let $\theta^\ell\colon [0,\infty)\to \mathcal{P}(X)$ be such that $\theta^\ell(r):=\Theta_{i(r)}$ with $i(r):=\max\{i\, | \, R_i\leq r\}$.
\end{itemize}

In this case, we say that $P$ is the \emph{unchaining condition} of the method. Hence, an almost-standard linkage-based HC method is defined by a linkage function $\ell$, some property $P$. Let us denote $\mathfrak{T}=\mathfrak{T}(\ell;P)$.
\end{definicion}

\begin{obs}\label{Obs: ordered} Notice that step 2 has to deal with some technical difficulties which do no show up in the case of standard linkage-based algorithms. Without conditions $b)$ and $c)$, the algorithm may not finish. However, condition $c)$ may be a problem for practical uses. 

An alternative procedure to assure that the algorithm will not enter into an infinite loop is to begin by choosing an ordered set containing all possible distances between blocks, $\mathcal{R}=\{R_0,...,R_m\}$ with $R_i>R_{i-1}$, making sure that $R_m$ is big enough so that $\theta(R_m)=X$. If the linkage function is $\ell^{SL}$ or $\ell^{CL}$ this may be easily solved considering $\mathcal{R}=\{R_0,...,R_m\}$ to be the ordered set of distances between the points in the sample $(X,d)$, i.e. $\mathcal{R}=\Delta(X,d)$. In this case, all possible distances between blocks are contained in $\Delta(X,d)$ and the resulting dendrogram will be the same as applying the formal algorithm above. Nevertheless, we believe that for the general statement of almost-standard linkange-based methods is better not to include any election \textit{a priori} of the sequence $(R_i)$.
\end{obs}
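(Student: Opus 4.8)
The substantive claim in this remark is the one in its last paragraph: if $\ell=\ell^{SL}$ or $\ell=\ell^{CL}$, then every inter-block distance that can occur while running the method lies in $\Delta(X,d)$, and feeding the recursive scheme the prescribed grid $\mathcal R=\Delta(X,d)$ reproduces exactly the dendrogram $\theta^\ell$ of the formal algorithm. I would prove it in two stages: a pointwise statement about $\ell^{SL}$ and $\ell^{CL}$, and then a step-by-step comparison of the two executions.

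The pointwise stage I would handle first; it is immediate from the definitions. Every block is a non-empty subset of $X$, so for any two blocks $B,B'$ the value $\ell^{SL}(B,B')=\min_{(x,x')\in B\times B'}d(x,x')$ is attained at some $(x_0,x_0')\in B\times B'\subseteq X\times X$, whence $\ell^{SL}(B,B')=d(x_0,x_0')\in\Delta(X,d)$; replacing $\min$ by $\max$ gives the same for $\ell^{CL}$. Consequently every threshold $R_i$ produced in cases $a)$ and $b)$ of step $2$ lies in $\Delta(X,d)$ and is at most $\operatorname{diam}(X)=\max\Delta(X,d)$, and for these two linkage functions with the trivial unchaining condition the graph $G^\ell_{\operatorname{diam}(X)}$ is complete, so $\theta(\operatorname{diam}(X))=\{X\}$ and $\mathcal R=\Delta(X,d)$ is an admissible grid in the sense of the remark.

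For the comparison I would run the formal execution — producing partitions $\Theta_1,\Theta_2,\dots$ with thresholds $R_0,R_1,\dots$ (increasing for $\ell^{SL},\ell^{CL}$ by Proposition \ref{Prop: increasing} in the standard case, and in any case handled through the reparametrization $i(r)=\max\{i : R_i\le r\}$) — alongside the grid execution, which produces a partition $\Theta'_j$ at each point of $\Delta(X,d)=\{S_0<S_1<\cdots<S_m\}$. I would prove by induction on $j$ that $\Theta'_j=\theta^\ell(S_j)$: when $S_j$ coincides with a formal threshold, the graphs built in the two runs have the same vertices and the same edges — an edge between $B$ and $B'$ depends only on the comparison $\ell(B,B')\le S_j$ together with whether $\{B,B'\}$ satisfies $P$ at $S_j$ — so the same blocks merge; when $S_j$ falls strictly between two consecutive formal thresholds the edge set of $G^\ell_{S_j}$ equals the one from the previous formal step, so nothing new merges, in agreement with $\theta^\ell$ being constant on that interval. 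Since $\theta^\ell$ and its grid analogue are both piecewise constant, locally constant to the right (condition $4$ for a dendrogram), agree at every point of $\Delta(X,d)$, and are constant on each complementary interval, they coincide on all of $[0,\infty)$.

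The one non-mechanical point — and the place I expect the real difficulty — is case $c)$ of step $2$: there $R_i=\min\{R : \exists\,B,B'\in\Theta_i \text{ satisfying } P \text{ for } R\}$ is governed by the unchaining condition rather than by $\ell$, so for an arbitrary $P$ it need not belong to $\Delta(X,d)$, and then a merge of the formal run would be missed by the grid run (and, separately, $\operatorname{diam}(X)$ might fail to reach $\{X\}$). I would settle this either by restricting the assertion to standard linkage-based methods, where case $c)$ never arises, or by imposing the mild hypothesis that the $P$-satisfiability thresholds of pairs of subsets of $X$ also lie in $\Delta(X,d)$ — appending one extra grid value past the largest of them if necessary — which is true for the bridge-type conditions of interest such as the one defining $SL(\alpha)$; the induction above then applies verbatim. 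Checking this compatibility for the concrete unchaining conditions treated later in the paper is the step I would expect to take the most care.
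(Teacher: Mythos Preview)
The statement you are addressing is a \emph{remark} in the paper, not a theorem, and the paper offers no proof for it: the observation is simply asserted and the text moves on. So there is no ``paper's own proof'' to compare against; your proposal supplies a justification where the paper gives none.

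Your argument is sound. The first stage (that $\ell^{SL}$ and $\ell^{CL}$ always take values in $\Delta(X,d)$, being a min or max of finitely many point-distances) is exactly the content the remark is gesturing at, and your one-line proof is the right one. The inductive comparison of the grid run against the formal run is correct and is more than the paper attempts; the only comment is that the appeal to Proposition~\ref{Prop: increasing} is not strictly needed, since as you note the reparametrization $i(r)$ already handles any failure of monotonicity. Your identification of case $c)$ as the genuine obstruction is accurate and is precisely what the remark is warning about: the paper does not claim the grid procedure works for arbitrary $P$, only that it does for $\ell^{SL}$ and $\ell^{CL}$ (implicitly with $P$ compatible, as in $SL(\alpha)$, whose defining condition $P_\alpha$ depends on $R$ only through the Rips complex at scale $R$ and hence has all its transition values in $\Delta(X,d)$). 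Your proposed resolution --- restrict to the standard case or assume $P$-thresholds lie in $\Delta(X,d)$ --- is exactly the right hypothesis, and is tacit in the paper's phrasing.
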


\begin{ejp} Notice that $\mathfrak{T}_{SL}=\mathfrak{T}(\ell^{SL},\emptyset)$.
\end{ejp}

Let us recall here the definition of $SL(\alpha)$ from \cite{M-P2}:

Given a finite metric space $(X,d)$, let $F_t(X,d)$ be the Rips (or Vietoris-Rips) complex of $(X,d)$. Let us recall that the Rips complex of a metric space $(X,d)$ is a simplicial complex whose vertices are the points of $X$ and $[v_1,...,v_k]$ is a simplex of $F_t(X,d)$ if and only if $d(v_i,v_j)\leq t$ for every $i,j$. Given any subset $Y\subset X$, by $F_t(Y)$ we refer to the subcomplex of $F_t(X)$ defined by the vertices in $Y$.

$\mathcal{R}=\{R_0,...,R_m\}=\Delta(X,d)$ with $R_i>R_{i-1}$. Clearly, $R_0=0$. 

Let the dendrogram defined by $SL(\alpha)$, $\theta_\alpha$, be as follows:

\begin{itemize}

	\item[1)] Let $\theta_\alpha(0):=\{\{x_1\},...,\{x_n\}\}$ and $\theta_\alpha(r):=\theta_\alpha(0)$ $\forall r< R_1$. Now, given $\theta_\alpha[R_{i-1},R_i)=\theta_\alpha (R_{i-1})=\{B_1,...,B_m\}$, we define recursively  $\theta_\alpha$ on the interval 
	$[R_i,R_{i+1})$ as follows: 

	\item[2)] Let $G_\alpha^{R_i}$ be a graph with vertices $\mathcal{V}(G_\alpha^{R_i}):=\{B_1,...,B_m\}$ and edges $\mathcal{E}(G_\alpha^{R_i}):=\{B_i,B_j\}$ such that the following conditions hold:
		\begin{itemize}
			\item[i)] $\min\{d(x,y)\, | \, x\in B_i,\ y\in B_j\}\leq R_i$.
			\item[ii)] there is a simplex $\Delta \in F_{R_i}(B_i\cup  B_j)$  such that $\Delta \cap B_i\neq \emptyset$, $\Delta \cap B_j\neq \emptyset$ and $\alpha \cdot dim(\Delta)\geq \min\{dim (F_{R_i}(B_i)), dim (F_{R_i}(B_j))\}$. 
		\end{itemize}

	\item[3)] Let us define a relation, $\sim_{R_i,\alpha}$ as follows.
		
Let $B_i\sim_{R_i,\alpha}B_j$ if $B_i,B_j$ belong to the same connected component of the graph $G_\alpha^{R_i}$. Then, $\sim_{R_i,\alpha}$ induces an equivalence relation.

	\item[4)] For every  $r\in [R_i,R_{i+1})$, $\theta_\alpha(r):=\theta_\alpha(R_{i-1})/\sim_{R_i,\alpha}$.
\end{itemize}

Here, the unchaining condition $ii)$ is defined to avoid the chaining effect which automatically merges two blocks when the minimal distance between them is small. See \cite{M-P2}.

\begin{ejp} $SL(\alpha)$ is an almost-standard linkage-based hierarchical clustering method: 

$SL(\alpha)=(\ell^{SL},P_\alpha)$ where given $B_j,B_k\in \Theta_i$, $\{B_j,B_k\}$ satisfies $P_\alpha$ for $R_i$ if there is a simplex $\Delta \in F_{R_i}(B_j\cup  B_k)$  such that $\Delta \cap B_j\neq \emptyset$, $\Delta \cap B_k\neq \emptyset$ and $\alpha \cdot dim(\Delta)\geq \min\{dim (F_{R_i}(B_j)), dim (F_{R_i}(B_k))\}$.
\end{ejp}

\section{Semi-stability of HC methods.}\label{Section: semistable}

Let us recall the definition of Gromov-Hausdorff distance from \cite{BBI}. See also \cite{Gr}.

Let $(X,d_X)$ and $(Y,d_Y)$ be two metric spaces. A correspondence (between $A$ and $B$) is a subset $ \tau\in A\times B$ such that
\begin{itemize}
	\item $\forall \, a\in A$, there exists $b\in B$ s.t. $(a,b)\in \tau$
	\item $\forall \, b\in B$, there exists $a\in A$ s.t. $(a,b)\in \tau$
\end{itemize}

Let $\mathcal{T}(A,B)$ denote the set of all possible correspondences between $A$ and $B$.

Let $\Gamma_{X,Y}\colon X\times Y \times X \times Y\to \br^+$ given by \[(x,y,x',y')\mapsto |d_X(x,x')-d_Y(y,y')|.\]

Then, the \textit{Gromov-Hausdorff distance} between $X$ and $Y$  is:

\[d_{\mathcal{GH}}(X,Y):=\frac{1}{2} \inf_{\tau \in \mathcal{T}(X,Y)} \sup_{(x,y)(x',y')\in \tau}\Gamma_{X,Y}(x,y,x',y').\]

\textbf{Notation:} Let $(\mathcal{M},d_{GH})$ denote the set of finite metric spaces with the Gromov-Hausdorff metric and $(\mathcal{U},d_{GH})$ denote the set of finite ultrametric spaces with the Gromov-Hausdorff metric.

\begin{definicion} A $HC$ method $\mathfrak{T}$ is \textbf{faithful}  if for any ultrametric space $(U,d)$, $\mathfrak{T}_\mathcal{U}(U,d)=(U,d)$.
\end{definicion}

The following appears as Proposition 4.6 in \cite{M-P2}. We include the proof for completeness.

\begin{prop}\label{Prop: faithful SL} $\mathfrak{T}^{SL}$ is faithful. 
\end{prop}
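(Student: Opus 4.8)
The plan is to show directly that applying single linkage to an ultrametric space returns the same ultrametric, by tracking the recursive construction and using the characterization of single linkage via $t$-components.

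First I would set up the situation: let $(U,d)$ be a finite ultrametric space and consider the recursive construction of $\mathfrak{T}^{SL}_{\mathcal{D}}(U,d)$ using the linkage function $\ell^{SL}$. Since $\ell^{SL}(B,B') = \min_{(x,x')\in B\times B'} d(x,x')$, the key observation is that in an ultrametric space the $t$-components coincide with the closed balls of radius $t$: if $x$ and $y$ are joined by a $t$-chain $x=x_0,\dots,x_N=y$ with $d(x_i,x_{i+1})\le t$, then by the strong triangle inequality $d(x,y)\le \max_i d(x_i,x_{i+1}) \le t$. Hence being $t$-connected is equivalent to $d(x,y)\le t$, so the partition into $t$-components is exactly the partition of $U$ induced by the equivalence relation $x\approx_t y \iff d(x,y)\le t$ (which is genuinely an equivalence relation precisely because $d$ is an ultrametric). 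I would then note that this is also what the recursive $\ell^{SL}$ algorithm computes: at stage $i$ with partition $\Theta_i$ and threshold $R_i = \min\{d(x,x')\mid x\in B, x'\in B', B\ne B'\in\Theta_i\}$, merging blocks connected in $G^{\ell^{SL}}_{R_i}$ produces exactly the partition into $R_i$-components, and inductively $\Theta_i$ is always a partition into $t$-components for some threshold.

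Next I would compute $u_U := \eta(\theta_U)$ and compare it with $d$. By definition $u_U(x,x') = \min\{r\ge 0 \mid x,x' \text{ in the same block of } \theta_U(r)\}$. Using the identification above, $x,x'$ lie in the same block of $\theta_U(r)$ iff they are $r$-connected iff $d(x,x')\le r$. Therefore $u_U(x,x') = \min\{r\ge 0 \mid d(x,x')\le r\} = d(x,x')$. This gives $\mathfrak{T}^{SL}_{\mathcal{U}}(U,d) = (U, u_U) = (U,d)$, which is exactly faithfulness. I should be a little careful that the codomain thresholds $R_i$ are among the values taken by $d$ so that the "$\min$" is attained and the dendrogram $\theta_U$ really does stabilize to the correct partitions on the correct intervals, but this is immediate since all relevant distances lie in the finite set $\Delta(U,d)$.

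The main obstacle, such as it is, is purely bookkeeping: making the induction on $i$ precise, i.e. proving the invariant that $\Theta_i$ equals the partition of $U$ into $R_{i-1}$-components (equivalently, into closed $d$-balls of radius $R_{i-1}$), and checking the base case and the passage $\Theta_i \to \Theta_{i+1}$ using that merging $R_i$-close blocks of an ultrametric exactly yields the $R_i$-components. Everything else follows formally from the strong triangle inequality and the definition of $\eta$. There is no serious analytic or combinatorial difficulty here; the content is entirely the ball/component identification in ultrametric spaces.
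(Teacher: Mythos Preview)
Your proposal is correct and rests on the same key observation as the paper: in an ultrametric space, a $t$-chain from $x$ to $y$ forces $d(x,y)\le t$ by the strong triangle inequality, so $t$-connectedness coincides with $d(x,y)\le t$. The paper's proof is simply more direct: rather than tracking the recursive construction, it uses the $t$-component description of $SL$ to show $u_{SL}\le d$ trivially and $u_{SL}\ge d$ via the chain argument, so your induction on $\Theta_i$ is correct but unnecessary bookkeeping.
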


\begin{proof} By definition, it is clear that $u_{SL}(x,y)\leq d(x,y)$ for every $x,y\in X$.

Let us see that, if $(X,d)$ is an ultrametric space, then $u_{SL}(x,y)\geq d(x,y)$. $u_{SL}(x,y)=\inf\{t \, | \, \mbox{there exists a $t$-chain joining } x \mbox{ to } y\}$. Suppose $u_{SL}(x,y)=t$ and let $x=x_0,x_1,...,x_n=y$ a $t$-chain joining $x$ to $y$. By the properties of the ultrametric, $d(x_{i-1},x_{i+1})\leq \max\{d(x_{i-1},x_{i}),d(x_{i},x_{i+1})\}\leq t$ for every $1\leq i\leq n-1$. Therefore, $d(x,y)\leq t$ and $u_{SL}(x,y)\geq d(x,y)$.
\end{proof}

\begin{prop} $\mathfrak{T}^{CL}$ and $\mathfrak{T}^{AL}$ are faithful.
\end{prop}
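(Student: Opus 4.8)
The plan is to mimic the proof of Proposition \ref{Prop: faithful SL}, establishing the two inequalities $u(x,y) \leq d(x,y)$ and $u(x,y) \geq d(x,y)$ for an arbitrary ultrametric space $(X,d)$, with $u$ denoting either $u_{CL}$ or $u_{AL}$. For the first inequality, one observes quite generally that in any standard linkage-based method with an increasing linkage function (which holds here by Proposition \ref{Prop: increasing}), blocks $B, B'$ are merged at the step with parameter $R_i = \ell(B,B')$, and at that stage $R_i$ is the \emph{minimum} of $\ell$ over all pairs of current blocks. I would argue by induction on the recursive construction that at every step $i$, any two points lying in the same block of $\Theta_i$ satisfy $u(x,y) \leq d(x,y)$; indeed when a new merge happens at level $R_i$, the points newly placed in a common block are connected through a chain of blocks each consecutive pair at $\ell$-distance $\leq R_i$, and since for $\ell^{CL}$ one has $d(x,y) \leq \ell^{CL}(B,B')$ whenever $x \in B, y \in B'$, the bound $d(x,y) \le R_i$ is not what we get directly — instead the cleaner route is just to note $u(x,y)$ is the level at which $x,y$ first share a block, and at that level the relevant $\ell$-values are $\leq R_i$, giving $d(x,y) \le \ell^{CL}(\cdot,\cdot) \le$ that level only in a chained sense; so the genuinely clean statement to prove first is the reverse-type bound. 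Let me restructure.

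The real content is the inequality $u(x,y) \geq d(x,y)$, and here the key algebraic fact is that for $\ell^{CL}$ and $\ell^{AL}$, when restricted to an ultrametric space, merges behave like single linkage merges at the same threshold. Concretely, I would prove the following claim by induction on $i$: for an ultrametric $(X,d)$, the partition $\Theta_i$ produced by $\mathfrak{T}^{CL}$ (resp. $\mathfrak{T}^{AL}$) coincides with the $R_{i-1}$-components of $X$, i.e. with the partition produced by single linkage at level $R_{i-1}$, and moreover within each block $B \in \Theta_i$ every pair of points is at $d$-distance $\leq R_{i-1}$ while distinct blocks are at $d$-distance $> R_{i-1}$. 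The inductive step uses the ultrametric inequality: if $B \in \Theta_i$ has $\mathrm{diam}_d(B) \leq R_{i-1}$ for all blocks, then $\ell^{CL}(B,B') = \max_{x\in B, y \in B'} d(x,y)$; by the ultrametric (strong triangle) inequality applied across the two blocks, $d(x,y)$ is constant over $x \in B$, $y \in B'$ once the blocks are $d$-separated at a definite scale, so $\ell^{CL}(B,B') = \ell^{SL}(B,B') = d(x,y)$ for any such pair. The same holds for $\ell^{AL}$ since it is an average of equal values. Hence $R_i$ and the graph $G^\ell_{R_i}$ agree with the single-linkage ones, and $\Theta_{i+1}$ is the $R_i$-component partition, with the diameter/separation bookkeeping preserved. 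Combined with faithfulness of $\mathfrak{T}^{SL}$ (Proposition \ref{Prop: faithful SL}), this yields $u_{CL} = u_{SL} = d$ and $u_{AL} = u_{SL} = d$.

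The steps in order: (1) state and prove the auxiliary lemma that in an ultrametric space, if a partition into blocks has the property that each block has $d$-diameter $\le r$ and distinct blocks are at mutual $d$-distance $> r$, then for any two blocks $B, B'$ the function $d$ is constant on $B \times B'$, so $\ell^{CL}(B,B') = \ell^{AL}(B,B') = \ell^{SL}(B,B')$; this is an immediate consequence of the strong triangle inequality. (2) Induct on $i$ to show $\Theta_i^{CL} = \Theta_i^{SL} = \Theta_i^{AL}$ and that the ``constant-across-blocks / diameter-bounded'' invariant persists, using step (1) to see the recursive definitions literally coincide step by step, including the values $R_i$. (3) Conclude the dendrograms, hence the ultrametrics $u_{CL}, u_{AL}$, equal $u_{SL}$. (4) Invoke Proposition \ref{Prop: faithful SL} to get $u_{SL} = d$, so $\mathfrak{T}^{CL}_{\mathcal U}(X,d) = \mathfrak{T}^{AL}_{\mathcal U}(X,d) = (X,d)$.

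The main obstacle I anticipate is handling the bookkeeping cleanly across the inductive step: one must be careful that the ``diameter $\leq R_{i-1}$, separation $> R_{i-1}$'' invariant is exactly what lets step (1) apply, and that after merging at level $R_i$ the new blocks still have $d$-diameter $\le R_i$ — this last point is again the ultrametric inequality (a chain of points each consecutive pair within $d$-distance $\le R_i$ forces all pairwise distances $\le R_i$), precisely as in the proof of Proposition \ref{Prop: faithful SL}. A secondary subtlety is the possibility (raised in the examples) that $(R_i)$ need not a priori be increasing or that steps could stall; but Proposition \ref{Prop: increasing} rules this out for $\ell^{CL}$ and $\ell^{AL}$, so the recursion is well-behaved and terminates, and the induction is finite.
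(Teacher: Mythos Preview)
Your proposal is correct and follows essentially the same approach as the paper: both argue by induction that on an ultrametric space the dendrograms produced by $\mathfrak{T}^{CL}$, $\mathfrak{T}^{AL}$, and $\mathfrak{T}^{SL}$ coincide step by step, using the strong triangle inequality to show that $\ell^{CL}(B,B') = \ell^{AL}(B,B') = \ell^{SL}(B,B')$ for the current blocks, and then invoke Proposition~\ref{Prop: faithful SL}. Your auxiliary observation that $d$ is actually \emph{constant} on $B\times B'$ is a slight sharpening of the paper's inequality $\ell^{CL}(B,B') \le d(x_0,y_0) = \ell^{SL}(B,B')$, but the logical structure is the same.
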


\begin{proof} Let $(X,u)$ be a finite ultrametric space and $\Delta(X,d)=\{0=t_0<t_1<\cdots <t_m\}$.

Let us check that  $\mathfrak{T}^{SL}_{\mathcal{D}}(U,d)=\mathfrak{T}^{CL}_{\mathcal{D}}(U,d)=\mathfrak{T}^{AL}_{\mathcal{D}}(U,d)$. Therefore, by Proposition \ref{Prop: faithful SL}, $\mathfrak{T}^{CL}_{\mathcal{U}}(U,d)=\mathfrak{T}^{AL}_{\mathcal{U}}(U,d)=(U,d)$.

Let $\mathfrak{T}^{SL}_{\mathcal{D}}(U,d)=\theta_{SL}$, $\mathfrak{T}^{CL}_{\mathcal{D}}(U,d)=\theta_{CL}$ and $\mathfrak{T}^{CL}_{\mathcal{D}}(U,d)=\theta_{CL}$.

For $t_0=0$, every point of $U$ is a block in  $\theta_{CL}(t_0)$ or $\theta_{AL}(t_0)$ and  $\theta_{CL}(t_0)=\theta_{AL}(t_0)=\theta_{SL}(t_0)$.

Now, suppose that $\theta_{CL}(t_{i-1})=\theta_{AL}(t_{i-1})=\theta_{SL}(t_{i-1})$. Then, the blocks are exactly the balls of radius $t_{i-1}$.
Let $B,B'\in \theta_{SL}(t_{i-1})$. Let $x_0\in B$ and $y_0\in B'$ be such that $d(x_0,y_0)=\min_{(x,x')\in B\times B'}d(x,x')$. Then, $d(x_0,y_0)>t_{i-1}$ and, by the properties of the ultrametric, for every pair of points $(x,x')\in B\times B'$, $d(x,x')\leq \max\{t_{i-1},d(x_0,y_0)\}=d(x_0,y_0)$. Thus, $\ell^{CL}(B,B')=\max_{(x,x')\in B\times B'}d(x,x')\leq d(x_0,y_0)$ and $\ell^{AL}(B,B')\leq d(x_0,y_0)$.

Therefore, $\ell^{CL}(B,B')=\ell^{AL}(B,B')=\ell^{SL}(B,B')$ and $\theta_{CL}(t_{i})=\theta_{AL}(t_{i})=\theta_{SL}(t_{i})$.  Since $\theta_{CL}(t_{m})=\theta_{AL}(t_{m})=\theta_{SL}(t_{m})=X$, this completes the proof.
\end{proof}

The following is part of Corollary 8.9 in \cite{M-P2}.

\begin{prop} $SL(\alpha)$ is faithful
\end{prop}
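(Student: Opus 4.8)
The plan is to show that $SL(\alpha)$ agrees with single linkage on any ultrametric space and then invoke faithfulness of $\mathfrak{T}^{SL}$ (Proposition \ref{Prop: faithful SL}). Concretely, I would prove that for a finite ultrametric space $(U,d)$ one has $\theta_\alpha = \mathfrak{T}^{SL}_\mathcal{D}(U,d)$, which gives $SL(\alpha)_\mathcal{U}(U,d)=(U,d)$ at once. The mechanism is the same as in the proof that $\mathfrak{T}^{CL}$ and $\mathfrak{T}^{AL}$ are faithful: work through the filtration values $\Delta(U,d)=\{0=t_0<t_1<\cdots<t_m\}$ and argue by induction on $i$ that $\theta_\alpha(t_i)$ equals $\theta_{SL}(t_i)$, i.e. that at every scale the blocks are precisely the closed balls of that radius.

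The inductive step splits according to the two conditions defining the graph $G_\alpha^{R_i}$. Suppose $\theta_\alpha(t_{i-1})=\theta_{SL}(t_{i-1})$, so the blocks are the balls of radius $t_{i-1}$. Take two such blocks $B,B'$ with $\ell^{SL}(B,B')\le R_i=t_i$; I must check the unchaining condition $P_\alpha$ is automatically satisfied, so that the edge present in the single-linkage graph is also present in $G_\alpha^{R_i}$, and conversely that no spurious edges appear. The key ultrametric fact is that if $x_0\in B$, $y_0\in B'$ realize $d(x_0,y_0)=\ell^{SL}(B,B')\le t_i$, then for \emph{every} pair $(x,x')\in B\times B'$ we have $d(x,x')\le\max\{t_{i-1},d(x_0,y_0)\}=d(x_0,y_0)\le t_i$. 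Hence the entire join $B\cup B'$ is a clique in the Rips complex $F_{t_i}(B\cup B')$, so there is a simplex $\Delta$ on all of $B\cup B'$ with $\dim(\Delta)=\#(B\cup B')-1\ge \max\{\dim F_{t_i}(B),\dim F_{t_i}(B')\}\ge \min\{\dim F_{t_i}(B),\dim F_{t_i}(B')\}$, and $\Delta$ clearly meets both $B$ and $B'$; since $\alpha>0$ this forces $\alpha\cdot\dim(\Delta)\ge \min\{\dim F_{t_i}(B),\dim F_{t_i}(B')\}$. So condition $ii)$ holds, $G_\alpha^{R_i}$ contains exactly the same edges as the single-linkage graph at scale $t_i$, the induced partitions coincide, and $\theta_\alpha(t_i)=\theta_{SL}(t_i)$.

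I would finish by noting the base case $t_0=0$ is trivial (all blocks are singletons) and that the induction terminates at $t_m$ with $\theta_\alpha(t_m)=\theta_{SL}(t_m)=U$, so the two dendrograms agree on all of $[0,\infty)$; applying Proposition \ref{Prop: faithful SL} then yields $SL(\alpha)_\mathcal{U}(U,d)=\mathfrak{T}^{SL}_\mathcal{U}(U,d)=(U,d)$. The only genuinely delicate point is verifying that the dimension inequality in $ii)$ cannot fail: one needs that the simplex spanning $B\cup B'$ is actually present in $F_{t_i}(B\cup B')$, which is exactly where the ultrametric collapse of all cross-distances to $d(x_0,y_0)\le t_i$ is used, together with the fact that within each block all distances are $\le t_{i-1}\le t_i$. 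Everything else is bookkeeping already carried out for $\mathfrak{T}^{CL}$ and $\mathfrak{T}^{AL}$.
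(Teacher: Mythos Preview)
Your argument is correct and follows exactly the template the paper uses for $\mathfrak{T}^{CL}$ and $\mathfrak{T}^{AL}$: reduce to $\theta_\alpha=\theta_{SL}$ on an ultrametric by induction on the levels $t_i$, and at the inductive step use the ultrametric collapse of all cross-distances to $d(x_0,y_0)\le t_i$ so that $B\cup B'$ spans a full simplex of $F_{t_i}$. The paper does not supply its own proof here (it cites \cite{M-P2}), but the mechanism you describe is precisely the one that appears later in the paper in Lemma~\ref{Lema: normal-alpha}, where the same dimension count $\dim F_{t_{j-1}}(B_a\cup B_{a'})=\sharp(B_a\cup B_{a'})-1\ge\min\{\dim F_{t_{j-1}}(B_a),\dim F_{t_{j-1}}(B_{a'})\}$ is used to force the merge.

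One small correction: your phrase ``since $\alpha>0$ this forces $\alpha\cdot\dim(\Delta)\ge\min\{\dim F_{t_i}(B),\dim F_{t_i}(B')\}$'' is not quite right as stated. The inequality $\dim(\Delta)\ge\min\{\dim F_{t_i}(B),\dim F_{t_i}(B')\}$ only yields $\alpha\cdot\dim(\Delta)\ge\min\{\ldots\}$ when $\alpha\ge 1$; for $0<\alpha<1$ you would need a separate check. The paper always works under the hypothesis $\alpha\ge 1$ (see e.g.\ Lemma~\ref{Lema: normal-alpha} and Proposition~\ref{Prop: bridge}), so you should state that assumption explicitly and then the step is immediate.
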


\begin{definicion} A  $HC$ method $\mathfrak{T}$ is \textbf{semi-stable in the Gromov-Hausdorff sense}  if for any sequence of finite metric spaces $((X_k,d_k))_{k\in \bn}$ in $(\mathcal{M},d_{GH})$ such that $\lim_{k\to \infty}(X_k,d_k)=(U,d)\in \mathcal{U}$ then $\lim_{k\to \infty}\mathfrak{T}_\mathcal{U}(X_k,d_k)=  \mathfrak{T}_{\mathcal{U}}(U,d)$.
\end{definicion}

Therefore, a faithful $HC$ method $\mathfrak{T}$ is semi-stable in the Gromov-Hausdorff sense if for any sequence of finite metric spaces $((X_k,d_k))_{k\in \bn}$ in $(\mathcal{M},d_{GH})$ such that $\lim_{k\to \infty}(X_k,d_k)=(U,d)\in \mathcal{U}$ then $\lim_{k\to \infty}\mathfrak{T}_\mathcal{U}(X_k,d_k)=  (U,d)$.

\begin{prop} $\mathfrak{T}^{SL}$ is semi-stable in the Gromov-Hausdorff sense.
\end{prop}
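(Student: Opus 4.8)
The plan is to deduce semi-stability of $\mathfrak{T}^{SL}$ directly from the two facts already available in the excerpt: Proposition \ref{Prop: faithful SL}, which says that $\mathfrak{T}^{SL}$ is faithful, and Proposition \ref{Stable SL}, the inequality $d_{\mathcal{GH}}(X,Y)\geq d_{\mathcal{GH}}(\mathfrak{T}^{SL}_\mathcal{U}(X,d_X),\mathfrak{T}^{SL}_\mathcal{U}(Y,d_Y))$ from Carlsson–M\'emoli. The combination of these is exactly what is needed: the first identifies the limit of the images with the image of the limit, and the second provides the continuity estimate.

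The key steps, in order, are as follows. First I would take a sequence $((X_k,d_k))_{k\in\bn}$ in $(\mathcal{M},d_{GH})$ with $\lim_{k\to\infty}(X_k,d_k)=(U,d)\in\mathcal{U}$, so that $d_{\mathcal{GH}}(X_k,U)\to 0$. Second, I would apply Proposition \ref{Stable SL} with $X=(X_k,d_k)$ and $Y=(U,d)$ to obtain
\[
d_{\mathcal{GH}}\bigl(\mathfrak{T}^{SL}_\mathcal{U}(X_k,d_k),\,\mathfrak{T}^{SL}_\mathcal{U}(U,d)\bigr)\leq d_{\mathcal{GH}}(X_k,U).
\]
Third, since $(U,d)$ is an ultrametric space and $\mathfrak{T}^{SL}$ is faithful by Proposition \ref{Prop: faithful SL}, we have $\mathfrak{T}^{SL}_\mathcal{U}(U,d)=(U,d)$, so the left-hand side is $d_{\mathcal{GH}}\bigl(\mathfrak{T}^{SL}_\mathcal{U}(X_k,d_k),(U,d)\bigr)$. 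Fourth, letting $k\to\infty$ and using $d_{\mathcal{GH}}(X_k,U)\to 0$, the squeeze gives $\lim_{k\to\infty}\mathfrak{T}^{SL}_\mathcal{U}(X_k,d_k)=(U,d)=\mathfrak{T}^{SL}_\mathcal{U}(U,d)$, which is precisely the definition of semi-stability in the Gromov-Hausdorff sense.

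Honestly, there is no serious obstacle here: the statement is essentially a corollary packaging two prior results, and the only thing to be careful about is bookkeeping — making sure the convergence in $(\mathcal{M},d_{GH})$ is unambiguously the statement $d_{\mathcal{GH}}(X_k,U)\to 0$, and that the output ultrametric spaces live in $(\mathcal{U},d_{GH})$ so that convergence there is again measured by $d_{GH}$. One might remark that this also reproves, for the special case of the constant sequence or more generally, that $\mathfrak{T}^{SL}$ satisfies the implicit stability notion attributed to Carlsson–M\'emoli; but for the stated proposition the four-step argument above suffices, and I would expect the author's proof to be no longer than a couple of sentences invoking Propositions \ref{Stable SL} and \ref{Prop: faithful SL}.
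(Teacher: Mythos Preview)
Your proposal is correct and matches the paper's approach: the paper's proof is a single sentence stating that this is an immediate consequence of Proposition~\ref{Stable SL}. Your explicit use of faithfulness (Proposition~\ref{Prop: faithful SL}) is harmless but not strictly necessary, since the definition of semi-stability already has $\mathfrak{T}_{\mathcal{U}}(U,d)$ as the target of convergence, and Proposition~\ref{Stable SL} applied with $Y=(U,d)$ gives this directly.
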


\begin{proof} This is an immediate consequence of  \cite[Proposition 26]{CM} (see \ref{Stable SL}).
\end{proof}

\begin{obs} There exists a sequence of metric spaces $((X_k,d_k))_{k\in \bn}$ in $(\mathcal{M},d_{GH})$ and a sequence of ultrametric spaces $(U_k,u_k)$ with 
$\lim_{k\to \infty}d_{GH}((X_k,d_k),(U_k,u_k))=0$ while $\lim_{k\to \infty}d_{GH}(\mathfrak{T}_{\mathcal{U}}^{CL}(X_k,d_k),(U_k,u_k))>0$. See example below.
\end{obs}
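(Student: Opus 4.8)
The plan is to construct an explicit family of examples exhibiting the claimed instability of complete linkage. The intuition is that complete linkage is sensitive to the \emph{diameter} of a block, not just to the minimal gaps between points, so a small perturbation that slightly reorganizes a cluster can, at the merging scale, drastically change which blocks get joined and when. In particular, I would look for a configuration where several points are spread out in a block whose complete-linkage distance to a far point is large, but where a tiny perturbation collapses that block into something much tighter, changing the merge order.

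Concretely, first I would fix a base ultrametric $(U_k, u_k)$ — for instance a space built from a balanced or unbalanced rooted tree — chosen so that the complete-linkage dendrogram of a nearby (non-ultrametric) space $(X_k, d_k)$ differs from $u_k$ by a definite amount. A clean template: take points arranged so that within one branch the pairwise distances in $d_k$ are all roughly $1$ but are \emph{not} ultrametric (say a long near-geodesic chain $0, 1/N, 2/N, \dots, 1$ rescaled appropriately), while a second branch sits at distance slightly more than $1$. Single linkage would merge the chain first and then attach the second branch; but complete linkage sees the chain's diameter as $\approx 1$ and the cross-distance as $\approx 1$ too, so the merge structure — and crucially the merge \emph{heights} — come out different from the ultrametric $u_k$ one would assign (which has the chain at some small ultrametric radius). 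Next I would let $k\to\infty$ with the chain mesh $1/N \to 0$, so that $d_{GH}((X_k,d_k),(U_k,u_k)) \to 0$, while the complete-linkage ultrametric keeps a block at height bounded away from the height it has in $u_k$; this forces $d_{GH}(\mathfrak{T}^{CL}_{\mathcal{U}}(X_k,d_k),(U_k,u_k))$ to stay above a fixed positive constant.

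The key steps, in order, are: (1) write down the finite metric space $(X_k,d_k)$ and the ultrametric $(U_k,u_k)$ explicitly, with a parameter governing the perturbation size; (2) compute $\mathfrak{T}^{CL}_{\mathcal{U}}(X_k,d_k)$ by running the recursive algorithm — this is a finite and elementary computation, tracking the sequence $R_i$ and the partitions $\Theta_i$; (3) bound $d_{GH}((X_k,d_k),(U_k,u_k))$ above, using the identity correspondence (same underlying point set), so it reduces to $\tfrac12\sup |d_k(x,x') - u_k(x,x')|$, which goes to $0$ by construction; (4) bound $d_{GH}(\mathfrak{T}^{CL}_{\mathcal{U}}(X_k,d_k),(U_k,u_k))$ below by exhibiting, for an arbitrary correspondence, a pair $(x,y),(x',y')$ on which $\Gamma$ is large — typically by pointing to two points whose ultrametric distance in the complete-linkage output is forced to differ from their distance in $u_k$ by a fixed amount, regardless of how the correspondence pairs things up.

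The main obstacle I anticipate is step (4): a lower bound on the Gromov-Hausdorff distance must defeat \emph{all} correspondences, including those that do not respect the obvious point identification, so I cannot just compare $u_X$ and $u_k$ pointwise. The standard device is to find a small sub-configuration (say three or four points) whose ``distance pattern'' in $\mathfrak{T}^{CL}_{\mathcal{U}}(X_k,d_k)$ is incompatible, up to the target $\varepsilon$, with the distance pattern of \emph{any} comparably-sized sub-configuration of $(U_k,u_k)$ — for example a triple that is an isoceles ultrametric triangle with the ``wrong'' leg length. Making the combinatorics of this incompatibility robust, and checking it survives in the limit, is where the real care is needed; the rest is routine verification of the recursive complete-linkage computation and an easy upper bound via the identity correspondence.
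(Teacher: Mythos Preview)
Your overall plan (explicit family, run the CL recursion, upper-bound $d_{GH}((X_k,d_k),(U_k,u_k))$ via the identity correspondence, lower-bound the output distance) matches the paper's approach, and your worry about step~(4) is well placed. But the specific construction you sketch does not work. A near-geodesic chain $\{0,1/N,2/N,\dots,1\}$ is \emph{not} Gromov--Hausdorff close to any ultrametric: in any ultrametric the two endpoints must sit at distance close to $1$, so the chain cannot be placed ``at some small ultrametric radius'' as you propose; the identity-correspondence bound you want in step~(3) would give a distortion of order $1$, not $o(1)$. The hypothesis $d_{GH}((X_k,d_k),(U_k,u_k))\to 0$ forces $(X_k,d_k)$ to be genuinely $o(1)$-close to ultrametric pointwise, which rules out interval-like blocks.

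What you are missing is the \emph{accumulation mechanism} the paper uses. The paper builds two ``sides'' $\{a_i\}$ and $\{b_i\}$ whose mutual distances differ from an explicit ultrametric $u_k$ by at most $\delta_k=1/(k+1)$. At each scale $1+m\varepsilon_k$ (with $\varepsilon_k=\delta_k/2$), a fresh point $a_m$ has CL-distance exactly $1+m\varepsilon_k$ to the growing $a$-block, while the CL-distance between the $a$-block and the $b$-block is $1+m\varepsilon_k+\delta_k$; so the new point is absorbed first and the cross-merge is postponed. After $k$ such postponements the $a$- and $b$-blocks finally merge at height $1+k\varepsilon_k+\delta_k>3/2$, whereas $u_k(\tilde a_{-2},\tilde b_{-2})=1$. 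The point is that each individual distance is within $\delta_k$ of ultrametric, but CL makes $k$ binary decisions, each tipped the ``wrong'' way by the $\delta_k$ bias, and the cumulative drift is $k\varepsilon_k\approx 1/2$. Your chain has no such mechanism: it tries to get the discrepancy from a single large non-ultrametric block rather than from many tiny tie-breaks. Incidentally, the paper handles your step~(4) rather informally---it simply exhibits the pair $(a_{-2},b_{-2})$ with CL-distance $>3/2$ versus $u_k$-distance $1$---so your instinct that this step deserves more care is reasonable, though it is not where the real difficulty lies.
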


\begin{ejp} Let $X_k:=\{a_{i}\}_{-2\leq i \leq k}\cup \{b_{i}\}_{-2\leq i \leq k}$ and $d_k$ such that:

$d_k(a_{-2},b_{-2})=1$. 

$d_k(a_{-1},b_{-1})=1+\delta_k$ with $\delta_k =\frac{1}{k+1}$.

$d_k(a_{-2},a_{-1})=\frac{1}{2}$, $d_k(b_{-2},b_{-1})=\frac{1}{2}$.

$d_k(a_{0},a_{i})=1$, for $i=-2,-1$, $d_k(b_{0},b_{i})=1$, for $i=-2,-1$, $d_k(a_{0},b_{j})=1+\delta_k$ for $j<0$ and $d_k(a_{j},b_{0})=1+\delta_k$ for $j<0$.

Let $\varepsilon_k=\frac{\delta_k}{2}$. 

$d_k(a_{m},a_{i})=1+m\varepsilon_k$, for every $1\leq m\leq k$, $i<m$, $d_k(b_{m},b_{i})=1+m\varepsilon_k$, for every $m\geq 1$, $i<m$,  $d_k(a_{m},b_{i})=1+m\varepsilon_k+\delta_k $, for every $1\leq m\leq k$, $i\leq m$, and $d_k(b_{m},a_{i})=1+m\varepsilon_k + \delta_k$ for every $1\leq m\leq k$, $i\leq m$.

\begin{figure}[ht]
\centering
\includegraphics[scale=0.5]{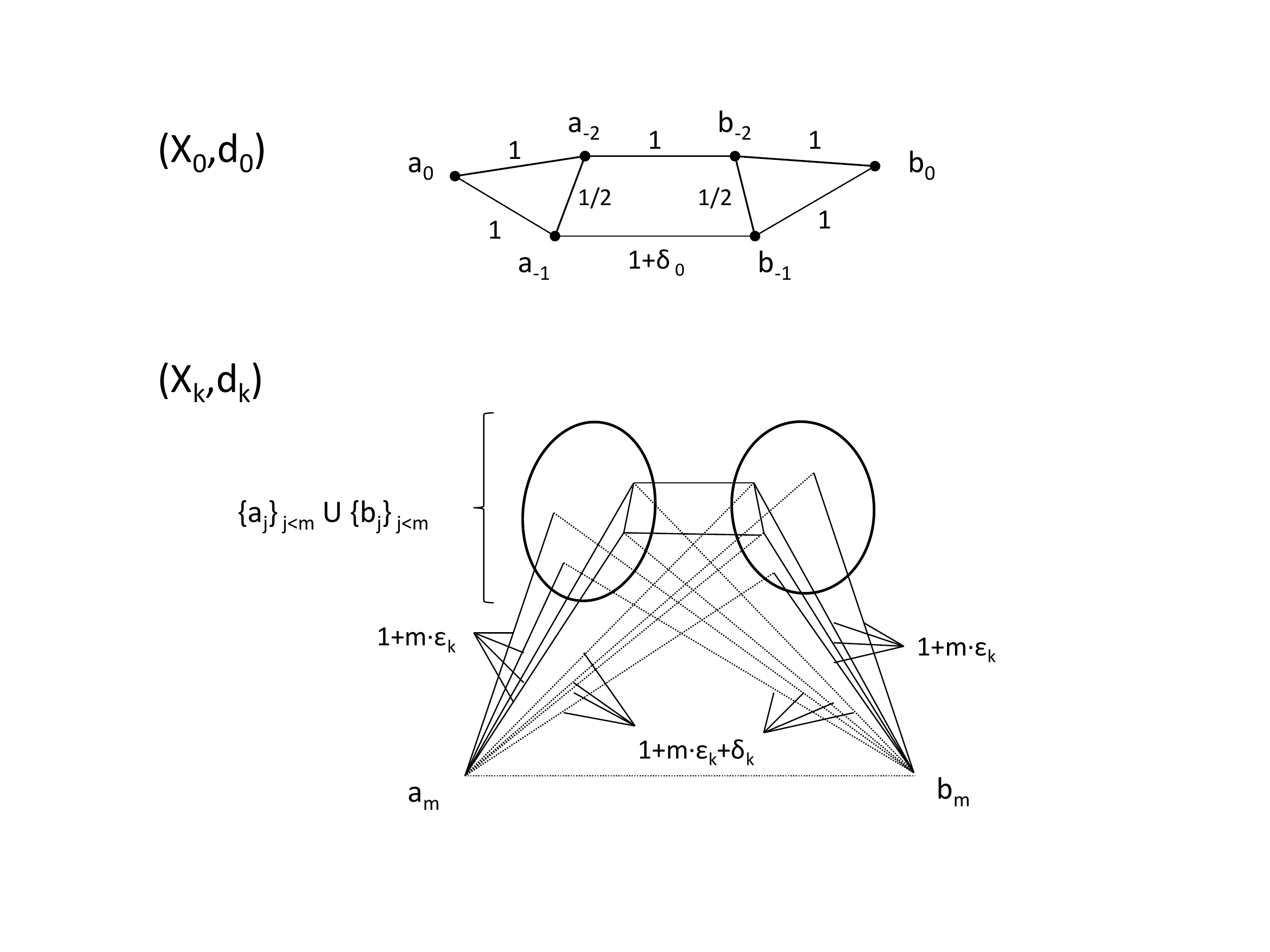}
\caption{$(X_0,d_0)$ is defined by the graph above. Below, the inductive construction of $(X_k,d_k)$ adding the points $a_m,b_m$ to the subset $\{a_{i}\}_{-2\leq i \leq m-1}\cup \{b_{i}\}_{-2\leq i \leq m-1}\subset (X_k,d_k)$ for every $1\leq m\leq k$.}
\label{Fig: Ejp_CL_1}
\end{figure}

Let $U_k:=\{\tilde{a}_{i}\}_{-2\leq i \leq k}\cup \{\tilde{b}_{i}\}_{-2\leq i \leq k}$ and $u_k\co U\times U\to \br$ as follows:

$u_k(\tilde{a}_{j},\tilde{b}_{j})=1$ for $j=-2,-1,0$.

$u_k(\tilde{a}_{-2},\tilde{a}_{-1})=\frac{1}{2}$, $u_k(\tilde{b}_{-2},\tilde{b}_{-1})=\frac{1}{2}$.

$u_k(\tilde{a}_{0},\tilde{a}_{i})=1$, for $i=-2,-1$, $u_k(\tilde{b}_{0},\tilde{b}_{i})=1$, for $i=-2,-1$

$u_k(\tilde{a}_{n},\tilde{a}_{i})=1+n\varepsilon_k$, for every $n\geq 1$, $i<n$, $u_k(\tilde{b}_{n},\tilde{b}_{i})=1+n\varepsilon_k$, for every $n\geq 1$, $i<n$,  $u_k(\tilde{a}_{n},\tilde{b}_{i})=1+n\varepsilon_k$, for every $n \geq 1$, $i\leq n$, and $u_k(\tilde{b}_{n},\tilde{a}_{i})=1+n\varepsilon_k $ for every $n\geq 1$, $i\leq n$.

\begin{figure}[ht]
\centering
\includegraphics[scale=0.5]{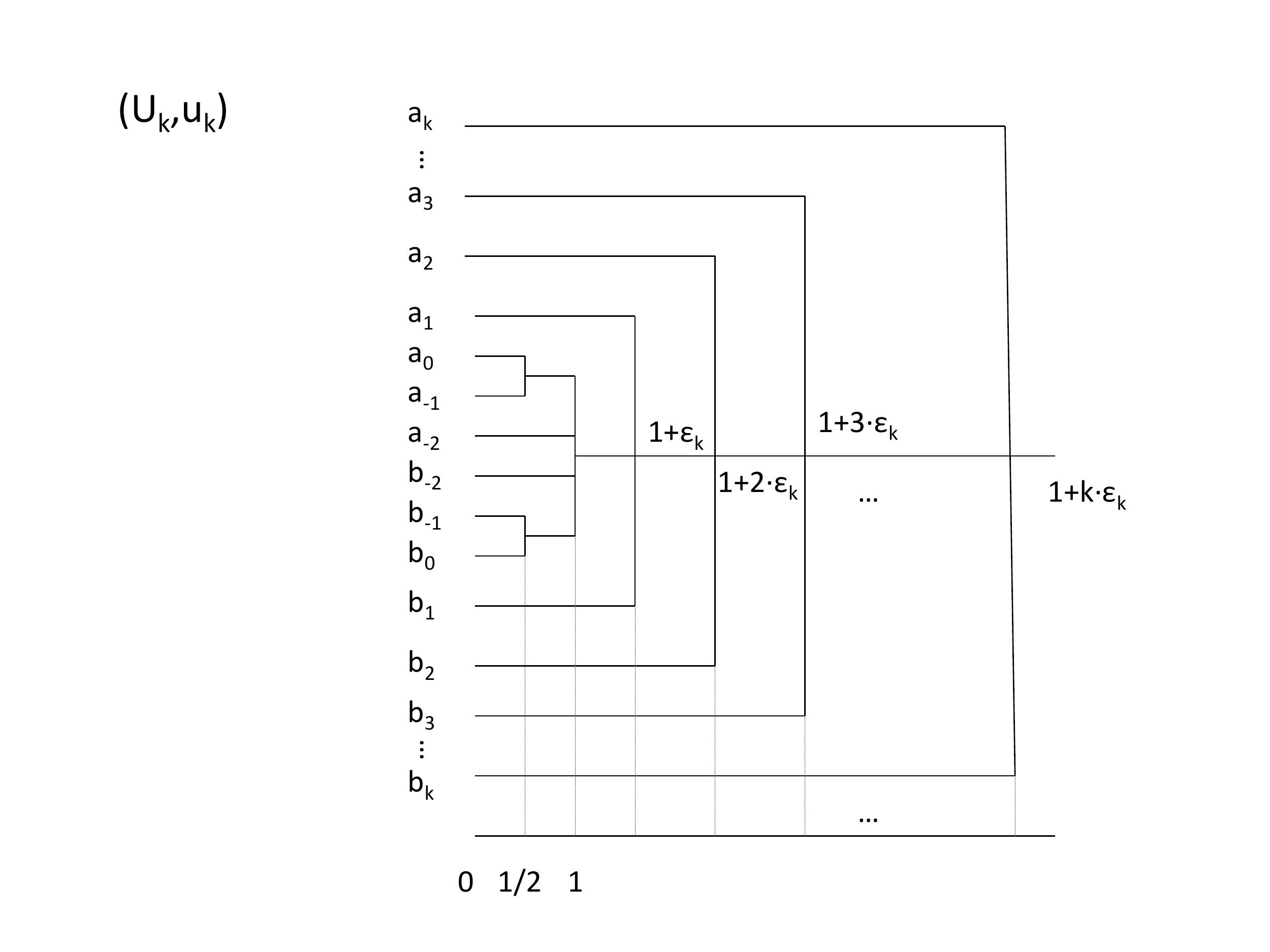}
\caption{The ultrametric space $(U_k,u_k)$ corresponds to the dendrogram represented in the figure.}
\label{Fig: Ejp_CL_2}
\end{figure}

Let us see that $d_{GH}((X_k,d_k),(U_k,u_k))\leq \delta_k=\frac{1}{k+1}$. It suffices to consider the correspondence $\tau=\{(a_i,\tilde{a}_i)\, | \, i\geq -2\}\cup\{(b_j,\tilde{b}_j)\, | \, j\geq -2\}$. Then, for every $(x,y),(x',y')\in \tau$, $|d_k(x,x')-u_k(y,y')|\leq \delta_k$.

Now, let us see that $u^k_{CL}(a_{-2},b_{-2})=1+(k)\varepsilon+\delta_k$ for every $k\geq 1$. For $t<\frac{1}{2}$, $\theta_{CL}(t)=\{a_{-2},...,a_k,b_{-2},...,b_k\}$. For $\frac{1}{2}\leq t<1$, it is trivial to check that $a_{-2}$ is merged with $a_{-1}$ and $b_{-2}$ is merged with $b_{-1}$. Therefore, $\theta_{CL}(t)=\{\{a_{-2},a_{-1}\},a_0,...,a_k,\{b_{-2},b_{-1}\},b_{0},...,b_k\}$. 

For $t=1$, notice that $\ell^{CL}(a_0,\{a_{-2},a_{-1}\})=1=\ell^{CL}(b_0,\{b_{-2},b_{-1}\})$ while $\ell^{CL}(\{a_{-2},a_{-1}\},\{b_{-2},b_{-1}\})=1+\delta_k>1$. The rest of maximal distances between blocks are also greater than 1. Thus, for every $1\leq t < 1+\varepsilon_k$, $\theta_{CL}(t)=\{\{a_{-2},a_{-1},a_0\},a_1,...,a_k,\{b_{-2},b_{-1},b_0\},b_{1},...,b_k\}$.

Suppose that for some $1\leq m <k$ and for every $1+(m-1)\varepsilon_k \leq t <1+m\varepsilon_k$ we have that $\theta_{CL}(t)=\{\{a_{-2},...,a_{m-1}\},a_m,...,a_k,\{b_{-2},...,b_{m-1}\},b_{m},...,b_k\}$. Again, $\ell^{CL}(a_m,\{a_{-2},...,a_{m-1}\}\})=1+m\varepsilon_k$, $\ell^{CL}(b_m,\{b_{-2},...,b_{m-1}\})=1+m\varepsilon_k$ and $\ell^{CL}(\{a_{-2},...,a_{m-1}\},\{b_{-2},...,b_{m-1}\})=1+m\varepsilon_k+\delta_k$. The rest of maximal distances between blocks are also greater than $1+m\varepsilon_k$. Therefore, for every $1+m\varepsilon_k \leq t <1+(m+1)\varepsilon_k$ we have that  $\theta_{CL}(t)=\{\{a_{-2},...,a_{m}\},a_{m+1},...,a_k,\{b_{-2},...,b_{m}\},b_{m+1},...,b_k\}$.

Thus, $u^k_{CL}(a_{-2},b_{-2})=1+k\varepsilon+\delta_k=1+k\frac{1}{2(k+1)}+\frac{1}{k+1}>\frac{3}{2}$ while $u_k(\tilde{a}_{-2},\tilde{b}_{-2})=1$.
\end{ejp}

\textbf{Notation}: Given two metrics $d,d'$ defined on a set $X$, let us denote $d\leq d'$ if $d(x,x')\leq d'(x,x')$ $\forall \, x,x'\in X$.

Given a $HC$ method $\mathfrak{T}=(\ell,\emptyset)$, let us recall the notations: $\mathfrak{T}_{\mathcal{D}}(X,d)=\theta_X$ and $\mathfrak{T}_{\mathcal{U}}(X,d)=(X,u_X)$.

\begin{definicion} A $HC$ method $\mathfrak{T}=\mathfrak{T}(\ell,\emptyset)$ is \textbf{normal} if for every metric space, $X$, and  $\forall \, B_1,B_2\in \theta_X(t)$, then $\ell(B_1,B_2)\leq \ell^{CL}(B_1,B_2)$ and $u_X\geq u_{SL}$. 
\end{definicion}

\begin{lema}\label{Lemma: t-connected} $u_X\geq u_{SL}$ if and only if $\forall$ $B\in \theta_X(t)$, $B$ is $t$-connected.
\end{lema}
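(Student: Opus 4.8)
The plan is to prove both implications by connecting the definition of $u_X$ via the dendrogram with the single-linkage ultrametric $u_{SL}$, and to recall that $u_{SL}(x,y) = \min\{t \mid x,y \text{ are } t\text{-connected}\}$.

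First I would establish the forward implication. Assume $u_X \geq u_{SL}$, and suppose $B \in \theta_X(t)$. Pick any $x,y \in B$; since they belong to the same block of $\theta_X(t)$, we have $u_X(x,y) \leq t$ by definition of $\eta$, hence $u_{SL}(x,y) \leq u_X(x,y) \leq t$. By the characterization of $u_{SL}$, this means $x$ and $y$ are $t$-connected, and since this holds for every pair in $B$, the set $B$ is $t$-connected. The only subtlety here is the interplay between the weak inequality $u_X(x,y) \leq t$ and the possibly strict inequality defining $t$-connectedness (the excerpt sometimes writes $|x_i - x_{i+1}| < \varepsilon$, sometimes $\leq$); I would simply use the $\leq$ convention consistent with the definition of $\varepsilon$-chain given for $u_{SL}$, noting that over a finite metric space $u_{SL}(x,y) \leq t$ forces the existence of a $t$-chain.

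For the converse, assume that for every $t$ and every $B \in \theta_X(t)$, $B$ is $t$-connected. I need $u_X(x,y) \geq u_{SL}(x,y)$ for all $x,y$. Fix $x,y$ and let $t = u_X(x,y)$. Then $x$ and $y$ lie in the same block $B$ of $\theta_X(t)$, and by hypothesis $B$ is $t$-connected, so there is a $t$-chain from $x$ to $y$; hence $u_{SL}(x,y) \leq t = u_X(x,y)$. This gives $u_X \geq u_{SL}$.

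The main obstacle, such as it is, is purely bookkeeping: making sure that "$x,y$ belong to the same block of $\theta_X(t)$" is correctly identified with "$u_X(x,y) \leq t$" (immediate from the definition of $\eta$), and being careful with strict versus non-strict inequalities in the chain condition. There is no deep content; the lemma is essentially unwinding two definitions, so I would keep the proof to a few lines and not belabor the finite-metric-space technicalities.
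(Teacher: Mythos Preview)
Your proof is correct and follows essentially the same route as the paper: both directions hinge on the equivalence ``$u_X(x,y)\le t$ iff $x,y$ lie in a common block of $\theta_X(t)$'' together with ``$u_{SL}(x,y)\le t$ iff $x,y$ are $t$-connected''. The paper's argument is terser but identical in substance, so there is nothing to add.
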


\begin{proof} If $u_X\geq u_{SL}$, it is immediate to check that $\forall \, x,y\in  X$, if $u_X(x,y)\leq t$, $u_{SL}(x,y)\leq t$ and $x,y$ are in the same t-component.

Let $x,y\in X$ such that $u_X(x,y)=t$. Then, there exist some $B\in \theta_X(t)$ such that $x,y\in B$. If  $\forall$ $B\in \theta_X(t)$, $B$ is $t$-connected, then $B$ is contained in some block of $\theta_{SL}(t)$. Therefore, $u_{SL}(x,y)\leq t$.  
\end{proof}

\begin{lema}\label{Lemma: max} Let $\mathfrak{T}=\mathfrak{T}(\ell,\emptyset)$ be such that for every metric space, $X$, and any $t>0$, $\forall \, B_1,B_2\in \theta_X(t)$, $\ell(B_1,B_2)\leq \ell^{CL}(B_1,B_2)$. Then, $\forall \, t>0$ and every $B_1,B_2\in \theta_X(t)$, $\max\{d(x,y)\, | \, x\in B_1, \ y\in B_2\}>t$.
\end{lema}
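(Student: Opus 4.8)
The plan is to argue by contradiction: suppose there exist $t>0$ and blocks $B_1,B_2\in\theta_X(t)$ with $\max\{d(x,y)\mid x\in B_1,\ y\in B_2\}\le t$. First I would unwind the definition of the dendrogram produced by $\mathfrak{T}=\mathfrak{T}(\ell,\emptyset)$: there is a step $i$ in the recursive construction with $\theta_X(t)=\Theta_i$, and since $B_1,B_2$ are distinct blocks of $\Theta_i$ that have not been merged, we must have $R_i\le t$ (indeed $R_{i(t)}\le t$ by the definition of $i(r)$, and the blocks $B_1,B_2$ persist at level $t$). The key point is that at the stage where the graph $G^{\ell}_{R_i}$ was built, $B_1$ and $B_2$ landed in different connected components, so in particular there is no edge between them, i.e. $\ell(B_1,B_2)>R_i$.

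Next I would invoke the hypothesis: since $\mathfrak{T}$ satisfies $\ell(B_1,B_2)\le\ell^{CL}(B_1,B_2)$ for all blocks at every level, and by assumption
\[
\ell^{CL}(B_1,B_2)=\max\{d(x,y)\mid x\in B_1,\ y\in B_2\}\le t,
\]
we get $\ell(B_1,B_2)\le t$. Combined with $\ell(B_1,B_2)>R_i$ this forces $R_i<t$, which is fine so far — the subtlety is that $B_1,B_2$ might be merged at a \emph{later} step still before reaching level $t$. So the real content is to run the induction forward from step $i$: at each subsequent step $j\ge i$ with $R_j\le t$, the relevant blocks (the ones containing $B_1$ and $B_2$, or $B_1,B_2$ themselves if they are still unmerged) are separated in $G^{\ell}_{R_j}$, hence $\ell$ of that pair exceeds $R_j$; but that pair of blocks still has $\ell^{CL}\le t$ only if it equals $\{B_1,B_2\}$ — once either block grows, $\ell^{CL}$ of the enlarged pair may exceed $t$, so this monotonicity has to be handled carefully. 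The cleanest route is: consider the last step $j$ with $R_j\le t$ at which $B_1$ and $B_2$ are still in separate blocks $B_1\subseteq B_1'$, $B_2\subseteq B_2'$ of $\Theta_j$; since they are not merged in $\Theta_{j+1}$ and $R_j\le t$, there is no edge between $B_1'$ and $B_2'$, so $\ell(B_1',B_2')>R_j$; but at step $j+1$ we have $R_{j+1}=\min\ell(\cdot,\cdot)$ over pairs in $\Theta_{j+1}$... and here one uses that $B_1,B_2$ are already both contained in $\theta_X(t)=\Theta_{i(t)+1}$-blocks, a contradiction with them being separate at level $t$ unless $R_{j+1}>t$, contradicting the choice of $j$ as the last index with $R_j\le t$ together with $B_1,B_2$ being merged by level $t$.

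Actually the slick version avoids the forward induction entirely: $B_1,B_2\in\theta_X(t)$ means they are distinct blocks of $\Theta_{i(t)}$, and $\Theta_{i(t)}$ was obtained at step $i(t)$ where $R_{i(t)}\le t$ was used; since they did not get merged, $\ell(B_1,B_2)>R_{i(t)}$, but also $R_{i(t)+1}=\min\{\ell(B,B')\mid B,B'\in\Theta_{i(t)}\}\le\ell(B_1,B_2)$, and $B_1,B_2$ being still unmerged at level $t$ forces $R_{i(t)+1}>t$, hence $\ell(B_1,B_2)\ge R_{i(t)+1}>t\ge\ell^{CL}(B_1,B_2)$, contradicting the normality-type hypothesis $\ell(B_1,B_2)\le\ell^{CL}(B_1,B_2)$. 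I expect the main obstacle to be precisely bookkeeping the indices $i(t)$, $i(t)+1$ and the (possibly non-monotone) sequence $(R_i)$ correctly — making sure that "$B_1,B_2$ are distinct blocks at level $t$" translates into the sharp inequality $\ell(B_1,B_2)>t$ rather than merely $\ell(B_1,B_2)>R_{i(t)}$, which is where the definition $i(r)=\max\{i\mid R_i\le r\}$ does the work.
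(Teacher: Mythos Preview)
Your final ``slick version'' is correct and is essentially the paper's own argument: both derive a contradiction from $\ell^{CL}(B_1,B_2)\le t$ by noting this forces $\ell(B_1,B_2)\le t$, which is incompatible with $B_1,B_2$ remaining distinct blocks of $\theta_X(t)$ (the paper phrases this as ``$B_1\sim_{\ell,t}B_2$, contradiction''). The first two paragraphs are unnecessary detours, and there is a minor index slip---with the paper's convention the minimum of $\ell$ over pairs in $\Theta_{i(t)}$ is $R_{i(t)}$, not $R_{i(t)+1}$---but you correctly flag this bookkeeping as the only real subtlety, and the paper's own indexing of $\theta^\ell(r)=\Theta_{i(r)}$ carries the same off-by-one ambiguity.
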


\begin{proof} Let $k_i:=max\{k_r\leq t : k_r\in \Delta(X,u_X)\} $. Then, $B_1,B_2$ are blocks in $\theta_X(t)=\theta_X(k_i)$. 

If $\max\{d(x,y)\, | \, x\in B_1, \ y\in B_2\}\leq t$, then $\ell(B_1,B_2)\leq \ell^{CL}(B_1,B_2)\leq t$. Therefore, $B_1\sim_{\ell,t}B_2$ which contradicts the fact that $B_1,B_2$ are blocks in $\theta_X(t)$.

Hence, $\max\{d(x,y)\, | \, x\in B_1, \ y\in B_2\}>t$. 
\end{proof}

\begin{lema}\label{Lemma: normal} Let $(X,d)$ be a finite metric space with $\Delta(X,d)=\{t_i \, : \, 0\leq i\leq n\}$ and $(U,u)$ be an ultrametric space with $d_{GH}((X,d),(U,u))<\frac{\delta}{2}$. Let $\mathfrak{T}=\mathfrak{T}(\ell,\emptyset)$ be a normal $HC$ method. Given $x,x'\in X$ with $d(x,x')=t_i$, if $u_{X}(x,x')> t_{j-1}\geq t_i$, then $t_j\leq t_{j-1}+2\delta$.
\end{lema}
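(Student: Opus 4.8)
The statement relates the gap structure of $\Delta(X,d)$ to the Gromov–Hausdorff distance from an ultrametric. The plan is to argue by contradiction: suppose $d(x,x')=t_i$, that $u_X(x,x')>t_{j-1}\geq t_i$, but $t_j>t_{j-1}+2\delta$. The idea is to use the large value of $u_X(x,x')$ to locate a block of $\theta_X(t_{j-1})$ that separates $x$ from $x'$, and then exploit the correspondence realizing $d_{GH}((X,d),(U,u))<\frac{\delta}{2}$ together with the ultrametric inequality on $U$ to derive a contradiction with normality.

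First I would fix a correspondence $\tau\in\mathcal{T}(X,U)$ with $\sup_{(p,q),(p',q')\in\tau}|d(p,p')-u(q,q')|<\delta$, and for each point $p\in X$ choose a partner $\bar p\in U$ with $(p,\bar p)\in\tau$. Since $u_X(x,x')>t_{j-1}$, the points $x$ and $x'$ lie in distinct blocks $B_1,B_2\in\theta_X(t_{j-1})$. By normality and Lemma \ref{Lemma: max} applied at $t=t_{j-1}$, we have $\max\{d(p,p')\,|\,p\in B_1,\ p'\in B_2\}>t_{j-1}$; since $t_{j-1}\in\Delta(X,d)$, the next available distance in $\Delta(X,d)$ strictly above $t_{j-1}$ is at least $t_j$, so in fact $\max\{d(p,p')\,|\,p\in B_1,\ p'\in B_2\}\geq t_j$. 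Pick $p_0\in B_1$, $p_0'\in B_2$ realizing this maximum, so $d(p_0,p_0')\geq t_j>t_{j-1}+2\delta$. Transferring through $\tau$ gives $u(\bar p_0,\bar p_0')>d(p_0,p_0')-\delta>t_{j-1}+\delta$.

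Next I would use the hypothesis $d(x,x')=t_i\leq t_{j-1}$: transferring, $u(\bar x,\bar x')<d(x,x')+\delta\leq t_{j-1}+\delta$. Also, each of $B_1$ and $B_2$ is $t_{j-1}$-connected by Lemma \ref{Lemma: t-connected} (normality gives $u_X\geq u_{SL}$), so I can take a $t_{j-1}$-chain inside $B_1$ from $x$ to $p_0$ and inside $B_2$ from $x'$ to $p_0'$; transferring each consecutive pair through $\tau$, the images $\bar x,\dots,\bar p_0$ form a chain in $U$ with consecutive $u$-distances $<t_{j-1}+\delta$, and similarly for $\bar x',\dots,\bar p_0'$. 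By the ultrametric inequality in $U$, $u(\bar x,\bar p_0)<t_{j-1}+\delta$ and $u(\bar x',\bar p_0')<t_{j-1}+\delta$. Combining with $u(\bar x,\bar x')<t_{j-1}+\delta$ and the ultrametric inequality once more, $u(\bar p_0,\bar p_0')\leq\max\{u(\bar p_0,\bar x),u(\bar x,\bar x'),u(\bar x',\bar p_0')\}<t_{j-1}+\delta$, contradicting $u(\bar p_0,\bar p_0')>t_{j-1}+\delta$ from the previous paragraph. Hence $t_j\leq t_{j-1}+2\delta$.

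The main obstacle I anticipate is the bookkeeping in the chain-transfer step: one must check that a $t_{j-1}$-chain in $(X,d)$ maps, under $\tau$, to a genuine chain in $(U,u)$ with the correct (strict) distance bounds, and that the ultrametric inequality can then be iterated along the whole chain to collapse it to a single comparison — keeping track of whether the inequalities are strict or non-strict, and making sure the slack $\delta$ versus $2\delta$ is spent correctly (the factor $2$ appears precisely because $d_{GH}<\frac{\delta}{2}$ contributes a distortion of at most $\delta$, used once to pull $d(p_0,p_0')$ down into $U$ and once more conceptually in the final comparison). A secondary point to verify carefully is the claim that $\max\{d(p,p')\,|\,p\in B_1,p'\in B_2\}\geq t_j$ rather than merely $>t_{j-1}$; this uses that $t_{j-1}$ and $t_j$ are consecutive elements of $\Delta(X,d)$, so that no distance of $(X,d)$ lies strictly between them.
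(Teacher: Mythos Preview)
Your proposal is correct and follows essentially the same approach as the paper: both arguments locate the blocks $B_1,B_2\in\theta_X(t_{j-1})$ containing $x,x'$, use Lemma~\ref{Lemma: max} together with the fact that $t_{j-1},t_j$ are consecutive in $\Delta(X,d)$ to get a pair $p_0\in B_1$, $p_0'\in B_2$ with $d(p_0,p_0')\geq t_j$, build a $t_{j-1}$-chain from $p_0$ to $p_0'$ via $x,x'$ using Lemma~\ref{Lemma: t-connected}, transfer it through the correspondence to a $(t_{j-1}+\delta)$-chain in $U$, and apply the ultrametric inequality. The only cosmetic difference is that the paper argues directly (bounding $d(p_0,p_0')\leq t_{j-1}+2\delta$ and combining with $d(p_0,p_0')\geq t_j$) whereas you phrase it as a contradiction; the paper also briefly passes through the levels $k_{r-1}\leq t_{j-1}<k_r$ of $\Delta(X,u_X)$, but this is inessential bookkeeping and your argument avoids it cleanly.
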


\begin{proof} Let $B_1,B_2\in \theta_{X}(t_{j-1})$ with $x\in B_1$ and $x'\in B_2$. Let $k_{r}\in \Delta(\mathfrak{T}_{\mathcal{U}}(X,d))$ be such that $k_{r-1}\leq t_{j-1}<k_r$. Then, $\theta_{X}(t_{j-1})=\theta_{X}(k_{r-1})$. 

Let $a\in B_1$ and $a'\in B_2$ with $d(a,a')=\max\{d(x,y) \, | \, x\in B_1, \ y\in B_2\}$. Since $B_1,B_2$ are different blocks in $\theta_{X}(t_{j-1})$, by Lemma \ref{Lemma: max}, $d(a,a')>t_{j-1}\geq k_{r-1}$ and $d(a,a')\geq t_j$. 

Let $\tau$ be a correspondence such that $\sup_{(z,u)(z',u')\in \tau}\Gamma_{X,U}(z,u,z',u')<\delta$ and let $(a,b),(a',b'),(x,y),(y,y')\in \tau$. See Figure \ref{Fig: Lema_CL}. By Lemma \ref{Lemma: t-connected}, every block in $\theta_{X}(t_{j-1})$ is $t_{j-1}$-connected and there exist $t_{j-1}$-chains in $(X,d)$ from $a$ to $x$ and from $a'$ to $x'$. Since $d(x,x')=t_i\leq t_{j-1}$, there is a $t_{j-1}$-chain from $a$ to $a'$. 

Therefore, since $\sup_{(z,u)(z',u')\in \tau}\Gamma_{X,U}(z,u,z',u')<\delta$, there is a $(t_{j-1}+\delta)$-chain from $b$ to $b'$. By the properties of the ultrametric, this implies that $u(b,b')\leq t_{j-1}+\delta$ and, therefore, $d(a,a')\leq t_{j-1}+2\delta$.

Since $u_{X}(x,x')\geq k_r$, then $t_{j-1}<k_r\leq \ell(B_1,B_2)\leq d(a,a')<t_{j-1}+2\delta$. In particular, $t_{j}<t_{j-1}+2\delta$.
\end{proof}

\begin{figure}[ht]
\centering
\includegraphics[scale=0.5]{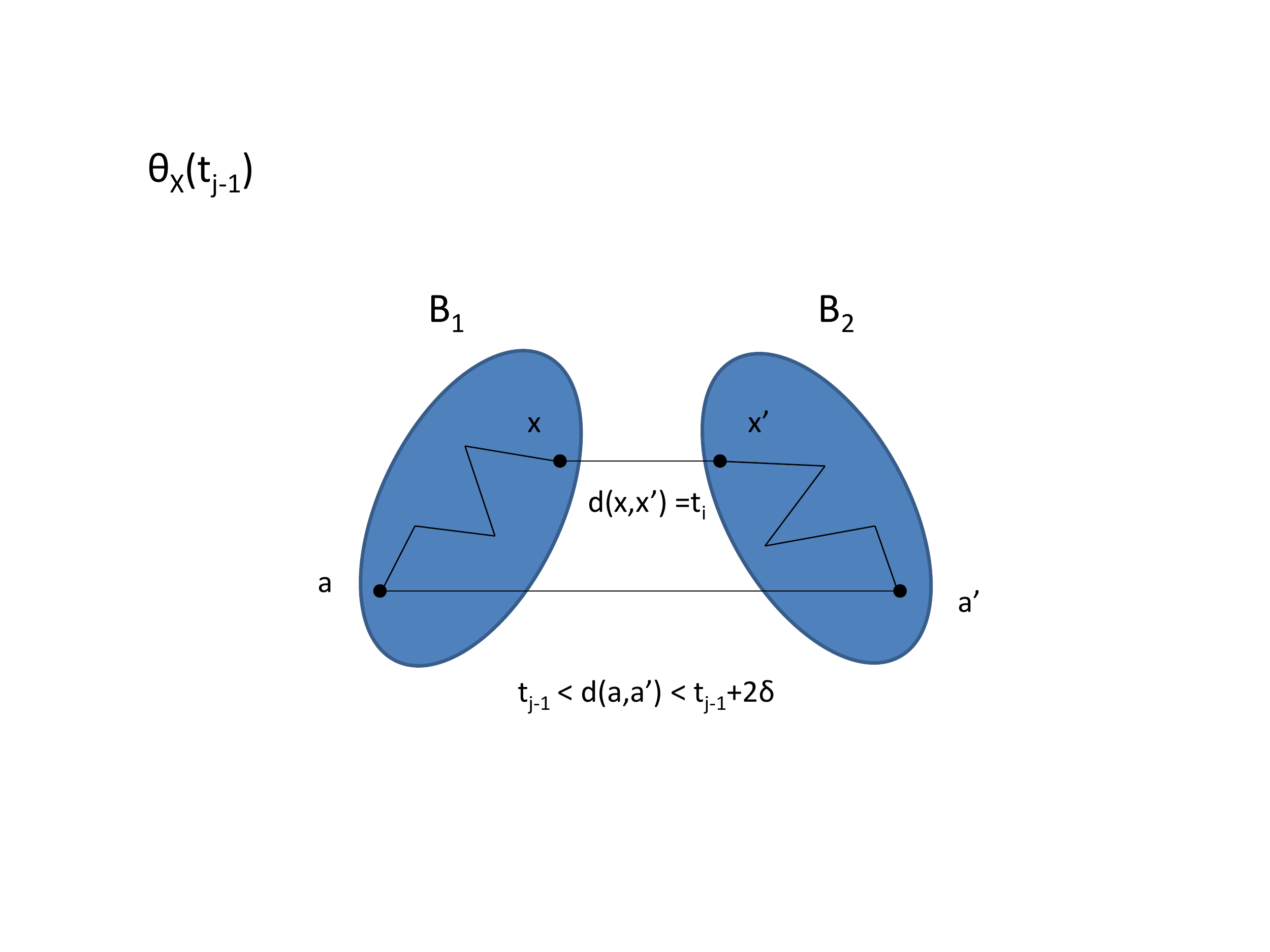}
\caption{$B_1,B_2\in \theta_{X}(t_{j-1})$ with $t_{j-1}\geq t_i=d(x,x')$ and $d(a,a')\geq \ell(B_1,B_2)$.}
\label{Fig: Lema_CL}
\end{figure}

The following proposition is a particular case of Theorem \ref{Th: semistable}. We include this proposition and the proof of this particular case because we feel that it may help to clarify the proof of the theorem.

\begin{prop}\label{Prop: semi-stable} Let $(U,u)$ be an ultrametric space such that $\Delta(U,u)$ is finite (in particular, if $(U,u)$ is a finite ultrametric space). Let  $\mathfrak{T}=(\ell,\emptyset)$ be any normal standard linkage-based $HC$ method. Then, for any sequence of finite metric spaces $((X_k,d_k))_{k\in \bn}$ in $(\mathcal{M},d_{GH})$ such that $\lim_{k\to \infty}(X_k,d_k)=(U,d)\in \mathcal{U}$, $\lim_{k\to \infty}\mathfrak{T}_\mathcal{U}(X_k,d_k)=  (U,d)$.
\end{prop}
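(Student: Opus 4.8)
The plan is to show that for any $\varepsilon>0$, for all $k$ large enough we have $d_{GH}(\mathfrak{T}_{\mathcal{U}}(X_k,d_k),(U,u))<\varepsilon$; equivalently, picking a correspondence $\tau_k$ realizing (up to a small error) $d_{GH}((X_k,d_k),(U,u))\to 0$, I want to estimate $|u_{X_k}(x,x')-u(y,y')|$ for pairs $(x,y),(x',y')\in\tau_k$. Since $\mathfrak{T}$ is normal, $u_{X_k}\geq u_{SL}$, and by faithfulness of $\mathfrak{T}^{SL}$ together with Proposition \ref{Stable SL} one gets $d_{GH}(\mathfrak{T}_{\mathcal{U}}^{SL}(X_k,d_k),(U,u))\leq d_{GH}((X_k,d_k),(U,u))\to 0$; so the ``lower'' side ($u_{X_k}$ not much smaller than $u$) is essentially free. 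The real content is the upper bound: $u_{X_k}(x,x')$ cannot be much \emph{larger} than $u(y,y')$, i.e., $\mathfrak{T}$ does not over-merge compared to the target ultrametric.

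For the upper bound I would run an induction on the (finitely many) scales $t_0<t_1<\dots<t_n$ of $\Delta(U,u)$, or rather on a matching sequence of scales in $\Delta(X_k,d_k)$. Fix $\delta>0$ with $d_{GH}((X_k,d_k),(U,u))<\delta/2$ and a correspondence $\tau_k$ with $\sup_{\tau_k}\Gamma_{X_k,U}<\delta$. The key tool is Lemma \ref{Lemma: normal}: whenever $x,x'\in X_k$ have $d_k(x,x')=t_i$ but are still unmerged at level $t_{j-1}\geq t_i$, the next scale satisfies $t_j<t_{j-1}+2\delta$, i.e., blocks that ``should'' have merged (their images in $U$ are already together) get merged within an additional $2\delta$. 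Concretely I would argue: if $u(y,y')=s$, then the $\tau_k$-images of the points of the $U$-ball of radius $s$ around $y$ form a subset of $X_k$ of diameter $<s+\delta$, hence (being contained in a set where all pairwise $d_k$-distances are $<s+\delta$, so in particular $\ell$-close because $\ell\leq\ell^{CL}$) they must all lie in the same block of $\theta_{X_k}$ at some level that Lemma \ref{Lemma: normal} forces to be below roughly $s+(2\delta)\cdot(\text{number of intermediate scales})$. Combined with the lower bound $u_{X_k}\geq u_{SL}$ and $u_{SL}(x,x')\geq u(y,y')-\delta$, this pins $u_{X_k}(x,x')$ inside an interval of length $O(\delta)$ around $u(y,y')$.

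The one subtlety to handle carefully is that each application of Lemma \ref{Lemma: normal} costs an additive $2\delta$, and a priori the number of scales of $(X_k,d_k)$ between two consecutive scales of $(U,u)$ could grow with $k$; so I must bound the number of such ``bad'' intermediate scales independently of $k$. This is where the finiteness of $\Delta(U,u)$ and the bound $u_{X_k}\ge u_{SL}$ do their work: a scale $t_{j-1}$ of $X_k$ at which two points with $d_k(x,x')\le t_{j-1}$ remain unmerged is, by Lemma \ref{Lemma: normal}, squeezed into a window of size $2\delta$ just below the relevant value $\ell(B_1,B_2)\le d(a,a')$, and once we cross the corresponding $U$-scale the blocks stabilize against the next genuine jump of $(U,u)$; so only boundedly many (at most $n$, one per genuine scale of $U$) such costly steps occur, and the total error is at most $2n\delta$. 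So I would make $\delta$ small enough that $2n\delta<\varepsilon$, which gives the result for all large $k$.

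\textbf{Main obstacle.} The delicate point, and the part of the argument that needs the most care, is exactly this bookkeeping: showing that the number of scales at which one pays the $2\delta$ penalty of Lemma \ref{Lemma: normal} is bounded by the number of scales of the limit ultrametric $(U,u)$ and not by the (potentially unbounded) number of scales of $(X_k,d_k)$. Everything else — the lower bound via $u_{SL}$ and Proposition \ref{Stable SL}, and the per-step estimate — is either already in hand or a direct application of the lemmas above; the inductive organization of the scale-by-scale comparison is where the proof of the theorem (as opposed to the single-step lemma) actually lives.
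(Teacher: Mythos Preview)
Your overall structure is right: the two-sided estimate via a correspondence, the easy lower bound $u_{X_k}\geq u_{SL}$ (your route via Proposition~\ref{Stable SL} is fine, the paper does it directly by chains), and Lemma~\ref{Lemma: normal} as the engine for the upper bound. You have also correctly located the only nontrivial point: controlling the accumulation of the $2\delta$ penalties across potentially many $X_k$-scales.

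However, your proposed resolution of that obstacle is not right. You claim there are ``at most $n$, one per genuine scale of $U$'' costly steps, and bound the total error by $2n\delta$. This is neither justified nor the mechanism that actually works: the number of $X_k$-scales $t_j$ between $d_k(x,x')=t_i$ and the merging level can be arbitrarily large, and nothing you wrote prevents each one from contributing $2\delta$. The paper's argument is sharper and different in kind. At each intermediate level $t_j$ one takes the pair $(a_j,a'_j)$ realizing $\ell^{CL}(B_1^{t_j},B_2^{t_j})$ and its $\tau$-image $(b_j,b'_j)$ in $U$. The proof of Lemma~\ref{Lemma: normal} gives $d_k(a_{j+1},a'_{j+1})-d_k(a_j,a'_j)\leq 2\delta$, hence $u(b_{j+1},b'_{j+1})-u(b_j,b'_j)\leq 4\delta$. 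Now one chooses $\delta$ so that $4\delta$ is \emph{strictly smaller than the minimum gap} $\min_i(u_i-u_{i-1})$ in $\Delta(U,u)$; this forces $u(b_{j+1},b'_{j+1})=u(b_j,b'_j)$ for \emph{every} $j$. The $u$-values are therefore constant across all intermediate steps, and pulling back through the correspondence gives $d_k(a_{i+m},a'_{i+m})-d_k(a_i,a'_i)\leq 2\delta$ in total, hence $u_{X_k}(x,x')\leq u(y,y')+3\delta$. There is no factor of $n$ and no induction over the $U$-scales; the finiteness of $\Delta(U,u)$ enters only through the existence of a positive minimum gap. Replace your ``bounded number of costly steps'' bookkeeping by this ``$u$-values are forced to be equal'' argument and the proof goes through.
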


\begin{proof}  Let $\Delta(U,u)=\{u_0,...,u_p\}$ and let $0<\varepsilon<u_i-u_{i-1}$ for every $1\leq i \leq p$. Let $\delta<\frac{1}{4}\varepsilon$ and suppose $d_{GH}((X_k,d_k),(U,u))<\frac{\delta}{2}$. Then, there is a correspondence $\tau$ such that for any $(x,y),(x',y')\in \tau$, $|d_k(x,x')-u(y,y')|<\delta$.

Let $\mathfrak{T}_{\mathcal{U}}(X_k,d_k)=(X_k,u_k)$ and $\mathfrak{T}_{\mathcal{D}}(X_k,d_k)=\theta_{k}$.

Claim: $u(y,y')\leq u_{k}(x,x')+\delta < u_{k}(x,x')+\varepsilon$. 

Let $u_{k}(x,x')=k_r\in \Delta(X_k,u_{k})$. Then, there exist $B_0,B_1,...,B_n\in \theta_{k}(k_{r-1})$ with $x\in B_0$, $x'\in B_n$ and $\ell(B_{j-1},B_j)\leq k_r$ for every $1\leq j \leq n$. By Lemma \ref{Lemma: t-connected} each $B_j$ is $(k_{r-1})$-connected. Then, there is a $k_r$-chain in $(X_k,d_k)$ from $x$ to $x'$: $x=x_0,...,x_n=x'$. Let $(x_j,y_j)\in R$ for every $0\leq j \leq n$. Then, since $|d_k(x_{j-1},x_j)-u(y_{j-1},y_j)|<\delta$, $y=y_0,...,y_n=y'$ is a $(k_r+\delta)$-chain in $(U,u)$. Since $(U,u)$ is ultrametric, this implies that $u(y,y')\leq k_r+\delta$.

Claim: $u_{k}(x,x')\leq u(y,y') +\varepsilon$. 

Let $(x,y),(x',y')\in R$. Let $d_k(x,x')=t_i$ and $u(y,y')=t$. We already know that $|t_i-t|<\delta$.

If $u_{k}(x,x')\leq t_i$, then $u_{k}(x,x')\leq d_k(x,x')\leq u(y,y')+\delta$ and we are done. Suppose $u_{k}(x,x')=k_r\in [t_{i+m-1},t_{i+m})$ with $m\geq 1$. By Lemma \ref{Lemma: normal}, for every $1\leq k \leq m$, $t_{i+k}-t_{i+k-1}\leq 2\delta$.

For every $i\leq j < i+m$, consider $B_1^{t_j},B_2^{t_j}\in \theta_{k}(t_j)$ with $x\in B_1^{t_j}$, $x'\in B_2^{t_j}$ and let $a_j\in B_1^{t_{j}}$ and $a'_j\in B_2^{t_{j}}$ with $d_k(a_j,a'_j)=\max\{d_k(x,y) \, | \, x\in B_1^{t_{j}}, \ y\in B_2^{t_{j}}\}$. The situation is the same from Lemma \ref{Lemma: normal}, see Figure \ref{Fig: Lema_CL}. Let $(a_j,b_j),(a'_j,b'_j)\in R$. Hence, for every $i\leq j < i+m$, $d_k(a_{j+1},a'_{j+1})-d_k(a_j,a'_j)\leq t_{j}+2\delta-t_{j}=2\delta$. Therefore, $u(b_{j+1},b'_{j+1})-u(b_j,b'_j)\leq 4\delta$. Since $4\delta<\varepsilon$, it follows that $u(b_{j+1},b'_{j+1})=u(b_j,b'_j)$ for every $i\leq j < i+m$. Then, $d(a_{i+m},a'_{i+m})-d(a_i,a'_i)\leq  2\delta$ and $u_{k}(x,x')\leq t_i+2\delta \leq u(y,y')+3\delta <u(y,y')+ \varepsilon$ proving the claim.
\end{proof}

In the previous proposition it is possible to eliminate the condition on the ultrametric space. The main idea of the proof is the same in spite of some technical difficulties.

\begin{lema}\label{Lema: Ultram} Let $(X,d)$ be a finite metric space, $(U,u)$ be an ultrametric space and let $\varepsilon>0$. If $d_{GH}((X,d),(U,u))<\frac{\varepsilon}{2}$, then $\Delta(U,u)$ has a finite number of distances greater than $\varepsilon$.
\end{lema}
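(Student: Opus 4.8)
The plan is to argue by contradiction, exploiting the finiteness of $X$. Suppose $\Delta(U,u)$ contained infinitely many distinct values $s_1 > s_2 > s_3 > \cdots$, all of them strictly greater than $\varepsilon$. For each $n$ pick a pair of points $u_n, u_n' \in U$ with $u(u_n, u_n') = s_n$. Since $d_{GH}((X,d),(U,u)) < \tfrac{\varepsilon}{2}$, fix a correspondence $\tau \in \mathcal{T}(X,U)$ with $\sup_{(x,y),(x',y') \in \tau} \Gamma_{X,U}(x,y,x',y') < \varepsilon$; choose, for each $n$, points $x_n, x_n' \in X$ with $(x_n, u_n), (x_n', u_n') \in \tau$.

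The key observation is a quantitative separation coming from the ultrametric inequality: if $u(y,y')$ and $u(z,z')$ are two \emph{distinct} elements of $\Delta(U,u)$, then they cannot be too close once we involve $X$. Concretely, I would first show that the map sending a pair of points of $X$ realizing a given $\Delta(U,u)$-value (via $\tau$) to that value is, up to the error $\varepsilon$, injective on the values exceeding $\varepsilon$. Indeed, for any two pairs $(x_n,x_n')$ and $(x_m,x_m')$ as above we have $|d(x_n,x_n') - s_n| < \varepsilon$ and $|d(x_m,x_m') - s_m| < \varepsilon$. Since $X$ is finite, there are only finitely many possible values of $d(x,x')$ for $x,x' \in X$; hence among the infinitely many $s_n$ there are two, say $s_n$ and $s_m$ with $n \neq m$, whose corresponding $X$-distances coincide: $d(x_n,x_n') = d(x_m,x_m')$. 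But then $|s_n - s_m| \le |s_n - d(x_n,x_n')| + |d(x_m,x_m') - s_m| < 2\varepsilon$. This alone is not yet a contradiction, so I would sharpen the pigeonhole: because $\Delta(X,d)$ is a \emph{finite} set, only finitely many $s_n$ can lie within distance $\varepsilon$ of any fixed element of $\Delta(X,d)$, and every $s_n$ lies within $\varepsilon$ of some element of the finite set $\Delta(X,d)$. Therefore only finitely many values $s_n > \varepsilon$ can occur, contradicting the assumption that there are infinitely many.

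More carefully, the clean way to phrase the final step: the assignment $s \mapsto$ (any element of $\Delta(X,d)$ within $\varepsilon$ of $s$) maps the infinite set $\{s \in \Delta(U,u) : s > \varepsilon\}$ into the finite set $\Delta(X,d)$, and each fiber of this map has diameter less than $2\varepsilon$ in $\Delta(U,u)$; but a bounded-diameter subset of $\Delta(U,u)$ whose values all exceed $\varepsilon$ need not be finite a priori, so I instead observe directly that for each $t \in \Delta(X,d)$ the set $\{s \in \Delta(U,u) : |s-t| < \varepsilon\}$ is realized by pairs of points of $X$ at distance exactly $t$ — and since all such pairs give the \emph{same} $X$-distance $t$, the map $\tau$ forces $|s - s'| < 2\varepsilon$ for any two such $s, s'$; iterating, one still needs the actual finiteness. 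The honest route is: suppose $s_1 > s_2 > \cdots > \varepsilon$ are infinitely many elements of $\Delta(U,u)$; by passing to a subsequence we may assume they converge to some $s_\infty \ge \varepsilon$, so eventually $s_n, s_m$ are within $\varepsilon$ of each other for $n,m$ large, forcing $|d(x_n,x_n') - d(x_m,x_m')| < 3\varepsilon$; but that still allows distinct $X$-distances.

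The genuine argument, and the one I would actually carry out, is cleaner and avoids this circularity: since $X$ is finite, $\Delta(X,d)$ is finite, say $\Delta(X,d) = \{t_1 < \cdots < t_N\}$. For each $s \in \Delta(U,u)$ with $s > \varepsilon$, realized by $(x_s, u_s), (x_s', u_s') \in \tau$ with $u(u_s, u_s') = s$, we have $d(x_s, x_s') \in \Delta(X,d)$ and $|d(x_s, x_s') - s| < \varepsilon$, so $s \in \bigcup_{j=1}^{N} (t_j - \varepsilon, t_j + \varepsilon)$. Thus $\{s \in \Delta(U,u) : s > \varepsilon\} \subseteq \bigcup_{j=1}^N (t_j - \varepsilon, t_j + \varepsilon)$. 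Now I claim each interval $(t_j - \varepsilon, t_j + \varepsilon)$ meets $\Delta(U,u) \cap (\varepsilon, \infty)$ in at most one point: if $s, s' \in \Delta(U,u)$ both lie in $(t_j - \varepsilon, t_j + \varepsilon)$ with $s < s'$, pick $(x_s, u_s),(x_s',u_s'),(x_{s'}, u_{s'}),(x_{s'}',u_{s'}') \in \tau$ realizing them; then $d(x_s,x_s'), d(x_{s'},x_{s'}')$ both lie in $(t_j - 2\varepsilon, t_j + 2\varepsilon)$, hmm this still does not pin them to be equal. The correct bound requires choosing $\varepsilon$ relative to the gaps — but here $\varepsilon$ is given, so the at-most-one-point claim is false in general; what \emph{is} true and sufficient is finiteness: $\Delta(U,u) \cap (\varepsilon,\infty)$ is a subset of the finite union $\bigcup_{j=1}^N (t_j - \varepsilon, t_j + \varepsilon)$ whose points are \emph{images under $\tau$} of the finitely many pairs from $X$, hence there are at most $\binom{|X|}{2}$ of them. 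That is the key step and the real content. So the main obstacle is recognizing that the bound does not come from separating values of $\Delta(U,u)$ but from the fact that each such value is pinned down, via the correspondence $\tau$, by a pair of points of the finite set $X$, and distinct values above $\varepsilon$ must come from distinct pairs — giving at most $\binom{|X|}{2}$ of them, hence finitely many.
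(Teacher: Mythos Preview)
Your argument has a genuine gap: after several false starts you settle on the claim that ``distinct values above $\varepsilon$ must come from distinct pairs'' of points of $X$, but you never prove this, and it is precisely here that the ultrametric hypothesis on $(U,u)$ is needed---a hypothesis you allude to once (``a quantitative separation coming from the ultrametric inequality'') but never actually invoke. Without it the claim is false: take $X=\{0,2\}\subset\br$, $U=\{0\}\cup[2,3]\subset\br$ with the Euclidean metric (not ultrametric), and $\tau=\{(0,0)\}\cup\{(2,t):2\le t\le 3\}$; the distortion is at most $1$, so $d_{GH}(X,U)\le\tfrac12<\tfrac{\varepsilon}{2}$ for $\varepsilon=1.1$, yet every value in $[2,3]$ lies in $\Delta(U,u)\cap(\varepsilon,\infty)$ and all of them are realized via $\tau$ by the single $X$-pair $\{0,2\}$. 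The missing step, valid for ultrametric $U$, is this: if $s,s'>\varepsilon$ are realized via $\tau$ by the same $X$-pair $\{x,x'\}$, say $(x,a),(x',a'),(x,b),(x',b')\in\tau$ with $u(a,a')=s$ and $u(b,b')=s'$, then the distortion bound gives $u(a,b)<\varepsilon$ and $u(a',b')<\varepsilon$, whence the ultrametric inequality forces $s'=u(b,b')\le\max\{u(b,a),u(a,a'),u(a',b')\}=s$ and symmetrically $s\le s'$, so $s=s'$.

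The paper's proof is shorter and more transparent: choose one $y_i\in U$ with $(x_i,y_i)\in\tau$ for each $x_i\in X$; then the balls $B(y_i,\varepsilon)$ cover $U$ (any $y\in U$ has $(x_j,y)\in\tau$ for some $j$, so $u(y,y_j)=|d(x_j,x_j)-u(y,y_j)|^{-}$ gives $u(y,y_j)<\varepsilon$), and in an ultrametric space any distance exceeding $\varepsilon$ between two points equals the distance between the centers $y_i,y_j$ of the $\varepsilon$-balls containing them, so there are at most $\binom{n}{2}$ such distances. Once your injectivity claim is patched as above, the two arguments rest on the same ultrametric fact; the paper simply packages it as a finite ball-cover rather than as injectivity of a map to $X$-pairs.
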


\begin{proof} Suppose $d_{GH}((X_k,d_k),(U,u))<\frac{\varepsilon}{2}$. Then, there is a correspondence $\tau$ such that for any $(x,y),(x',y')\in \tau$, $|d_k(x,x')-u(y,y')|<\varepsilon$. Since $X_k$ is finite, let $X_k=\{x_1,...,x_n\}$ and let $(x_i,y_i)\in \tau$. Then, $\{B(y_i,\varepsilon)\, | \, 1\leq i\leq n\}$ defines a finite covering of $U$. Therefore, by the properties of the ultrametric, $\Delta(U,u)$ has a finite number of distances greater than $\gamma$. 
\end{proof}

\begin{teorema}\label{Th: semistable} Every normal, faithful, standard linkage-based $HC$ method, $\mathfrak{T}=(\ell,\emptyset)$, is semi-stable in the Gromov-Hausdorff sense.
\end{teorema}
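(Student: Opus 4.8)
The plan is to reduce Theorem \ref{Th: semistable} to the situation of Proposition \ref{Prop: semi-stable} by handling the extra difficulty caused by the ultrametric limit possibly having infinitely many distinct distances. Suppose $((X_k,d_k))_{k\in\bn}$ is a sequence in $(\mathcal{M},d_{GH})$ with $\lim_{k\to\infty}(X_k,d_k)=(U,u)\in\mathcal{U}$, and fix $\varepsilon>0$. First I would record that since $\mathfrak{T}$ is faithful, $\mathfrak{T}_\mathcal{U}(U,u)=(U,u)$, so it suffices to show $d_{GH}(\mathfrak{T}_\mathcal{U}(X_k,d_k),(U,u))<\varepsilon$ for all large $k$. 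The point of Lemma \ref{Lema: Ultram} is that, once $d_{GH}((X_k,d_k),(U,u))$ is small, $(U,u)$ has only finitely many distances exceeding any fixed threshold $\gamma$; consequently the distances of $(U,u)$ that are $\geq\gamma$ form a finite set $\{u_{p_0}<u_{p_1}<\cdots<u_{p_q}\}$ with a minimal gap $\Delta_\gamma>0$ between consecutive ones. I would choose $\gamma=\varepsilon/2$ and then pick $\delta<\tfrac14\min\{\varepsilon,\Delta_\gamma\}$, and assume $k$ is large enough that $d_{GH}((X_k,d_k),(U,u))<\delta/2$, fixing a correspondence $\tau$ realizing this.

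The core of the argument then mirrors Proposition \ref{Prop: semi-stable}, run only "above scale $\gamma$". For a pair $(x,y),(x',y')\in\tau$ with $u(y,y')\geq\gamma$, the chain argument gives $u(y,y')\leq u_k(x,x')+\delta$ exactly as before: take the $\ell$-chain of blocks realizing $u_k(x,x')=k_r$, use Lemma \ref{Lemma: t-connected} (which applies because $\mathfrak{T}$ is normal) to replace it by a genuine $k_r$-chain in $(X_k,d_k)$, transport it through $\tau$ to a $(k_r+\delta)$-chain in $(U,u)$, and invoke the ultrametric inequality. For the reverse inequality $u_k(x,x')\leq u(y,y')+\varepsilon$, I would again suppose $u_k(x,x')=k_r\in[t_{i+m-1},t_{i+m})$ with $m\geq 1$ where $d_k(x,x')=t_i$, apply Lemma \ref{Lemma: normal} to get $t_{i+l}-t_{i+l-1}\leq 2\delta$ for $1\leq l\leq m$, and then track the maximal-distance pairs $a_j,a_j'$ across the levels $t_i,\dots,t_{i+m}$: their images under $\tau$ satisfy $u(b_{j+1},b_{j+1}')-u(b_j,b_j')\leq 4\delta$, and since $4\delta$ is strictly smaller than the minimal gap $\Delta_\gamma$ among the relevant distances of $(U,u)$, consecutive $u(b_j,b_j')$ must coincide, forcing $d_k(a_{i+m},a_{i+m}')-d_k(a_i,a_i')\leq 2\delta$ and hence $u_k(x,x')\leq t_i+2\delta\leq u(y,y')+3\delta<u(y,y')+\varepsilon$.

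It remains to deal with \emph{small} distances. If $u(y,y')<\gamma=\varepsilon/2$, then $u_k(x,x')\leq d_k(x,x')<u(y,y')+\delta<\varepsilon$, so both quantities are automatically within $\varepsilon$ of one another; the only subtlety is that the collapsing argument of the previous paragraph uses the gap $\Delta_\gamma$ between distances of $(U,u)$ that lie above $\gamma$, so I must make sure that when $u_k(x,x')=k_r$ is large the comparison only ever invokes gaps among distances $\geq\gamma$, which is guaranteed by starting the induction at the first level $t_{i+l}$ that exceeds $\gamma$ and noting that below that level $u_k(x,x')$ and $u(y,y')$ are already $\varepsilon$-close by the crude bound. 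Putting the two claims together, for every $(x,y),(x',y')\in\tau$ we get $|u_k(x,x')-u(y,y')|<\varepsilon$, so $d_{GH}(\mathfrak{T}_\mathcal{U}(X_k,d_k),(U,u))\leq\tfrac12\varepsilon<\varepsilon$, and since $\varepsilon$ was arbitrary, $\lim_{k\to\infty}\mathfrak{T}_\mathcal{U}(X_k,d_k)=(U,u)=\mathfrak{T}_\mathcal{U}(U,u)$.

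The main obstacle I anticipate is precisely the bookkeeping around the infinitely many small distances of $(U,u)$: one has to verify that the "telescoping" step which forces $u(b_{j+1},b_{j+1}')=u(b_j,b_j')$ never needs a gap between two distances of $(U,u)$ that could be arbitrarily close together, and this is exactly what confines the delicate part of the argument to distances above the fixed threshold $\gamma$ supplied by Lemma \ref{Lema: Ultram}. Everything else is a faithful transcription of the proof of Proposition \ref{Prop: semi-stable}, with faithfulness of $\mathfrak{T}$ used only to identify the limit $\mathfrak{T}_\mathcal{U}(U,u)$ with $(U,u)$.
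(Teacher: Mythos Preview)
Your overall strategy coincides with the paper's: invoke Lemma \ref{Lema: Ultram} to get only finitely many ultrametric distances above a threshold $\gamma=\varepsilon/2$, choose $\delta$ smaller than a quarter of the minimal gap among those, and then rerun the two claims from Proposition \ref{Prop: semi-stable}. The first claim and the ``large $t_i$'' half of the second claim are transcribed correctly.

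The gap is in your treatment of small distances. You write that if $u(y,y')<\gamma$ then $u_k(x,x')\leq d_k(x,x')$, and you later call this the ``crude bound''. That inequality is false for general normal linkage-based methods: normality gives $u_k\geq u_{SL}$, not $u_k\leq d_k$. For complete or average linkage one can have $u_k(x,x')\gg d_k(x,x')$ (the example preceding the definition of ``normal'' in the paper is built exactly to exhibit this). So when $d_k(x,x')=t_i$ is small there is no direct control on $u_k(x,x')$, and your sentence ``both quantities are automatically within $\varepsilon$ of one another'' is unjustified.

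The paper handles this case differently. If $t_i<\tfrac{\varepsilon}{2}+\delta$ one argues by dichotomy: either $u_k(x,x')\leq\varepsilon$ and there is nothing to prove, or $u_k(x,x')>\varepsilon$, in which case Lemma \ref{Lemma: normal} forces the levels $t_i<t_{i+1}<\cdots$ to increase in steps of at most $2\delta$, so some $t_{k}$ must land in the window $(\tfrac{\varepsilon}{2}+\delta,\tfrac{\varepsilon}{2}+3\delta)$. One then starts the telescoping argument at that $t_{k}$ (where the corresponding $u(b_j,b_j')$ exceed $\varepsilon/2$ and the gap $\Delta_\gamma$ is available), obtaining $u_k(x,x')\leq t_{k}+2\delta\leq\tfrac{\varepsilon}{2}+5\delta$. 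This is why the paper requires $\delta<\varepsilon/12$ rather than your $\delta<\varepsilon/4$: one needs $\tfrac{\varepsilon}{2}+5\delta<\varepsilon$. Your parenthetical remark about ``starting the induction at the first level $t_{i+l}$ that exceeds $\gamma$'' is pointing in the right direction, but it cannot be justified by the crude bound you cite; it has to be argued via the $2\delta$-step property from Lemma \ref{Lemma: normal}, and the constant in the choice of $\delta$ must be tightened accordingly.
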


\begin{proof} By Lemma \ref{Lema: Ultram}, since $\lim_{k\to \infty}(X_k,d_k)=(U,u)$ with $X_k$ finite,
$\Delta_{\geq \varepsilon}(U,u)=\{u(x,y) \, | \, x,y \in U \mbox{ with } u(x,u)\geq \varepsilon\}$ is a finite set for every $\varepsilon >0$.

Let $\varepsilon >0$. Let $\Delta_{\geq \frac{\varepsilon}{2}}(U,u)=\{l_0,l_1,...,l_p\}$ with $l_i<l_j$ for every $i<j$. 
Let $0<\delta <\frac{1}{4}\min_{1\leq j \leq m}\{l_j-l_{j-1}\}$ and suppose also that  $0<\delta<\frac{\varepsilon}{12}$.

Let $(X_k,d_k)$ be such that $d_{GH}((X_k,d_k),(U,u))<\frac{\delta}{2}$. Then, since $\mathfrak{T}$ is faithful, it suffices to check that $d_{GH}(\mathfrak{T}_\mathcal{U}(X_k,d_k),(U,u))<\varepsilon$.

Let $\tau$ be a correspondence such that $\forall \, (x,y), \, (x',y')\in \tau$, $|d(x,x')-u(y,y')|<\delta$.

Let $\mathfrak{T}_{\mathcal{U}}(X_k,d_k)=(X_k,u_k)$ and $\mathfrak{T}_{\mathcal{D}}(X_k,d_k)=\theta_{k}$.

Claim: $u(y,y')\leq u_{k}(x,x')+\delta < u_{k}(x,x')+\varepsilon$. This claim is proved by the same argument used 
in Proposition \ref{Prop: semi-stable}.

Let $u_{k}(x,x')=k_r\in \Delta(X_k,u_{k})$. Then, there exist $B_0,B_1,...,B_n\in \theta_{k}(k_{r-1})$ with $x\in B_0$, $x'\in B_n$ and $\ell(B_{j-1},B_j)\leq k_r$. By Lemma \ref{Lemma: t-connected}  $B_j$ is $(k_{r-1})$-connected. Then, there is a $k_r$-chain in $(X_k,d_k)$ from $x$ to $x'$: $x=x_0,...,x_n=x'$. Let $(x_j,y_j)\in \tau$ for every $0\leq j \leq n$. Then, since $|d_k(x_{j-1},x_j)-u(y_{j-1},y_j)|<\delta$, $y=y_0,...,y_n=y'$ is a $(k_r+\delta)$-chain in $(U,u)$. Since $(U,u)$ is ultrametric, this implies that $u(y,y')\leq k_r+\delta$.

Claim: $u_{k}(x,x')\leq u(y,y') +\varepsilon$. 

Let $(x,y),(x',y')\in \tau$. Let $d(x,x')=t_i$ and $u(y,y')=t$. Then, $|t_i-t|<\delta$.

If $u_{k}(x,x')\leq t_i$ we are done since $u_{k}(x,x')\leq d(x,x')\leq u(y,y')+\delta$. Then, suppose $u_{k}(x,x')=k_r\in [t_{i+m-1},t_{i+m})$ with $m\geq 1$. By Lemma \ref{Lemma: normal}, for every $1\leq k \leq m$, $t_{i+k}-t_{i+k-1}\leq 2\delta$.

If $t_i\geq \frac{\varepsilon}{2}+\delta$, consider for every $i\leq j < i+m$, as in the proof of Lemma \ref{Lemma: normal}, $B_1^{t_j},B_2^{t_j}\in \theta_{k}(t_j)$ with $x\in B_1^{t_j}$, $x'\in B_2^{t_j}$ and let $a_j\in B_1^{t_{j}}$ and $a'_j\in B_2^{t_{j}}$ with $d(a_j,a'_j)=\max\{d(x,y) \, | \, x\in B_1^{t_{j}}, \ y\in B_2^{t_{j}}\}$. Let $(a_j,b_j),(a'_j,b'_j)\in \tau$. Hence, for every $i\leq j < i+m$, $d(a_{j+1},a'_{j+1})-d(a_j,a'_j)\leq t_{j}+2\delta-t_{j}=2\delta$. Therefore, $u(b_{j+1},b'_{j+1})-u(b_j,b'_j)\leq 4\delta$. Since  $u(b_j,b'_j)>t_j-\delta >\frac{\varepsilon}{2}$ and $4\delta <\min_{1\leq j \leq m}\{l_j-l_{j-1}\}$, then $u(b_{j+1},b'_{j+1})=u(b_j,b'_j)$ for every $j$. Thus, $d(a_{i+m},a'_{i+m})-d(a_i,a'_i)\leq u(b_{i+m},b'_{i+m})-u(b_i,b'_i) +2\delta= 2\delta$ and $u_{X_k}(x,x')\leq t_i+2\delta \leq u(y,y')+3\delta$ proving the claim.

Finally, if $t_i< \frac{\varepsilon}{2}+\delta$, then, either $u_{X_k}(x,x')\leq \varepsilon$ and we are done or, by Lemma \ref{Lemma: normal}, there is some $t_i<t_k\in (\frac{\varepsilon}{2}+\delta,\frac{\varepsilon}{2}+3\delta)$. Consider, as above, for every $k\leq j < i+m$ the pairs $(a_j,a'_j)$ with $d(a_{j+1},a'_{j+1})-d(a_j,a'_j)\leq 2\delta$, $(a_j,b_j),(a'_j,b'_j)\in \tau$. Hence, as above, $u(b_{k+m},b'_{k+m})=u(b_k,b'_k)$ for every $j$ and $d(a_{k+m},a'_{k+m})-d(a_k,a'_k)\leq  2\delta$. Therefore, $u_{X_k}(x,x')\leq t_k+2\delta \leq \frac{\varepsilon}{2}+5\delta<\varepsilon$ proving the claim.
\end{proof}

\begin{cor} $\mathfrak{T}^{CL}$ is semi-stable in the Gromov-Hausdorff sense.
\end{cor}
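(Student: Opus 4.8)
The goal is to deduce that $\mathfrak{T}^{CL}$ is semi-stable from the general Theorem \ref{Th: semistable}, so the entire task reduces to verifying the three hypotheses of that theorem for complete linkage: namely that $\mathfrak{T}^{CL}$ is (i) a standard linkage-based $HC$ method of the form $(\ell,\emptyset)$, (ii) faithful, and (iii) normal. The plan is simply to invoke the results already established earlier in the excerpt for each of these points and then apply the theorem.

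First I would note that $\mathfrak{T}^{CL}=\mathfrak{T}(\ell^{CL},\emptyset)$, so it is a standard linkage-based method with empty unchaining condition; one checks trivially that $\ell^{CL}(B,B')=\max_{(x,x')\in B\times B'}d(x,x')$ satisfies the three axioms in the definition of a linkage function. Faithfulness is exactly the content of the proposition stating that $\mathfrak{T}^{CL}$ and $\mathfrak{T}^{AL}$ are faithful, so that hypothesis is immediate. For normality I must verify the two conditions in the definition: that $\ell^{CL}(B_1,B_2)\leq \ell^{CL}(B_1,B_2)$ for all blocks $B_1,B_2\in\theta_X(t)$, which is a tautology, and that $u_X\geq u_{SL}$. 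By Lemma \ref{Lemma: t-connected} the latter is equivalent to every block $B\in\theta^{CL}_X(t)$ being $t$-connected. This last point is the only step with any content: I would argue that in the complete-linkage recursion, whenever two blocks are merged at stage $i$ they are joined by a sequence of blocks $B=B_1,\dots,B_s=B'$ with $\ell^{CL}(B_k,B_{k+1})\leq R_i$, and since $\ell^{SL}\leq\ell^{CL}$ each consecutive pair is also within $\ell^{SL}$-distance $R_i$, hence within distance $R_i$ as points; combined inductively with the fact (which holds at the base case of singletons) that each $B_k$ is itself $R_i$-connected, the union is $R_i$-connected, and since $R_i\le t$ for the relevant $t$, connectivity is preserved. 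Actually, more cleanly: $u^{CL}_X(x,y)\geq u_{SL}(x,y)$ follows because $\ell^{SL}\le\ell^{CL}$ forces the single-linkage dendrogram to refine or coincide with the complete-linkage dendrogram at every level — indeed single linkage merges at least as eagerly — so any block of $\theta^{CL}_X(t)$ is a union of blocks of $\theta^{SL}_X(t)$ glued along edges of length $\le t$, hence $t$-connected.

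With all three hypotheses in hand, the corollary follows by a one-line application of Theorem \ref{Th: semistable}. I expect the main (and really the only) obstacle to be making the normality verification airtight, specifically the inequality $u^{CL}_X\geq u_{SL}$; everything else is either a definitional triviality or a direct citation. In the write-up I would keep the normality check brief, pointing to Proposition \ref{Prop: increasing} (which already contains the key monotonicity $\ell^{SL}(A_1,A_2)\le\ell^{CL}(A_1,A_2)$ argument in the course of proving $\ell^{CL}$ is increasing) and to Lemma \ref{Lemma: t-connected}, rather than re-deriving the $t$-connectedness from scratch.

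\begin{proof} By the proposition above, $\mathfrak{T}^{CL}$ is faithful. It is a standard linkage-based method of the form $\mathfrak{T}(\ell^{CL},\emptyset)$. It remains to check that it is normal. For any metric space $X$, any $t>0$ and any $B_1,B_2\in\theta^{CL}_X(t)$, the inequality $\ell^{CL}(B_1,B_2)\leq \ell^{CL}(B_1,B_2)$ holds trivially. For the condition $u_X\geq u_{SL}$, by Lemma \ref{Lemma: t-connected} it suffices to show that every block $B\in\theta^{CL}_X(t)$ is $t$-connected. We argue by induction on the stages of the recursion. At stage $1$ every block is a singleton, hence trivially $R_0$-connected. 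Suppose that at stage $i$ every block of $\Theta_i$ is $R_{i-1}$-connected, and let $B\in\Theta_{i+1}$. Then $B=B_1\cup\cdots\cup B_s$ for blocks $B_1,\dots,B_s\in\Theta_i$ lying in the same connected component of $G^{\ell^{CL}}_{R_i}$; reordering, we may assume $\ell^{CL}(B_k,B_{k+1})\leq R_i$ for $1\leq k\leq s-1$ after passing to a spanning tree of that component. Since $\ell^{SL}\leq \ell^{CL}$, we get $\ell^{SL}(B_k,B_{k+1})\leq R_i$, so there are points $x_k\in B_k$, $y_k\in B_{k+1}$ with $d(x_k,y_k)\leq R_i$. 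As $R_i\geq R_{i-1}$ (Proposition \ref{Prop: increasing}), each $B_k$ is $R_i$-connected, and gluing these along the edges $\{x_k,y_k\}$ shows that $B$ is $R_i$-connected. Since $\theta^{CL}_X(t)=\Theta_{i(t)+1}$ with $R_{i(t)}\leq t$, every block of $\theta^{CL}_X(t)$ is $t$-connected. Hence $\mathfrak{T}^{CL}$ is normal, and Theorem \ref{Th: semistable} applies.
\end{proof}
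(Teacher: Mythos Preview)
Your proposal is correct and follows exactly the route the paper intends: the corollary is stated immediately after Theorem \ref{Th: semistable} with no proof at all, so the paper treats ``$\mathfrak{T}^{CL}$ is normal and faithful'' as self-evident and simply invokes the theorem. Your write-up supplies the normality verification the paper omits; the only cosmetic wrinkle is that a spanning tree of the component of $G^{\ell^{CL}}_{R_i}$ need not be a Hamiltonian path, so ``reordering'' into a linear chain is not quite right---but traversing the tree still gives the required $R_i$-chains, and your indexing $\theta^{CL}_X(t)=\Theta_{i(t)+1}$ should read $\Theta_{i(t)}$ to match the paper's convention (the conclusion is unaffected since $R_{i(t)-1}\le R_{i(t)}\le t$).
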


\begin{cor} $\mathfrak{T}^{AL}$ is semi-stable in the Gromov-Hausdorff sense.
\end{cor}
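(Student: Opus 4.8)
The plan is to reduce this to Theorem~\ref{Th: semistable}: it suffices to check that $\mathfrak{T}^{AL}=\mathfrak{T}(\ell^{AL},\emptyset)$ is a normal, faithful, standard linkage-based $HC$ method. It is standard linkage-based by definition, and it is faithful by the proposition above establishing faithfulness of $\mathfrak{T}^{CL}$ and $\mathfrak{T}^{AL}$; so the only thing I would need to verify is normality.

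Normality has two parts. For the first, I would note that $\ell^{AL}(B_1,B_2)\le\ell^{CL}(B_1,B_2)$ holds for every pair of blocks $B_1,B_2$ (not just those occurring in some $\theta_X(t)$), since the arithmetic mean of the finitely many numbers $d(x,x')$, $(x,x')\in B_1\times B_2$, never exceeds their maximum. For the second part, $u_X\ge u_{SL}$, by Lemma~\ref{Lemma: t-connected} it is enough to show that every block of $\theta_X(t)$ is $t$-connected, for every $t>0$.

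To see the latter I would argue by induction on the stages of the recursive construction, in the notation $\Theta_i$, $R_i$, $G_{R_i}^{\ell^{AL}}$. The blocks of $\Theta_1$ are singletons, hence trivially $0$-connected. Assume the blocks of $\Theta_i$ are $R_{i-1}$-connected. If $B,B'$ are joined by an edge of $G_{R_i}^{\ell^{AL}}$ then $\ell^{AL}(B,B')\le R_i$, hence $\ell^{SL}(B,B')=\min_{(x,x')\in B\times B'}d(x,x')\le R_i$ as well, so $B$ and $B'$ are linked by a single edge of length $\le R_i$. Since $\ell^{AL}$ is increasing (Proposition~\ref{Prop: increasing}), $R_{i-1}\le R_i$, so each block of $\Theta_i$ is in fact $R_i$-connected, and concatenating along a connected component of $G_{R_i}^{\ell^{AL}}$ shows that every block of $\Theta_{i+1}$ is $R_i$-connected. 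Thus every block of $\Theta_j$ is $R_{j-1}$-connected for all $j$; and for arbitrary $t$ one has $\theta_X(t)=\Theta_{i(t)}$ with $R_{i(t)}\le t$, so its blocks are $R_{i(t)-1}$-connected, hence $t$-connected. This completes the verification of normality, and the corollary then follows from Theorem~\ref{Th: semistable}.

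Since everything here is a short verification, I do not expect a real obstacle; the only point needing a genuine (if easy) argument is the inductive proof that the blocks of $\theta_X(t)$ are $t$-connected, and the only mild care required is to invoke that $(R_i)$ is increasing for $\ell^{AL}$ so that $t$-connectedness is preserved when blocks are merged. The identical argument, using the inequality $\ell^{CL}\le\ell^{CL}$ and that $(R_i)$ is increasing for $\ell^{CL}$, also settles the preceding corollary on $\mathfrak{T}^{CL}$.
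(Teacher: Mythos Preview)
Your proposal is correct and follows exactly the route the paper intends: the corollary is an immediate application of Theorem~\ref{Th: semistable}, and you correctly supply the verification of normality (the paper states the corollary without proof, leaving normality of $\mathfrak{T}^{AL}$ implicit). One trivial typo: in your last sentence you wrote ``$\ell^{CL}\le\ell^{CL}$'' where you meant the tautology on the $\ell^{CL}$ side.
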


The proof of Theorem \ref{Th: semistable} can be adapted to prove that $SL(\alpha)$ is also semi-stable. First, let us see that the analogue to Lemma \ref{Lemma: normal} also works for $SL(\alpha)$.

\begin{lema}\label{Lema: normal-alpha}  Let $(X,d)$ be a finite metric space with $\Delta(X,d)=\{t_0<\cdots <t_n\}$ and $(U,u)$ be an ultrametric space with $d_{GH}((X,d),(U,u))<\frac{\delta}{2}$. Let $\mathfrak{T}^{SL(\alpha)}_\mathcal{D}(X,d)=\theta_\alpha$ and $\mathfrak{T}^{SL(\alpha)}_\mathcal{U}(X,d)=u_\alpha$ for any $\alpha \geq 1$. Given $x,x'\in X$ with $d(x,x')=t_i$, if $u_{\alpha}(x,x')> t_{j-1}\geq t_i$, then $t_j\leq t_{j-1}+2\delta$.
\end{lema}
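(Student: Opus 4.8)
The plan is to mimic the proof of Lemma \ref{Lemma: normal}, replacing the single abstract fact that $B_1, B_2$ different blocks forces $\max\{d(x,y)\mid x\in B_1,y\in B_2\}>t_{j-1}$ (Lemma \ref{Lemma: max}) with the corresponding statement for the $SL(\alpha)$ algorithm, and replacing ``$\ell(B_1,B_2)\le d(a,a')$'' with an analogous bound coming from the unchaining condition $P_\alpha$. First I would fix $x,x'$ with $d(x,x')=t_i$ and assume $u_\alpha(x,x')>t_{j-1}\ge t_i$, so that if $B_1, B_2\in\theta_\alpha(t_{j-1})$ are the blocks containing $x$ and $x'$ respectively, then $B_1\ne B_2$. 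I would let $k_{r}\in\Delta(\mathfrak{T}^{SL(\alpha)}_{\mathcal U}(X,d))$ be the value with $k_{r-1}\le t_{j-1}<k_r$, so $\theta_\alpha(t_{j-1})=\theta_\alpha(k_{r-1})$. Since $\ell^{SL}$ is the linkage function and $SL(\alpha)$ is faithful (recalled above) and its blocks are $t$-connected by the single-linkage part of the construction (condition i) in the definition of $G_\alpha^{R_i}$), I get that each block of $\theta_\alpha(t_{j-1})$ is $t_{j-1}$-connected, exactly as in Lemma \ref{Lemma: normal}.

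Next, the key geometric point: because $B_1$ and $B_2$ are not merged at level $t_{j-1}$ even though $\ell^{SL}(B_1,B_2)\le t_{j-1}$ is possible, the obstruction must come from condition ii), i.e. the pair $\{B_1,B_2\}$ fails $P_\alpha$ at all relevant levels up to $k_r$ (or there is no edge because even $\ell^{SL}$ exceeds the threshold, but then $\ell^{SL}(B_1,B_2)>t_{j-1}$ and there is already a long distance to exploit). In either case I want to locate a pair of points $a\in B_1$, $a'\in B_2$ with $d(a,a')\ge t_j$ whose image under a near-isometric correspondence $\tau$ can be pushed through the ultrametric. The cleanest way is: since $B_1\sim_{\ell^{SL},k_r}$ does not hold in $G_\alpha^{k_{r-1}}$ but does become true at $u_\alpha(x,x')=k_r$, the relevant threshold $k_r$ is the smallest $R$ at which $\{B_1,B_2\}$ (or the chain joining them) satisfies $P_\alpha$; by the definition of $P_\alpha$ via Rips complexes this forces a simplex in $F_{k_r}(B_1\cup B_2)$ meeting both, hence in particular all pairwise distances realized inside the witnessing configuration are $\le k_r$, while at the previous scale $k_{r-1}$ no such witness exists. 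Taking $a,a'$ to be the pair realizing $\max\{d(x,y)\mid x\in B_1,y\in B_2\}$ I claim $d(a,a')\ge t_j$: indeed, were $d(a,a')<t_j$ then the whole pair sits at scale $\le t_{j-1}$ (all distances between $B_1$ and $B_2$ are then $\le t_{j-1}$), which combined with $t_{j-1}$-connectedness of $B_1$ and $B_2$ would already provide the needed simplex or chain to merge them at level $t_{j-1}$, contradicting $u_\alpha(x,x')>t_{j-1}$. This is the step I expect to be the main obstacle, because unlike $\ell^{CL}$ the condition $P_\alpha$ is not monotone in an obvious componentwise way and one has to argue carefully that failure of $P_\alpha$ really does force a large realized distance rather than merely a combinatorial accident in the Rips complex; the dimension comparison $\alpha\cdot\dim(\Delta)\ge\min\{\dim F_{R_i}(B_1),\dim F_{R_i}(B_2)\}$ has to be fed in here, using $\alpha\ge 1$.

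Once a pair $a\in B_1$, $a'\in B_2$ with $d(a,a')\ge t_j$ and $d(a,a')\le k_r$ is in hand, the rest is verbatim Lemma \ref{Lemma: normal}. I pick $(a,b),(a',b'),(x,y),(x',y')\in\tau$ with $\sup_{(z,w)(z',w')\in\tau}\Gamma_{X,U}(z,w,z',w')<\delta$. Using $t_{j-1}$-connectedness of $B_1$ and $B_2$ and $d(x,x')=t_i\le t_{j-1}$ I produce a $t_{j-1}$-chain in $X$ from $a$ to $a'$; transporting it through $\tau$ gives a $(t_{j-1}+\delta)$-chain from $b$ to $b'$ in $U$, whence $u(b,b')\le t_{j-1}+\delta$ by the ultrametric inequality, and therefore $d(a,a')\le u(b,b')+\delta\le t_{j-1}+2\delta$. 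Combining with $d(a,a')\ge t_j$ gives $t_j\le t_{j-1}+2\delta$, which is the claim. I would also reuse Figure \ref{Fig: Lema_CL} for the picture. The only genuinely new content beyond Lemma \ref{Lemma: normal} is the verification that the $SL(\alpha)$ blocks stay $t$-connected (immediate from condition i) in $G_\alpha^{R_i}$) and that the failure-of-merging at level $t_{j-1}$ yields the large realized distance $d(a,a')\ge t_j$; everything else carries over unchanged.
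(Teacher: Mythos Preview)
Your proposal is correct and follows essentially the same route as the paper's proof. The only place the paper is more explicit is the step you flag as the main obstacle: rather than speaking of merging $B_1,B_2$ (which are already the blocks of $\theta_\alpha(t_{j-1})$), it passes to blocks $B_a,B_{a'}\in\theta_\alpha(t_{j-2})$ containing $a,a'$, observes that every cross-distance in $B_a\times B_{a'}$ is $\le d(a,a')\le t_{j-1}$, and hence produces a simplex in $F_{t_{j-1}}(B_a\cup B_{a'})$ of dimension at least $\min\{\dim F_{t_{j-1}}(B_a),\dim F_{t_{j-1}}(B_{a'})\}$ meeting both---so $P_\alpha$ holds (here $\alpha\ge 1$ is used) and $B_a,B_{a'}$ would merge when forming $\theta_\alpha(t_{j-1})$, the contradiction you were after; your detour through $k_r$ and the bound $d(a,a')\le k_r$ are not needed.
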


\begin{proof} Let $B_1,B_2\in \theta_{\alpha}(t_{j-1})$ with $x\in B_1$ and $x'\in B_2$.

Let $a\in B_1$ and $a'\in B_2$ with $d(a,a')=\max\{d(x,y) \, | \, x\in B_1, \ y\in B_2\}$. Then, it follows from the construction of $\theta_{\alpha}(t_{j-1})$ that $d(a,a')\geq t_j >t_{j-1}$.  Suppose that $d(a,a')\leq t_{j-1}$. Then, let $B_a,B_{a'}$ two blocks in $\theta_\alpha(t_{j-2})$ such that $a\in B_a$, $a'\in B_{a'}$. Clearly, $\ell^{SL}(B_a,B_{a'})\leq t_{j-1}$ and, since $B_a\subset B_1$ and $B_{a'}\subset B_2$, then $d(a,a')=\max\{d(x,y) \, | \, x\in B_1, \ y\in B_2\}\geq \max\{d(x,y) \, | \, x\in B_a, \ y\in B_{a'}\}$. Therefore, $F_{t_{j-1}}(B_a\cup B_{a'})$ has dimension $\sharp (B_a \cup B_{a'})-1\geq \min\{dim(F_{t_{j-1}}(B_a)),dim(F_{t_{j-1}}(B_{a'}))\}$. Therefore, $B_a$ and $B_{a'}$ are merged in $\theta_\alpha(t_{j-1})$ for any $\alpha \geq 1$ and $a,a'$ are contained in the same block in $\theta_{\alpha}(t_{j-1})$ which is a contradiction.

Let $\tau$ be a correspondence such that $\sup_{(z,u)(z',u')\in \tau}\Gamma_{X,U}(z,u,z',u')<\delta$ and let $(a,b),(a',b'),(x,y),(y,y')\in \tau$. See Figure \ref{Fig: Lema_CL}. It is immediate to see that every block in $\theta_{\alpha}(t_{j-1})$ is $t_{j-1}$-connected and there exist $t_{j-1}$-chains in $(X,d)$ from $a$ to $x$ and from $a'$ to $x'$. Since $d(x,x')=t_i\leq t_{j-1}$, there is a $t_{j-1}$-chain from $a$ to $a'$. 

Therefore, there is a $(t_{j-1}+\delta)$-chain from $b$ to $b'$. By the properties of the ultrametric, this implies that $u(b,b')\leq t_{j-1}+\delta$ and, hence, $d(a,a')\leq t_{j-1}+2\delta$. In particular, $t_{j-1}<t_j\leq d(a,a')\leq t_{j-1}+2\delta$. 
\end{proof}

\begin{prop} $\mathfrak{T}^{SL(\alpha)}$ is semi-stable in the Gromov-Hausdorff sense.
\end{prop}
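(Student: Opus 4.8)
The plan is to run the proof of Theorem~\ref{Th: semistable} almost verbatim, substituting Lemma~\ref{Lema: normal-alpha} for Lemma~\ref{Lemma: normal} and supplying the two structural facts about $SL(\alpha)$ that replace normality. Concretely, let $((X_k,d_k))_{k\in\bn}$ be a sequence of finite metric spaces with $\lim_{k\to\infty}(X_k,d_k)=(U,u)\in\mathcal{U}$. Since $SL(\alpha)$ is faithful, it suffices to show $d_{GH}(\mathfrak{T}^{SL(\alpha)}_{\mathcal{U}}(X_k,d_k),(U,u))\to 0$. Fix $\varepsilon>0$. By Lemma~\ref{Lema: Ultram} (which is a statement about $d_{GH}$ alone, independent of the method), $\Delta_{\geq\varepsilon/2}(U,u)=\{l_0,l_1,\dots,l_p\}$ is finite; choose $0<\delta<\tfrac14\min_{1\leq j\leq p}\{l_j-l_{j-1}\}$ and also $0<\delta<\varepsilon/12$, and take $k$ large enough that $d_{GH}((X_k,d_k),(U,u))<\delta/2$, witnessed by a correspondence $\tau$ with $|d_k(x,x')-u(y,y')|<\delta$ for all $(x,y),(x',y')\in\tau$. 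Write $\mathfrak{T}^{SL(\alpha)}_{\mathcal{D}}(X_k,d_k)=\theta_\alpha$ and $\mathfrak{T}^{SL(\alpha)}_{\mathcal{U}}(X_k,d_k)=u_\alpha$.

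The first claim, $u(y,y')\leq u_\alpha(x,x')+\delta$, is proved exactly as in Theorem~\ref{Th: semistable}: if $u_\alpha(x,x')=k_r$, there are blocks $B_0,\dots,B_n\in\theta_\alpha(k_{r-1})$ with $x\in B_0$, $x'\in B_n$ and consecutive blocks joined by an edge of $G_\alpha^{k_r}$; condition i) in the definition of $G_\alpha^{k_r}$ gives points of $B_{j-1},B_j$ within distance $k_r$, and each block is $k_{r-1}$-connected (immediate from the construction, as noted in the proof of Lemma~\ref{Lema: normal-alpha}), so concatenation yields a $k_r$-chain from $x$ to $x'$ in $(X_k,d_k)$. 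Transferring this chain through $\tau$ gives a $(k_r+\delta)$-chain in $(U,u)$, and the ultrametric inequality forces $u(y,y')\leq k_r+\delta<u_\alpha(x,x')+\varepsilon$.

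For the reverse claim $u_\alpha(x,x')\leq u(y,y')+\varepsilon$, put $d_k(x,x')=t_i$, $u(y,y')=t$, so $|t_i-t|<\delta$. If $u_\alpha(x,x')\leq t_i$ we are done; otherwise $u_\alpha(x,x')=k_r\in[t_{i+m-1},t_{i+m})$ with $m\geq 1$, and Lemma~\ref{Lema: normal-alpha} gives $t_{i+j}-t_{i+j-1}\leq 2\delta$ for $1\leq j\leq m$. Now repeat the two-case analysis of Theorem~\ref{Th: semistable}: for each $i\leq j<i+m$ pick $a_j,a_j'$ in the two blocks of $\theta_\alpha(t_j)$ containing $x,x'$ realizing the maximal cross-distance; the proof of Lemma~\ref{Lema: normal-alpha} already shows this maximal distance exceeds $t_j$ (the $F_{t_j}(\cdot)$--dimension argument: if it were $\leq t_j$, the full simplex on $B_a\cup B_{a'}$ would satisfy condition ii), forcing a merge). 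One then gets $d_k(a_{j+1},a_{j+1}')-d_k(a_j,a_j')\leq 2\delta$, hence $u(b_{j+1},b_{j+1}')-u(b_j,b_j')\leq 4\delta$ after transfer through $\tau$; since the relevant values $u(b_j,b_j')$ exceed $\varepsilon/2$ (in the case $t_i\geq\varepsilon/2+\delta$) or lie above $\varepsilon/2$ after jumping to some $t_k\in(\varepsilon/2+\delta,\varepsilon/2+3\delta)$ (the case $t_i<\varepsilon/2+\delta$, again supplied by Lemma~\ref{Lema: normal-alpha}) and $4\delta<\min_j\{l_j-l_{j-1}\}$, these $u$-values are forced to be constant along $j$. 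This collapses the telescoping sum and yields $u_\alpha(x,x')\leq u(y,y')+3\delta$ (first case) or $<\varepsilon$ (second case). Combining the two claims gives $d_{GH}(\mathfrak{T}^{SL(\alpha)}_{\mathcal{U}}(X_k,d_k),(U,u))<\varepsilon$.

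The only genuinely new input beyond the mechanical transcription is checking that the two pillars of the standard-linkage argument survive the passage to $SL(\alpha)$: that blocks of $\theta_\alpha(t)$ are $t$-connected, and that distinct blocks of $\theta_\alpha(t)$ have maximal cross-distance strictly greater than $t$. Both are already established inside the proof of Lemma~\ref{Lema: normal-alpha} (the second via the ``full simplex $\Rightarrow$ dimension condition holds $\Rightarrow$ merge'' observation), so I expect no real obstacle; the main care is simply to confirm that the edge condition i) of $G_\alpha^{R_i}$ is exactly what is needed to reconstruct the $k_r$-chain in the first claim, which it is.
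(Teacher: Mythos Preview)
Your proposal is correct and follows essentially the same approach as the paper: both transcribe the proof of Theorem~\ref{Th: semistable}, replacing Lemma~\ref{Lemma: normal} by Lemma~\ref{Lema: normal-alpha} and invoking faithfulness of $SL(\alpha)$. Your explicit isolation of the two structural facts (blocks of $\theta_\alpha(t)$ are $t$-connected; distinct blocks have maximal cross-distance $>t$) is a helpful gloss, but these are precisely the ingredients already supplied within Lemma~\ref{Lema: normal-alpha} and used implicitly in the paper's proof.
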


\begin{proof} By Lemma \ref{Lema: Ultram}, since $\lim_{k\to \infty}(X_k,d_k)=(U,u)$ with $X_k$ finite,
$\Delta_{\geq \varepsilon}(U,u)=\{u(x,y) \, | \, x,y \in U \mbox{ with } u(x,u)\geq \varepsilon\}$ is a finite set for every $\varepsilon >0$.

Let $\varepsilon >0$. Let $\Delta_{\geq \frac{\varepsilon}{2}}(U,u)=\{l_0,l_1,...,l_p\}$ with $l_i<l_j$ for every $i<j$.

Let $0<\delta <\frac{1}{4}\min_{1\leq j \leq m}\{l_j-l_{j-1}\}$ and suppose also that  $0<\delta<\frac{\varepsilon}{12}$.

Let $(X_k,d_k)$ be such that $d_{GH}((X_k,d_k),(U,u))<\frac{\delta}{2}$. Then, since $SL(\alpha)$ is faithful, it suffices to check that $d_{GH}(\mathfrak{T}_\mathcal{U}(X_k,d_k),(U,u))<\varepsilon$.

Let $\tau$ be a correspondence such that $\forall \, (x,y), \, (x',y')\in \tau$, $|d(x,x')-u(y,y')|<\delta$.

Let $\mathfrak{T}^{SL(\alpha)}_{\mathcal{U}}(X_k,d_k)=(X_k,u_k)$ and $\mathfrak{T}^{SL(\alpha)}_{\mathcal{D}}(X_k,d_k)=\theta_{k}$.

Claim: $u(y,y')\leq u_{k}(x,x')+\delta < u_{k}(x,x')+\varepsilon$.

Let $u_{k}(x,x')=k_r\in \Delta(X_k,u_{k})$. Then, there exist $B_0,B_1,...,B_n\in \theta_{k}(k_{r-1})$ with $x\in B_0$, $x'\in B_n$ and $\ell^{SL}(B_{j-1},B_j)\leq k_r$. By Lemma \ref{Lemma: t-connected}  $B_j$ is $(k_{r-1})$-connected. Then, there is a $k_r$-chain in $(X_k,d_k)$ from $x$ to $x'$: $x=x_0,...,x_n=x'$. Let $(x_j,y_j)\in \tau$ for every $0\leq j \leq n$. Then, since $|d_k(x_{j-1},x_j)-u(y_{j-1},y_j)|<\delta$, $y=y_0,...,y_n=y'$ is a $(k_r+\delta)$-chain in $(U,u)$. Since $(U,u)$ is ultrametric, this implies that $u(y,y')\leq k_r+\delta$.

Claim: $u_{k}(x,x')\leq u(y,y') +\varepsilon$. 

Let $(x,y),(x',y')\in \tau$. Let $d(x,x')=t_i$ and $u(y,y')=t$. Then, $|t_i-t|<\delta$.

If $u_{k}(x,x')\leq t_i$ we are done since $u_{k}(x,x')\leq d(x,x')\leq u(y,y')+\delta$. Then, suppose $u_{k}(x,x')=k_r\in [t_{i+m-1},t_{i+m})$ with $m\geq 1$. By Lemma \ref{Lema: normal-alpha}, for every $1\leq k \leq m$, $t_{i+k}-t_{i+k-1}\leq 2\delta$.

If $t_i\geq \frac{\varepsilon}{2}+\delta$, consider for every $i\leq j < i+m$,  $B_1^{t_j},B_2^{t_j}\in \theta_{k}(t_j)$ with $x\in B_1^{t_j}$, $x'\in B_2^{t_j}$ and let $a_j\in B_1^{t_{j}}$ and $a'_j\in B_2^{t_{j}}$ with $d(a_j,a'_j)=\max\{d(x,y) \, | \, x\in B_1^{t_{j}}, \ y\in B_2^{t_{j}}\}$. Let $(a_j,b_j),(a'_j,b'_j)\in \tau$. Hence, for every $i\leq j < i+m$, $d(a_{j+1},a'_{j+1})-d(a_j,a'_j)\leq t_{j}+2\delta-t_{j}=2\delta$. Therefore, $u(b_{j+1},b'_{j+1})-u(b_j,b'_j)\leq 4\delta$. Since  $u(b_j,b'_j)>t_j-\delta >\frac{\varepsilon}{2}$ and $4\delta <\min_{1\leq j \leq m}\{l_j-l_{j-1}\}$, then $u(b_{j+1},b'_{j+1})=u(b_j,b'_j)$ for every $j$. Thus, $d(a_{i+m},a'_{i+m})-d(a_i,a'_i)\leq u(b_{i+m},b'_{i+m})-u(b_i,b'_i) +2\delta= 2\delta$ and $u_{X_k}(x,x')\leq t_i+2\delta \leq u(y,y')+3\delta$ proving the claim.

Finally, if $t_i< \frac{\varepsilon}{2}+\delta$, then, either $u_{X_k}(x,x')\leq \varepsilon$ and we are done or, by Lemma \ref{Lema: normal-alpha}, there is some $t_i<t_k\in (\frac{\varepsilon}{2}+\delta,\frac{\varepsilon}{2}+3\delta)$. Consider, as above, for every $k\leq j < i+m$ the pairs $(a_j,a'_j)$ with $d(a_{j+1},a'_{j+1})-d(a_j,a'_j)\leq 2\delta$, $(a_j,b_j),(a'_j,b'_j)\in R$. Hence, as above, $u(b_{k+m},b'_{k+m})=u(b_k,b'_k)$ for every $j$ and $d(a_{k+m},a'_{k+m})-d(a_k,a'_k)\leq  2\delta$. Therefore, $u_{X_k}(x,x')\leq t_k+2\delta \leq \frac{\varepsilon}{2}+5\delta<\varepsilon$ proving the claim.
\end{proof}

\section{Stability of HC methods.}\label{Section: stable}

\begin{definicion} A $HC$ method $\mathfrak{T}$ is \textbf{stable in the Gromov-Hausdorff sense}  if 
\[\mathfrak{T}_\mathcal{U}\co (\mathcal{M},d_{GH})\to (\mathcal{U},d_{GH})\] is continuous.
\end{definicion}

Thus, $SL$ $HC$ is stable in the Gromov-Hasudorff sense.

\textbf{Notation:} Let $\mathcal{H}(\mathfrak{T},\varepsilon)$ be the set of finite metric spaces, $(X,d)$, such that  $\theta_X(\varepsilon)$ has exactly two blocks, where $\theta_X=\mathfrak{T}_\mathcal{D}(X,t)$. In particular, $\mathcal{H}(\mathfrak{T}^{SL},\varepsilon)$ denotes the set of finite metric spaces with exactly two $\varepsilon$-components.

\textbf{Notation:} Let $I(a_1,...,a_n)$ denote the isometry type of the finite metric space defined by the points $\{\bar{p}_0,\bar{p}_1,...,\bar{p}_n\}$ where  $d(\bar{p}_i,\bar{p}_j)=a_{i+1}+\cdots+a_j$ for every $i<j$ (this is, the isometry type of the set $\{0,a_1,a_1+a_2,...,a_1+a_2+\cdots+a_n\}\subset \br$ with the euclidean metric). See Figure \ref{Fig: Interval}. In particular, $$I(R)=\Big(\{\bar{p}_0,\bar{p}_1\}, \left( 
\begin{array}{ccc}
R &  0 \\ 
0 & R \end{array}
\right)\Big)$$

\begin{figure}[ht]
\centering
\includegraphics[scale=0.4]{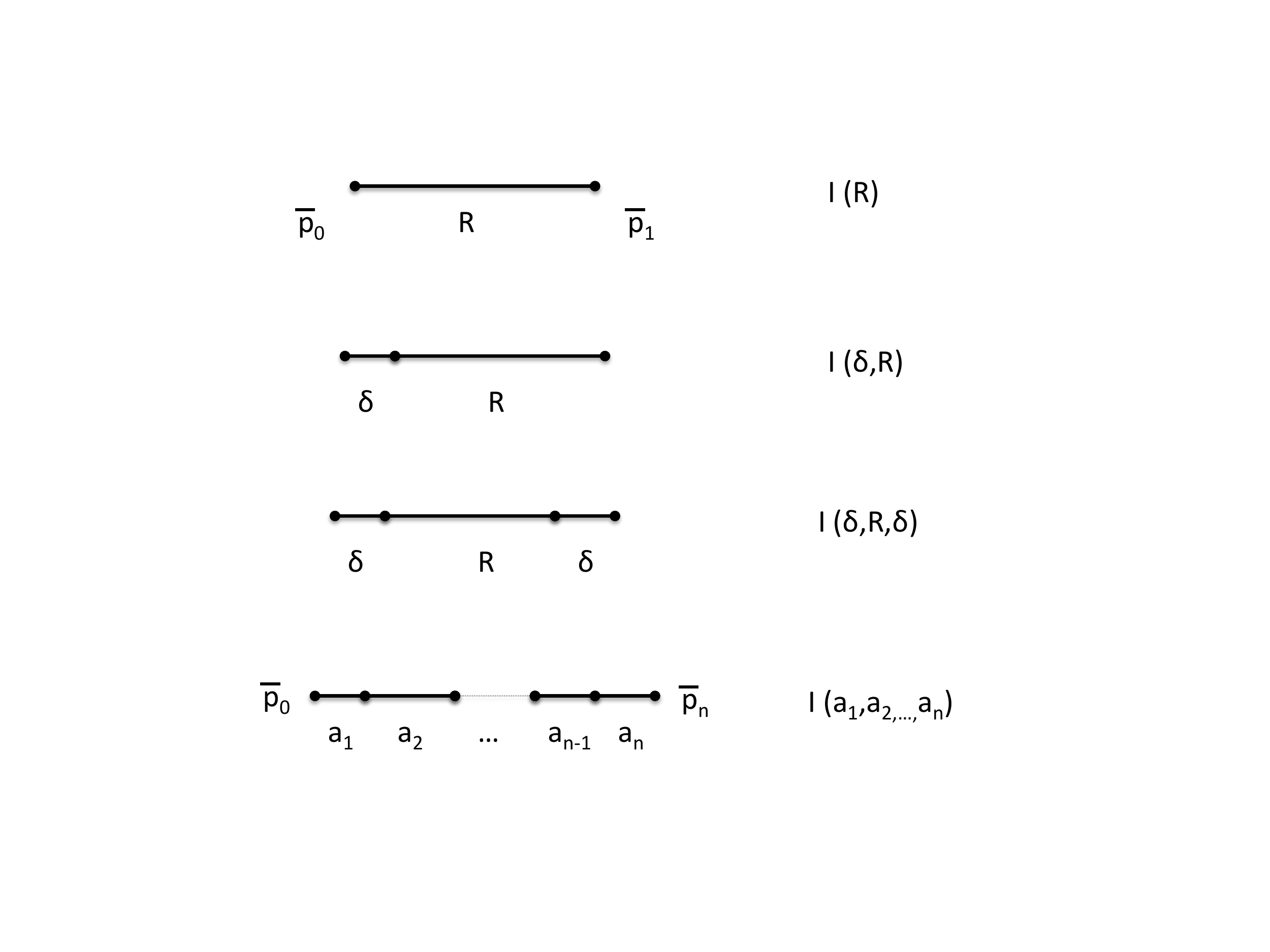}
\caption{$I(a_1,...,a_n)$ denotes the metric space defined by the vertices of the graph above. The edges have length $a_1,...,a_n$ and the distance between two vertices is the length of the minimal path connecting them.}
\label{Fig: Interval}
\end{figure}

\begin{ejp} $\mathfrak{T}^{AL}$ and $\mathfrak{T}^{CL}$ are not stable in the Gromov-Hausdorff sense.

Consider $(X,d)=I(1,1)$ and $(Y,d')=I(1,1+\delta)$ with $\delta>0$ arbitrarily small. 

First, notice that $d_{GH}((X,d),(Y,d'))=\delta$.

It is immediate to check that $\mathfrak{T}^{AL}_{\mathcal{U}}(X)=\mathfrak{T}^{CL}_{\mathcal{U}}(X)=(X,u)$ with  $X=\{x_1,x_2,x_3\}$ and $u(x_i,x_j)=1$ for every $i\neq j$. 

Let $\mathfrak{T}^{AL}_{\mathcal{U}}(Y)=(Y,u_1)$ and $\mathfrak{T}^{CL}_{\mathcal{U}}(Y)=(Y,u_2)$. Now, $Y=\{y_1,y_2,y_3\}$, $u_1(y_1,y_2)=1$ and $u_1(y_1,y_3)=u_1(y_2,y_3)=\frac{3+2\delta}{2}$. Also, $u_2(y_1,y_2)=1$ and $u_2(y_1,y_3)=u_2(y_2,y_3)=2+\delta$.

Thus, it can be seen that $d_{GH}(\mathfrak{T}^{AL}_{\mathcal{U}}(X,d),\mathfrak{T}^{AL}_{\mathcal{U}}(Y,d'))=\frac{1+2\delta}{4}>\frac{1}{4}$. Similarly, $d_{GH}(\mathfrak{T}^{CL}_{\mathcal{U}}(X,d),\mathfrak{T}^{CL}_{\mathcal{U}}(Y,d'))=\frac{2+\delta}{2}>1$. Therefore, $AL$ and $CL$ are not stable.  
\end{ejp}

\begin{definicion} Let $\mathfrak{T}=\mathfrak{T}(\ell;P)$ be a $HC$ method. We say that \textbf{$P$ is nontrivial for $\ell$} if there exists some metric space $(X,d)$ such that for $\mathfrak{T}_D(X,d)=\theta$ and for some $\varepsilon >0$, $\theta(\varepsilon)=\{B_1,B_2\}$, $\ell(B_1,B_2)=R>\varepsilon$ and $\{B_1,B_2\}$ does not satisfy $P$ for $R$. 
\end{definicion}

\begin{obs} For any $HC$ method $\mathfrak{T}=\mathfrak{T}(\ell^{SL};P)$, $P$ is nontrivial for $\ell^{SL}$ if there exists some metric space $(X,d)$ with two $\varepsilon$-components $B_1,B_2$, such that $\ell^{SL}(B_1,B_2)=R>\varepsilon$ and $\{B_1,B_2\}$ does not satisfy $P$ for $R$. 
\end{obs}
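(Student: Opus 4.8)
The plan is to phrase both the hypothesis and the conclusion in terms of the single-linkage ($t$-component) geometry of the data. The fact to isolate first is that, for $\mathfrak{T}=\mathfrak{T}(\ell^{SL};P)$ and any finite metric space $(X,d)$, every block of $\theta_X(t)$ is $t$-connected. I would run the algorithm with $\mathcal{R}=\Delta(X,d)$, which is legitimate by Remark~\ref{Obs: ordered}, so that the thresholds are increasing and each merge producing $\theta_X(t)=\Theta_{i(t)}$ uses some $R_j\le t$; since an edge of $G^{\ell^{SL}}_{R_j}$ between $B$ and $B'$ forces $\ell^{SL}(B,B')\le R_j\le t$, i.e.\ a point of $B$ within $t$ of a point of $B'$, an induction on the steps shows every block of $\Theta_{i(t)}$ is $t$-connected and hence lies in a single $t$-component. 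This already gives the reverse of the asserted implication: if $\theta_X(\varepsilon)=\{B_1,B_2\}$ with $\ell^{SL}(B_1,B_2)=R>\varepsilon$, then $B_1,B_2$ are $\varepsilon$-connected and every distance from $B_1$ to $B_2$ exceeds $\varepsilon$, so $\{B_1,B_2\}$ is exactly the partition of $X$ into its $\varepsilon$-components.

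For the implication actually claimed, I start from $(X,d)$ having exactly two $\varepsilon$-components $B_1,B_2$, with $\ell^{SL}(B_1,B_2)=R>\varepsilon$ and $\{B_1,B_2\}$ not satisfying $P$ for $R$. Running $\mathfrak{T}$ on $X$, no block meets both $B_1$ and $B_2$ at any level below $R$, because any sub-blocks $C_1\subseteq B_1$, $C_2\subseteq B_2$ satisfy $\ell^{SL}(C_1,C_2)\ge \ell^{SL}(B_1,B_2)=R$. Hence, if the method clusters all of $B_1$ into one block and all of $B_2$ into one block by some level $\varepsilon'<R$, then by the $t$-connectedness above $\theta_X(\varepsilon')=\{B_1,B_2\}$, so $(X,d)$ itself witnesses that $P$ is nontrivial for $\ell^{SL}$, with parameter $R=\ell^{SL}(B_1,B_2)>\varepsilon'$. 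Since pure single linkage resolves an $\varepsilon$-component into one block already by level $\varepsilon<R$, the only case needing care is when $P$ delays the internal merging of $B_1$ or $B_2$ past $R$; there I would replace the metric inside each $B_i$ by a sufficiently collapsed one, keeping the cross distances — hence $R$ and the failure of $P$ — unchanged, so that the method resolves each $B_i$ quickly and the previous argument applies.

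The main obstacle is precisely this interior collapse. Because $P$ is only assumed to be an unchaining condition — eventually satisfied but possibly obstructing merges at intermediate scales — simply shrinking $B_i$ need not remove the obstruction, and, more seriously, the modification interacts with the triangle inequality and with whether $\{B_1,B_2\}$ still fails $P$ at $R$: keeping the cross structure sparse, which is what makes $P$ fail (e.g.\ for $P_\alpha$), while collapsing the interiors can violate the triangle inequality; whereas collapsing the cross distances as well tends to make $P$ hold — for $P_\alpha$, turning $B_1$ and $B_2$ into near-points at distance $R$ creates a straddling simplex of full dimension, so $P_\alpha$ is then satisfied. So the real content is to carry out the modification so that simultaneously each $B_i$ becomes a block of $\theta$ strictly below $\ell^{SL}(B_1,B_2)$ and $\{B_1,B_2\}$ still does not satisfy $P$ at that value. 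For the usual unchaining conditions this is routine — one can even keep $(X,d)$ unchanged once $B_1$ and $B_2$ are arranged to have no internal single-edge chaining — which is what makes the reformulation safe to use in the sequel.
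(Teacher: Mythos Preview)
The paper states this as a bare remark with no proof; it is offered as the obvious reformulation of the definition of ``$P$ nontrivial for $\ell$'' in the special case $\ell=\ell^{SL}$, the implicit reasoning being that for a single-linkage-based method the two blocks of $\theta(\varepsilon)$ simply are the two $\varepsilon$-components.

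Your analysis is considerably more careful than the paper's. You correctly establish that every block of $\theta(t)$ is $t$-connected, which gives the easy (converse) direction: whenever the definition of nontriviality is witnessed by some $(X,d)$, the two blocks of $\theta(\varepsilon)$ are indeed the two $\varepsilon$-components. For the direction actually asserted you pinpoint a real issue the paper glosses over: an arbitrary unchaining condition $P$ can obstruct merges \emph{inside} an $\varepsilon$-component, so that $\theta(\varepsilon)$ may be a strict refinement of $\{B_1,B_2\}$ and the given $(X,d)$ need not directly witness the definition. Your proposed repair --- contracting the interiors of the $B_i$ while keeping the cross distances --- runs into exactly the difficulties you name (the triangle inequality against the unchanged cross distances, and the risk that the contraction makes $P$ become satisfied at $R$), and your honesty about this is appropriate: as a statement about a completely general unchaining condition $P$, the remark does not follow without further hypotheses.

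In practice the paper sidesteps the gap by imposing such hypotheses later (consistency of $P$, minimal cardinality; cf.\ Lemma~\ref{Lema: minimal} and the notion of admissibility), and in every concrete witness it actually constructs --- e.g.\ Proposition~\ref{Prop: bridge} for $P_\alpha$ --- the internal distances are uniformly small, so $\theta(\varepsilon)=\{B_1,B_2\}$ holds on the nose and the issue never bites. Your closing remark that for the usual unchaining conditions the reformulation is routine is therefore the right summary of where things stand.
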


Let $(X,d_0)$ be any metric space and $B_1,B_2$ a non-trivial partition of $X$ (this is,  $X=B_1\cup B_2$ with $B_1\cap B_2=\emptyset$ and $B_1,B_2\neq \emptyset$). Let $R>0$ and let 
\begin{equation}\label{Eq: Gamma}
 \Gamma^{B_1,B_2}_R\co [0,1]\to \mathcal{M} 
\end{equation}
be defined as follows:

$\Gamma^{B_1,B_2}_R(0)=(X,d_0)$, $\Gamma^{B_1,B_2}_R(1)=I(R)$ and  for every $t\in (0,1)$, $\Gamma(t)=(X,d_t)$ with

$d_t(x,y)=\left\{ 
\begin{array}{ccc}
(1-t)d_0(x,y) & \mbox{ if } x,y\in B_1 \mbox{ or } x,y\in B_2\\ 
(1-t)d_0(x,y)+tR & \mbox{ if } x\in B_1, y\in B_2 \mbox{ or } y\in B_1, x\in B_2 \end{array}
\right.$

\begin{prop} $\Gamma^{B_1,B_2}_R$ is well defined.
\end{prop}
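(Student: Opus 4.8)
The plan is to verify that $\Gamma^{B_1,B_2}_R$ is well defined, which amounts to two things: first, that the endpoint value $\Gamma^{B_1,B_2}_R(1)$ given by the piecewise formula at $t=1$ agrees with the declared value $I(R)$, and second, that for every $t\in[0,1]$ the function $d_t$ is genuinely a metric on $X$ (symmetry, vanishing exactly on the diagonal, and the triangle inequality). The first point is essentially immediate: setting $t=1$ in the formula gives $d_1(x,y)=0$ whenever $x,y$ lie in the same block and $d_1(x,y)=R$ whenever they lie in different blocks, which is exactly the pseudometric whose metric quotient is $I(R)=(\{\bar p_0,\bar p_1\},\,\mathrm{diag}(R))$; strictly speaking one should note that $d_1$ is only a pseudometric on $X$ and $I(R)$ is its quotient, or simply restrict attention to $t\in[0,1)$ for the metric claim and treat $t=1$ separately by definition.

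For the main point, fix $t\in[0,1)$. Symmetry and the fact that $d_t(x,y)=0$ iff $x=y$ are clear from the formula, since $(1-t)>0$ and $d_0$ is a metric, and the $tR$ term only adds a nonnegative quantity for $x\ne y$ in different blocks. So the work is the triangle inequality $d_t(x,z)\le d_t(x,y)+d_t(y,z)$. First I would record the useful rewriting
\[
d_t(x,y)=(1-t)\,d_0(x,y)+tR\cdot\chi(x,y),
\]
where $\chi(x,y)=1$ if $x,y$ are in different blocks and $\chi(x,y)=0$ otherwise. The term $(1-t)d_0(x,\cdot)$ satisfies the triangle inequality because $d_0$ does and $(1-t)\ge 0$. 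Hence it suffices to check that the ``block-indicator'' term $R\cdot\chi(\cdot,\cdot)$ satisfies the triangle inequality, i.e. $\chi(x,z)\le\chi(x,y)+\chi(y,z)$ for all $x,y,z\in X$. This is a finite case check on which of the two blocks each of $x,y,z$ belongs to: if $x,z$ are in the same block then $\chi(x,z)=0\le$ anything; if $x,z$ are in different blocks then $y$ must be in a different block from at least one of them, so $\chi(x,y)+\chi(y,z)\ge 1=\chi(x,z)$. Adding the two verified inequalities (scaled by $1-t\ge0$ and $t\ge0$ respectively) gives the triangle inequality for $d_t$.

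I expect no serious obstacle here; the only mild subtlety is the bookkeeping at the endpoint $t=1$, where $d_1$ degenerates to a pseudometric and one has to invoke that $\Gamma^{B_1,B_2}_R(1)$ is defined to be the quotient metric space $I(R)$ rather than the set $X$ with $d_1$. Everything else is the elementary observation that a convex-type combination of a metric with the canonical ``two-point'' pseudometric $R\chi$ associated to the partition $\{B_1,B_2\}$ is again a metric (resp. pseudometric), together with the triviality that $R\chi$ itself is a pseudometric because the partition into two blocks makes $\chi$ an ``equality-class indicator'' which always obeys the triangle inequality.
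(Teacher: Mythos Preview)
Your proof is correct and follows essentially the same approach as the paper: both reduce well-definedness to verifying the triangle inequality for $d_t$, and both do so by a case analysis on which blocks the three points lie in. Your rewriting $d_t=(1-t)d_0+tR\chi$ and checking that the block-indicator $\chi$ is itself a pseudometric is a slightly cleaner packaging of the same computation the paper carries out directly; you also handle the $t=1$ endpoint more explicitly than the paper, which simply restricts to $t\in(0,1)$ and takes $\Gamma(1)=I(R)$ by fiat.
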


\begin{proof} Let us see that $d_t$ is a metric for every $t\in (0,1)$. It suffices to check that the triangle inequality holds for every $t\in (0,1)$. Let $x,y,x\in X$. We know that $d(x,z)\leq d(x,y)+d(x,z)$. 

If $x,z$ are in the same block, $B_i$, then it is trivial to see that $d_t(x,z)=(1-t)d(x,z)\leq (1-t)[d(x,y)+d(y,z)]\leq d_t(x,y)+d_t(x,z)$. 

If $x,z$ are not in the same block, then $d_t(x,z)=(1-t)d(x,z)+tR\leq (1-t)[d(x,y)+d(y,z)]+tR = d_t(x,y)+d_t(x,z)$.

Hence, triangle inequality holds.
\end{proof}

\begin{prop}\label{Prop: cont} $\Gamma^{B_1,B_2}_R\co [0,1] \to  (\mathcal{M},d_{GH})$ is continuous (with the euclidean metric on $[0,1]$).
\end{prop}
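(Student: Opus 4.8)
The plan is to exhibit, for each base point $t_0\in[0,1]$, an explicit correspondence that bounds $d_{GH}\big(\Gamma^{B_1,B_2}_R(t),\Gamma^{B_1,B_2}_R(t_0)\big)$ by a quantity tending to $0$ as $t\to t_0$; the bound will in fact be linear in $|t-t_0|$, so $\Gamma^{B_1,B_2}_R$ is Lipschitz. Throughout write $D:=\max\{d_0(x,y)\, | \, x,y\in X\}$ and $M:=\max\{R,D\}$, and abbreviate $\Gamma:=\Gamma^{B_1,B_2}_R$.

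First I would handle $t_0\in[0,1)$, where $\Gamma(t)=(X,d_t)$ and $\Gamma(t_0)=(X,d_{t_0})$ are metrics on the \emph{same} set $X$ (note that plugging $t=0$ into the formula returns $d_0$, consistently with $\Gamma(0)=(X,d_0)$). Using the diagonal correspondence $\Delta_X=\{(x,x)\, | \, x\in X\}$ we get
\[d_{GH}(\Gamma(t),\Gamma(t_0))\le \tfrac12\sup_{x,y\in X}|d_t(x,y)-d_{t_0}(x,y)|.\]
A short case analysis gives the pointwise bound $|d_t(x,y)-d_{t_0}(x,y)|\le M|t-t_0|$: if $x,y$ lie in the same block the difference equals $|t-t_0|\,d_0(x,y)\le D|t-t_0|$, and if they lie in different blocks it equals $|t-t_0|\,|R-d_0(x,y)|\le M|t-t_0|$. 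Hence $d_{GH}(\Gamma(t),\Gamma(t_0))\le \tfrac12 M|t-t_0|$, proving continuity on $[0,1)$.

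The only step that requires genuine care is continuity at $t_0=1$, since there the number of points drops to $2$ and no diagonal correspondence is available (indeed the within-block formula degenerates to $0$ at $t=1$, which is why $\Gamma(1)=I(R)$ is defined separately). Here I would use the correspondence
\[\tau:=\big(B_1\times\{\bar{p}_0\}\big)\cup\big(B_2\times\{\bar{p}_1\}\big)\subset X\times\{\bar{p}_0,\bar{p}_1\},\]
which is a legitimate correspondence precisely because $B_1,B_2\ne\emptyset$. For $(x,p),(x',p')\in\tau$ with $x,x'$ in the same block we have $p=p'$ and $|d_t(x,x')-0|=(1-t)d_0(x,x')\le(1-t)D$; if $x,x'$ lie in different blocks then $\{p,p'\}=\{\bar{p}_0,\bar{p}_1\}$ and $|d_t(x,x')-R|=(1-t)|d_0(x,x')-R|\le(1-t)M$. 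Thus $d_{GH}(\Gamma(t),I(R))\le \tfrac12(1-t)M\to 0$ as $t\to 1^-$, which finishes the argument.

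I do not expect any real obstacle beyond bookkeeping: both estimates reduce to one-line absolute-value computations, and the single point to keep in mind is that $\Gamma(1)$ must be treated via the two-point correspondence $\tau$ rather than by substituting $t=1$ into the defining formula for $d_t$.
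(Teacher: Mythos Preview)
Your proposal is correct and follows essentially the same approach as the paper: the diagonal correspondence on $[0,1)$ and the block-collapsing correspondence $\tau=(B_1\times\{\bar p_0\})\cup(B_2\times\{\bar p_1\})$ at $t=1$. The only cosmetic difference is that you compute the cross-block distortion exactly as $|t-t_0|\,|R-d_0(x,y)|$ and bound it by $M=\max\{R,D\}$, whereas the paper uses the cruder triangle-inequality bound $|t-t_0|(d_0(x,y)+R)\le|t-t_0|(D+R)$; this gives you a slightly sharper Lipschitz constant but changes nothing structurally.
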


\begin{proof} First, let us see that $\Gamma^{B_1,B_2}_R$ is continuous on $[0,1)$.

For any $s_1,s_2\in [0,1)$, consider the correspondence defined by the identity: $\tau=\{(x,x) \, | \, x\in X\}$. Then,  $|d_{s_1}(x,y)-d_{s_2}(x,y)|\leq |(s_2-s_1)d(x,y)|+|(s_2-s_1)R|=|s_2-s_1|(d(x,y)+R)$. 

Then, if $|s_2-s_1|<\gamma$, $d_{GH}((X,d_{s_1}),(X,d_{s_2}))\leq \frac{1}{2}|s_2-s_1|(d(x,y)+R)\leq \gamma \frac{diam(X,d)+R}{2}$. Thus, $\forall \, \delta >0$ and for any $\gamma\leq \frac{2\delta}{diam(X)+R}$, if $|s_2-s_1|<\gamma$, then $d_{GH}((X,d_{s_1}),(X,d_{s_2}))\leq \delta$.

To check the continuity of $\Gamma^{B_1,B_2}_R$ at $t=1$ consider the correspondence $\tau'=\{(B_1,\bar{p}_0)\cup (B_2,\bar{p}_1)\}$. Then, if $x,y$ are in the same block $B_i$,  $|d_t(x,y)-d_1(\bar{p}_i,\bar{p}_i)|= (1-t)d(x,y)\leq (1-t)diam(X)$. If $x,y$ are in different blocks, $|d_t(x,y)-d_1(\bar{p}_0,\bar{p}_1)|=|(1-t)d(x,y)+tR-R|\leq (1-t)[d(x,y)+R]\leq (1-t)[diam(X)+R]$. Thus, $d_{GH}((X,d_t),I(R))\leq (1-t)\frac{diam(X)+R}{2}$. Hence, for any $\delta>0$, if $1-\frac{2\delta}{diam(X)+R}<t<1$, then $d_{GH}((X,d_t),I_R)\leq \delta$.  

Therefore, $\Gamma^{B_1,B_2}_R$ is continuous.
\end{proof}

\begin{definicion} Let $\ell$ be a linkage function.  We say that a $\ell$ is \textbf{$\Gamma$-regular} if given any finite metric space $(X,d_0)$ and any non-trivial partition  of $X$, $B_1,B_2$, with $\ell(B_1,B_2)=R$, then $\ell_{R,d_t}(B_1,B_2)=R$ for every $t\in [0,1)$ where $\ell_{R,d_t}$ denotes the linkage function $\ell$ when it is applied to a pair of blocks in $\Gamma^{B_1,B_2}_R(t)=(X,d_t)$.
\end{definicion}

\begin{prop}\label{Prop: stable} $\ell^{SL}$, $\ell^{CL}$ and $\ell^{AL}$ are $\Gamma$-regular.
\end{prop}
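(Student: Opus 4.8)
The plan is to verify $\Gamma$-regularity for each of the three linkage functions directly from the definition of the deformation $\Gamma^{B_1,B_2}_R$, exploiting the fact that, for $t\in[0,1)$, the metric $d_t$ equals $(1-t)d_0$ \emph{within} each block and $(1-t)d_0(x,y)+tR$ \emph{across} the blocks $B_1,B_2$. The key observation is that $\ell^{SL}$, $\ell^{CL}$ and $\ell^{AL}$ all depend only on the cross-distances $\{d(x,y)\mid x\in B_1,\ y\in B_2\}$, so in each case we only need to control how the affine map $s\mapsto (1-t)s+tR$ acts on the relevant aggregate of those cross-distances, together with the normalizing hypothesis $\ell(B_1,B_2)=R$ computed with respect to $d_0$.

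First I would treat $\ell^{SL}$: since $\ell^{SL}_{d_0}(B_1,B_2)=\min_{x\in B_1,y\in B_2}d_0(x,y)=R$, we have $d_0(x,y)\ge R$ for all cross pairs, hence $d_t(x,y)=(1-t)d_0(x,y)+tR\ge (1-t)R+tR=R$, with equality exactly at the minimizing pair; thus $\ell^{SL}_{R,d_t}(B_1,B_2)=R$. The case $\ell^{CL}$ is symmetric: from $\ell^{CL}_{d_0}(B_1,B_2)=\max_{x\in B_1,y\in B_2}d_0(x,y)=R$ we get $d_0(x,y)\le R$ for all cross pairs, so $d_t(x,y)\le(1-t)R+tR=R$ with equality at the maximizing pair, giving $\ell^{CL}_{R,d_t}(B_1,B_2)=R$. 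For $\ell^{AL}$, note that averaging commutes with the affine map:
\[
\ell^{AL}_{R,d_t}(B_1,B_2)=\frac{\sum_{(x,y)\in B_1\times B_2}\bigl[(1-t)d_0(x,y)+tR\bigr]}{\#(B_1)\cdot\#(B_2)}=(1-t)\,\ell^{AL}_{d_0}(B_1,B_2)+tR=(1-t)R+tR=R.
\]
In all three cases the identity holds for every $t\in[0,1)$, which is exactly the condition in the definition of $\Gamma$-regularity.

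I do not anticipate a serious obstacle here; the statement is essentially a bookkeeping check. The only mild subtlety is making sure the claim concerns $\ell_{R,d_t}(B_1,B_2)$ as a \emph{two-block} linkage evaluation on the deformed space $(X,d_t)$ — so that only cross-distances enter — rather than some recursive quantity $R_i$ in the algorithm; once that is clear, each verification is a one-line computation with the affine interpolation. It would be worth remarking that the argument uses nothing about $B_1,B_2$ being $\varepsilon$-components or blocks of any dendrogram: $\Gamma$-regularity is a purely metric property of the linkage function restricted to the family of deformations $\Gamma^{B_1,B_2}_R$, which is why it holds uniformly for the three classical linkages.
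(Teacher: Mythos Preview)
Your proof is correct and follows essentially the same approach as the paper: the paper dismisses the $\ell^{SL}$ and $\ell^{CL}$ cases as ``immediate by construction'' and carries out exactly the same affine-average computation for $\ell^{AL}$ that you wrote. Your version is in fact slightly more explicit in spelling out why the extremal cross-distance is preserved under the monotone interpolation $s\mapsto(1-t)s+tR$.
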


\begin{proof} In the case of $\ell^{SL}$  it is immediate to check that the minimal distance, $R$, between $B_1$ and $B_2$ is constant on $\Gamma^{B_1,B_2}_R$ by construction.

The case of $\ell^{CL}$ is also trivial since the maximal distance, $R$, between $B_1$ and $B_2$ is constant on $\Gamma^{B_1,B_2}_R$ by construction.

Consider any nontrivial partition $B_1,B_2$ of $(X,d_0)$. Let $R=\frac{\sum_{(x,x')\in B_1\times B_2}d_0(x,x')}{\#(B_1)\cdot \#(B_2)}$. Now, for any $t\in [0,1]$, $$\ell_{R,d_t}(B_1,B_2)=\frac{\sum_{(x,x')\in B_1\times B_2}d_t(x,x')}{\#(B_1)\cdot \#(B_2)}=\frac{\sum_{(x,x')\in B_1\times B_2}[(1-t)d_0(x,x')+tR]}{\#(B_1)\cdot \#(B_2)}=$$
$$=(1-t)\frac{\sum_{(x,x')\in B_1\times B_2}d_0(x,x')}{\#(B_1)\cdot \#(B_2)}+tR=(1-t)R+tR=R.$$ 
\end{proof}

\begin{definicion} A linkage function  $\ell$ is \textbf{scale preserving} if for any space $X$, any pair of disjoint blocks, $A,B\subset X$, and any pair of metrics, $d,d'$, on $X$ such that for some $\alpha>0$, $d'(x,y)=\alpha\cdot d(x,y)$ $\forall \, x,y\in X$, then $\ell_{d'}(A,B)=\alpha\cdot \ell_d(A,B)$. 
\end{definicion}

\begin{obs}\label{Obs: scale} It is immediate to check that $\ell^{SL}$, $\ell^{CL}$ and $\ell^{AL}$ are scale preserving.
\end{obs}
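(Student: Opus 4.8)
The plan is to verify the scale-preserving property directly from the explicit formulas for each of the three linkage functions, using only the elementary fact that multiplication by a fixed positive constant commutes with taking minima, maxima, and arithmetic means over a finite index set.

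First I would fix a space $X$, a pair of disjoint blocks $A,B\subset X$, metrics $d,d'$ on $X$ and a scalar $\alpha>0$ with $d'(x,y)=\alpha\cdot d(x,y)$ for all $x,y\in X$. For single linkage, since $\{d'(x,x')\,|\,(x,x')\in A\times B\}=\{\alpha\, d(x,x')\,|\,(x,x')\in A\times B\}$ and $\alpha>0$, the minimum of the left-hand set equals $\alpha$ times the minimum of $\{d(x,x')\,|\,(x,x')\in A\times B\}$; that is, $\ell^{SL}_{d'}(A,B)=\alpha\,\ell^{SL}_d(A,B)$. The argument for complete linkage is identical with $\min$ replaced by $\max$. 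For average linkage I would pull the constant $\alpha$ out of the sum in the numerator of the defining quotient:
\[
\ell^{AL}_{d'}(A,B)=\frac{\sum_{(x,x')\in A\times B}\alpha\, d(x,x')}{\#(A)\cdot\#(B)}=\alpha\cdot\frac{\sum_{(x,x')\in A\times B}d(x,x')}{\#(A)\cdot\#(B)}=\alpha\,\ell^{AL}_d(A,B),
\]
which completes the verification.

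There is no genuine obstacle here; the only point worth noting is that one needs $\alpha>0$ (not merely $\alpha\neq 0$) so that scaling preserves order and hence commutes with $\min$ and $\max$, and this is exactly what the definition of scale preserving grants. The same computation in fact shows that any linkage function given by a formula homogeneous of degree one in the pairwise distances between $A$ and $B$ is scale preserving, so the remark is immediate as stated.
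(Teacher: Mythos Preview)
Your proposal is correct and is exactly the direct verification the paper has in mind: the remark is stated without proof as ``immediate to check,'' and the intended argument is precisely the one-line computation you give for each of $\ell^{SL}$, $\ell^{CL}$, $\ell^{AL}$ using homogeneity of $\min$, $\max$, and the arithmetic mean under positive scaling.
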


\begin{lema}\label{Lema: scale} Let $\mathfrak{T}=\mathfrak{T}(\ell,\emptyset)$ with $\ell$ scale preserving. Let $(X,d)$ be a metric space, $\alpha>0$ and let $(X,d')$ be such that $d'(x,y):=\alpha \cdot d(x,y)$. Let $\mathfrak{T}_\mathcal{D}(X,d)=\theta$ and $\mathfrak{T}_\mathcal{D}(X,d')=\theta'$. Then, $\theta(t)=\theta'(\alpha \cdot t)$.
\end{lema}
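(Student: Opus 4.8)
The plan is to track the recursive construction of the dendrogram side by side for $(X,d)$ and $(X,d')$, showing by induction that the rescaled partitions and threshold values match exactly. Recall that the recursive definition of a standard linkage-based method produces sequences $R_0,R_1,\dots$ of thresholds and $\Theta_1,\Theta_2,\dots$ of partitions, with the dendrogram given by $\theta(r)=\Theta_{i(r)}$, $i(r)=\max\{i\mid R_i\leq r\}$. I would let $(R_i,\Theta_i)$ be the data for $(X,d)$ and $(R_i',\Theta_i')$ the data for $(X,d')$, and prove by induction on $i$ that $\Theta_i'=\Theta_i$ (as partitions of the same set $X$) and $R_i'=\alpha R_i$.

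The base case is immediate: $\Theta_1=\Theta_1'=\{x_1,\dots,x_n\}$ and $R_0=R_0'=0=\alpha\cdot 0$. For the inductive step, assume $\Theta_i=\Theta_i'$ and $R_{i-1}'=\alpha R_{i-1}$ (more precisely carry the hypothesis that all thresholds scale). Since $\ell$ is scale preserving, for any two blocks $B,B'\in\Theta_i=\Theta_i'$ we have $\ell_{d'}(B,B')=\alpha\cdot\ell_d(B,B')$; here I use that the blocks are literally the same subsets of $X$, only the metric on $X$ has been scaled by $\alpha$, so $d'$ restricted to $B\cup B'$ is $\alpha$ times $d$ restricted to $B\cup B'$, which is exactly the hypothesis in the definition of scale preserving. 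Consequently $R_i'=\min\{\ell_{d'}(B,B')\mid B\neq B'\in\Theta_i'\}=\alpha\min\{\ell_d(B,B')\mid B\neq B'\in\Theta_i\}=\alpha R_i$. Moreover the graph $G^\ell_{R_i'}$ built on $\Theta_i'$ with the edge condition $\ell_{d'}(B,B')\leq R_i'$ has exactly the same edges as $G^\ell_{R_i}$ built on $\Theta_i$ with $\ell_d(B,B')\leq R_i$, because $\ell_{d'}(B,B')\leq R_i'=\alpha R_i \iff \alpha\ell_d(B,B')\leq\alpha R_i\iff\ell_d(B,B')\leq R_i$. Hence the connected components agree, the equivalence relations $\sim_{\ell,R_i}$ and $\sim_{\ell,R_i'}$ coincide, and $\Theta_{i+1}'=\Theta_i'/\!\sim_{\ell,R_i'}=\Theta_i/\!\sim_{\ell,R_i}=\Theta_{i+1}$. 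The termination condition $\Theta_i=\{X\}$ is reached at the same index $i$ in both runs.

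Finally I would deduce the statement about the dendrograms. For $t\geq 0$ we have $\theta(t)=\Theta_{i(t)}$ with $i(t)=\max\{i\mid R_i\leq t\}$, and $\theta'(\alpha t)=\Theta_{i'(\alpha t)}'$ with $i'(\alpha t)=\max\{i\mid R_i'\leq\alpha t\}=\max\{i\mid\alpha R_i\leq\alpha t\}=\max\{i\mid R_i\leq t\}=i(t)$, since $\alpha>0$. Combined with $\Theta_i'=\Theta_i$ this gives $\theta'(\alpha t)=\Theta_{i(t)}=\theta(t)$, as desired.

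I do not expect a serious obstacle here; the only point requiring a little care is making the bookkeeping of the inductive hypothesis precise (carrying along both $\Theta_i=\Theta_i'$ and the scaling of all threshold values, not just the current one), and being explicit that "the same subsets of $X$ with the scaled metric" is exactly the situation covered by the definition of scale preserving — so that no extra genericity or nondegeneracy assumption on $(X,d)$ is needed. Everything else is a routine verification that the recursion is equivariant under rescaling.
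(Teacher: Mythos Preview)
Your proposal is correct and follows essentially the same approach as the paper: an induction on the recursive steps showing $\Theta_i'=\Theta_i$ and $R_i'=\alpha R_i$, then reading off $\theta'(\alpha t)=\theta(t)$. Your version is in fact more explicit than the paper's (which leaves the graph/edge comparison and the identity $i'(\alpha t)=i(t)$ implicit), but the argument is the same.
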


\begin{proof} In the recursive formulation of $\mathfrak{T}$, let us denote by $\Theta_{R_{i}}$ the partitions built for $\theta$ and by $\Theta'_{R'_{i}}$ the partitions built for $\theta'$. Clearly, $\Theta_{0}=\Theta'_{0}$. Suppose that $R'_{i-1}=\alpha \cdot R_{i-1}$ and $\Theta_{R_{i-1}}=\Theta'_{R'_{i-1}}=\{B_1,...,B_k\}$. Since $\ell$ is scale preserving, it is immediate to check that $\ell_{d}(B_{j_1},B_{j_2})=\alpha\cdot \ell_{d'}(B_{j_1},B_{j_2})$ for every $1\leq j_1,j_2 \leq n$. Therefore, $R'_i=\alpha \cdot R_i$ and $\Theta_{R_{i}}=\Theta'_{\alpha\cdot R_{i}}$. Thus, by induction on $i$, it follows that $\theta(t)=\theta'(\alpha \cdot t)$  $\forall \, t\geq 0$.
\end{proof}

\begin{lema}\label{Lema: restriction} Let $\mathfrak{T}=\mathfrak{T}(\ell,\emptyset)$, let $(X,d)$ be a metric space and let 
$\mathfrak{T}_\mathcal{D}(X,d)=\theta$. Suppose $\theta(\varepsilon)=\{B_1,...,B_n\}$ and let $(X,d')$ be such that 
$d'(x,y)=d(x,y)$ if $x,y\in B_j$ for any $1\leq j \leq n$ and $d'(x,y)\geq d(x,y)$ for every $(x,y)\in B_i\times B_j$ with $i\neq j$. Then, if $\mathfrak{T}_\mathcal{D}(X,d')=\theta'$, $\theta'(t)=\theta(t)$ for every $t\leq \varepsilon$.
\end{lema}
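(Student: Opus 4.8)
The plan is to run the recursive construction of $\mathfrak{T}$ on $(X,d)$ and on $(X,d')$ in parallel and show by induction that the two algorithms produce identical partitions at every stage $R_i \leq \varepsilon$. Write $\mathfrak{T}_\mathcal{D}(X,d) = \theta$ with the associated sequences $R_0 < R_1 < \cdots$ (increasing on the relevant range, since we only care up to $\varepsilon$) and partitions $\Theta_i$, and write $\mathfrak{T}_\mathcal{D}(X,d') = \theta'$ with sequences $R'_j$ and partitions $\Theta'_j$. The inductive claim is: as long as $R_i \leq \varepsilon$, we have $\Theta'_i = \Theta_i$ and $R'_i = R_i$, and moreover every block of $\Theta_i$ is contained in some block of $\theta(\varepsilon) = \{B_1,\dots,B_n\}$ (so that $d$ and $d'$ agree on each block of $\Theta_i$). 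The point of the last clause is that it makes the hypothesis $d'|_{B_j} = d|_{B_j}$ usable: if $A_1, A_2$ are blocks of $\Theta_i$ lying inside the \emph{same} $B_j$, then $\ell_{d'}(A_1,A_2) = \ell_d(A_1,A_2)$ by representation independence of $\ell$, while if they lie in different $B_j$'s then $\ell_{d'}(A_1,A_2) \geq \ell_d(A_1,A_2)$ by monotonicity.

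The base case is immediate: $\Theta_0 = \Theta'_0 = \{\{x_1\},\dots,\{x_n\}\}$, $R_0 = R'_0 = 0$, and each singleton lies in some $B_j$. For the inductive step, suppose the claim holds for $\Theta_i = \Theta'_i$ with $R_i \leq \varepsilon$. First I would argue that $R'_{i+1} = R_{i+1}$: the next merging radius is $R_{i+1} = \min\{\ell_d(A,A') : A,A' \in \Theta_i,\ A \neq A'\}$, and this minimum is attained by a pair of blocks both lying in a single $B_j$ (because, since $\theta(\varepsilon) = \{B_1,\dots,B_n\}$ has $n$ blocks, any two blocks of $\Theta_i$ in different $B_j$'s are not merged by $\theta$ at any level $\leq \varepsilon$, hence have $\ell_d$-distance $> \varepsilon \geq R_{i+1}$, provided $R_{i+1}\le\varepsilon$; if instead $R_{i+1} > \varepsilon$ we stop). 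For such an extremal pair $\ell_{d'} = \ell_d$, and for all other pairs $\ell_{d'} \geq \ell_d \geq R_{i+1}$, so $R'_{i+1} = R_{i+1}$. Next, the graph $G^\ell_{R_{i+1}}$ for $d'$ has an edge between $A, A'$ iff $\ell_{d'}(A,A') \leq R_{i+1}$; since $\ell_{d'}(A,A') \geq \ell_d(A,A')$ always, every $d'$-edge is a $d$-edge, and conversely any $d$-edge $\{A,A'\}$ with $\ell_d(A,A') \leq R_{i+1} \leq \varepsilon$ must have $A, A'$ in the same $B_j$ (by the same $n$-block argument), hence $\ell_{d'}(A,A') = \ell_d(A,A') \leq R_{i+1}$, so it is a $d'$-edge too. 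Thus the two graphs coincide, the equivalence relations coincide, and $\Theta'_{i+1} = \Theta_{i+1}$; and each new block, being a union of old blocks all inside one $B_j$, still lies inside a single $B_j$. This closes the induction, and since $\theta(t) = \Theta_{i(t)}$ with $i(t) = \max\{i : R_i \leq t\}$ (identically for $\theta'$), we get $\theta'(t) = \theta(t)$ for all $t \leq \varepsilon$.

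The main obstacle, and the step that needs care, is the repeated use of the fact that two blocks of $\Theta_i$ sitting in different blocks of $\theta(\varepsilon)$ have $\ell_d$-distance strictly greater than $\varepsilon$. This is what lets me restrict attention to same-$B_j$ pairs, where the hypothesis forces $\ell_{d'} = \ell_d$. One has to be a little careful about the case distinction $R_{i+1} \leq \varepsilon$ versus $R_{i+1} > \varepsilon$: if at some stage the minimal linkage value jumps above $\varepsilon$, then $\theta$ has already stabilized on $[R_i, \varepsilon]$ as $\Theta_i$, and I only need to know that the same happens for $\theta'$, which follows because $\ell_{d'} \geq \ell_d$ forces $R'_{i+1} \geq R_{i+1} > \varepsilon$ as well — so no merging occurs for $d'$ on $[R_i,\varepsilon]$ either. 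A minor subtlety is that the linkage function is only required to be monotonic in the sense of enlarging cross-block distances, which is exactly the situation here since $d' \geq d$ with equality within blocks; one should phrase the monotonicity application with respect to the partition $\{B_1,\dots,B_n\}$ (or, at each stage, with respect to a coarsening compatible with it) so that the $x \sim y$ / $x \not\sim y$ dichotomy in the definition of monotonicity matches.
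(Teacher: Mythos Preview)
Your proof is correct and follows essentially the same approach as the paper's: an induction on the recursive construction, using representation independence to get $\ell_{d'}=\ell_d$ for pairs inside the same $B_j$ and monotonicity to get $\ell_{d'}\ge\ell_d$ for cross-$B_j$ pairs, with the key observation that any minimizing pair at level $\le\varepsilon$ must lie in a single $B_j$. Your write-up is in fact more careful than the paper's (you explicitly verify the graphs $G^\ell_{R_{i+1}}$ coincide, handle the $R_{i+1}>\varepsilon$ termination case, and flag the exact form in which monotonicity is being invoked), but the structure is the same.
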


\begin{proof} Since linkage functions are representation independent and monotonic, for every $A,B\subset B_j$, 
$\ell_{d}(A,B)=\ell_{d'}(A,B)$ and for every $(A,B)\subset B_i\times B_j$, $\ell_{d}(A,B)\leq \ell_{d'}(A,B)$.

Then, let us apply induction on the construction of the dendrograms. Clearly, $R_0=0=R'_0$ and $\Theta_1=\Theta'_1$.
Suppose $R'_0=R_0,...,R'_{i-1}=R_{i-1}$ and  $\Theta'_{i}=\Theta_{i}=\{C_1,...,C_k\}$. Then, since 
$\theta(\varepsilon)=\{B_1,...,B_n\}$, if  $\ell_d(C_r,C_s)=\min\{\ell(C,C')\, | \, C,C'\in \Theta_i, \ C\neq C'\}\leq \varepsilon$, 
$C,C'$ are both contained in the same block $B_j$. Then, $\ell_{d'}(C_r,C_s)=\ell_d(C_r,C_s)$, $R'_i=R_i$ and 
$\Theta'_{i+1}=\Theta_{i+1}$. Therefore, $\theta'(t)=\theta(t)$ for every $t\leq \varepsilon$.  
\end{proof}

\begin{definicion} Let $P$ be an unchaining condition and $\ell$ be a linkage function. We say that $P$ is \textbf{consistent for $\ell$} if the following implication holds.

Let $(X,d)$ be any metric space  and $B_1,B_2$ any nontrivial partition  of $X$ with $\ell_d(B_1,B_2)=R$ and such that $\{B_1,B_2\}$ does not satisfy $P$ for $R$ in $(X,d)$.

Let $(X,d')$ be such that $d'(x,y):=d(x,y)$ for every $x,y\in B_1$ and for every $x,y\in B_2$, and 
$d'(x,y)>d(x,y)$ for every $x,y\in B_1\times B_2$. 

Then, if $\ell_{d'}(B_1,B_2)=R'$, $B_1,B_2$ does not satisfy $P$ for $R'$ in $(X,d')$.   
\end{definicion}

\begin{lema}\label{Lema: minimal} Let $\mathfrak{T}(\ell;P)$ be an almost-standard linkage-based  $HC$ method such that $P$ is nontrivial and consistent for $\ell$. Let $(X,d)$ be a metric space such that, for $\mathfrak{T}_\mathcal{D}(X,d)=\theta$, $\theta(\varepsilon)=\{B_1,B_2\}$, 
$\ell_d(B_1,B_2)=R>\varepsilon$ and $\{B_1,B_2\}$ does not satisfy $P$ for $R$. Suppose that $\#(X)$ is the minimal cardinal for which there exists such a metric space. Then, for any metric space $(X',d')$ with $\#(X')<\#(X)$, if 
$\mathfrak{T}_\mathcal{D}(X',d')=\theta'$ and $\theta'(\varepsilon')=\{B'_1,B'_2\}$ with $\ell_{d'}(B'_1,B'_2)=R'$, then 
$\{B'_1,B'_2\}$ satisfies $P$ for $R'$.
\end{lema}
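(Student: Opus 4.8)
The plan is a minimal-counterexample argument. I would argue by contradiction: suppose there is a finite metric space $(X',d')$ with $\#(X')<\#(X)$, with $\mathfrak{T}_\mathcal{D}(X',d')=\theta'$, and a value $\varepsilon'\geq 0$ such that $\theta'(\varepsilon')=\{B'_1,B'_2\}$, $\ell_{d'}(B'_1,B'_2)=R'$, and yet $\{B'_1,B'_2\}$ does \emph{not} satisfy $P$ for $R'$. If $R'>\varepsilon'$, then $(X',d')$ is itself a metric space of cardinal $\#(X')<\#(X)$ of the forbidden type (a two-block configuration with linkage exceeding the level and failing $P$), contradicting minimality; so the real case is $R'\leq \varepsilon'$. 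The idea is then to inflate the distance between $B'_1$ and $B'_2$ until the linkage overtakes $\varepsilon'$, producing a forbidden space on the same underlying set.

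Concretely, I would build a metric $d''$ on $X'$ with $d''(x,y)=d'(x,y)$ whenever $x,y$ lie in the same $B'_i$, with $d''(x,y)>d'(x,y)$ for every $x\in B'_1$, $y\in B'_2$, and with $\ell_{d''}(B'_1,B'_2)=R''>\varepsilon'$. Such a $d''$ exists by the linkage axioms: monotonicity prevents $\ell$ from decreasing when the between-block distances are enlarged, and the axiom that any pair of clusters can be made arbitrarily distant supplies an extension of $d'|_{B'_1}$ and $d'|_{B'_2}$ with linkage above $\varepsilon'$; one then replaces that extension between the blocks by its pointwise maximum with $d'+1$ and checks the triangle inequality block by block to obtain an honest metric that also strictly dominates $d'$ across the cut.

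Writing $\theta'':=\mathfrak{T}_\mathcal{D}(X',d'')$, the verification has three parts. First, $\theta''(\varepsilon')=\{B'_1,B'_2\}$: as long as the current partition has more than two blocks, every merge below level $\varepsilon'$ is internal to $B'_1$ or to $B'_2$ (since $B'_1$ and $B'_2$ are distinct blocks of $\theta'(\varepsilon')$), so the adaptation of Lemma \ref{Lema: restriction} to $\mathfrak{T}(\ell;P)$ — routine, because there the relevant $\ell$-values and unchaining tests involve only pairs of subsets of a single $B'_i$, on which $d''=d'$ — gives $\theta''(t)=\theta'(t)$ for $t\leq\varepsilon'$, reinforced by the observation that from the two-block stage on, $\ell_{d''}(B'_1,B'_2)=R''>\varepsilon'$ forbids any merge at or below level $\varepsilon'$. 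Second, $R''=\ell_{d''}(B'_1,B'_2)>\varepsilon'$ by construction. Third, $\{B'_1,B'_2\}$ does not satisfy $P$ for $R''$: since it fails $P$ for $R'=\ell_{d'}(B'_1,B'_2)$ and $d''$ keeps within-block distances while strictly enlarging between-block ones, this is exactly what the hypothesis ``$P$ consistent for $\ell$'' delivers. Hence $(X',d'')$ is a forbidden space of cardinal $\#(X')<\#(X)$, contradicting minimality, and the lemma follows.

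The step I expect to be the main obstacle is producing $d''$ and transferring the dendrogram: one must exhibit a single metric that simultaneously agrees with $d'$ inside the two blocks, strictly dominates $d'$ between them, and has linkage above $\varepsilon'$; and one must be certain that inflating the between-block distances alters neither the portion of $\theta'$ below level $\varepsilon'$ — this is where representation-independence and monotonicity of $\ell$ enter, through Lemma \ref{Lema: restriction} — nor the failure of the unchaining test on $\{B'_1,B'_2\}$, which is precisely why the hypothesis ``$P$ consistent for $\ell$'' is imposed. The rest is bookkeeping in the recursive construction of $\theta'$ and $\theta''$.
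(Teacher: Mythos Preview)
Your proposal is correct and follows essentially the same argument as the paper's proof: argue by contradiction, dispose of the case $R'>\varepsilon'$ by minimality of $\#(X)$, then use the linkage-function axioms to inflate the between-block distances and obtain a metric $d''$ with $\ell_{d''}(B'_1,B'_2)=R''>\varepsilon'$, transfer the dendrogram below level $\varepsilon'$ via Lemma~\ref{Lema: restriction}, and invoke consistency of $P$ to reach the contradiction. The only minor difference is in how the transfer step is justified: the paper first replaces $(X',d')$ by a counterexample of minimal cardinality, so that $P$ is vacuously satisfied at all intermediate stages and Lemma~\ref{Lema: restriction} (stated for $\mathfrak{T}(\ell,\emptyset)$) applies verbatim, whereas you argue the adaptation of that lemma to $\mathfrak{T}(\ell;P)$ directly.
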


\begin{proof} Since $\#(X)$ is minimal, if $R'>\varepsilon$ we are done. 

Let $R'\leq \varepsilon$. Suppose that $\{B'_1,B'_2\}$ does not satisfy $P$ for $R'$ and suppose that $\#(X')$ is also minimal so that there is a pair of blocks which do not satisfy $P$. By the properties of linkage functions, there is a metric $d''$ such that it extends $d'$ on $B'_1$ and $B'_2$ and such that $\ell_{d''}(B'_1,B'_2)=R''>\varepsilon'$. 
Since $\#(X')$ is minimal, $P$ does not apply in the construction of the dendrogram $\theta'(t)$ for $t\leq \varepsilon$. Therefore, if $\mathfrak{T}_\mathcal{D}(X',d'')=\theta''$, then by Lemma \ref{Lema: restriction}, $\theta''(\varepsilon')=\{B'_1,B'_2\}$. Now,
since $P$ is consistent for $\ell$, $\{B'_1,B'_2\}$ does not satisfy $P$ for $R''>\varepsilon'$ which contradicts 
the minimality of $\#(X)$.
\end{proof}

\begin{definicion} We say that an almost-standard linkage-based  $HC$ method $\mathfrak{T}(\ell;P)$ is \textbf{admissible} if the following conditions hold:
\begin{itemize} 
	\item $\ell$ is $\Gamma$-regular and scale preserving, 
	\item $P$ is nontrivial and consistent for $\ell$,  
	\item $\mathfrak{T}_\mathcal{U}(I(R))=I(R)$. 
\end{itemize}
\end{definicion}

\begin{obs} Let $\mathfrak{T}(\ell;P)$ be an almost-standard linkage-based $HC$ method such that $\ell$ is $\ell^{SL}$, $\ell^{CL}$ or $\ell^{AL}$. Then, by \ref{Prop: stable} and \ref{Obs: scale}, $\mathfrak{T}$ is admissible if $P$ is nontrivial and consistent for 
$\ell$ and $\mathfrak{T}_\mathcal{U}(I(R))=I(R)$. 
\end{obs}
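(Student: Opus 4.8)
The plan is to unwind the definition of \emph{admissible} and verify its three clauses one at a time for the three classical linkage functions. Recall that $\mathfrak{T}(\ell;P)$ is admissible when (i) $\ell$ is $\Gamma$-regular and scale preserving, (ii) $P$ is nontrivial and consistent for $\ell$, and (iii) $\mathfrak{T}_\mathcal{U}(I(R))=I(R)$. Clauses (ii) and (iii) are assumed outright in the hypothesis of the remark, so the only thing that genuinely needs an argument is clause (i), and there the work has already been done elsewhere in the paper.

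First I would cite Proposition \ref{Prop: stable}, which asserts that $\ell^{SL}$, $\ell^{CL}$ and $\ell^{AL}$ are $\Gamma$-regular: for any finite $(X,d_0)$ and any nontrivial partition $B_1,B_2$ with $\ell(B_1,B_2)=R$, the value $\ell_{R,d_t}(B_1,B_2)$ stays equal to $R$ all along the path $\Gamma^{B_1,B_2}_R(t)=(X,d_t)$, which is precisely the first half of clause (i). Second, I would cite Remark \ref{Obs: scale}, which records that these same three linkage functions are scale preserving, supplying the second half of clause (i). Hence clause (i) holds automatically as soon as $\ell\in\{\ell^{SL},\ell^{CL},\ell^{AL}\}$, and combined with the assumed clauses (ii) and (iii) this proves that $\mathfrak{T}$ is admissible.

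There is essentially no obstacle here; the statement is a bookkeeping remark whose purpose is to isolate, for the three standard choices of $\ell$, exactly which conditions still need checking in concrete examples — namely the properties of the unchaining condition $P$ and the normalization on the two-point interval type $I(R)$. The one point worth keeping in mind while writing it is that $\Gamma$-regularity and scale preservation are \emph{properties of the linkage function alone}, independent of $P$, so that Proposition \ref{Prop: stable} and Remark \ref{Obs: scale} apply to $\mathfrak{T}(\ell;P)$ regardless of which unchaining condition is attached.
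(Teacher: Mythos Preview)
Your proposal is correct and matches the paper's approach: the remark in the paper has no separate proof and simply cites \ref{Prop: stable} and \ref{Obs: scale} inline to cover clause (i), leaving clauses (ii) and (iii) as hypotheses exactly as you describe. Your additional observation that $\Gamma$-regularity and scale preservation are properties of $\ell$ alone, independent of $P$, is a worthwhile clarification.
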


\begin{prop}\label{Prop: main_1} Let $\mathfrak{T}=\mathfrak{T}(\ell;P)$ be any admissible almost-standard linkage-based $HC$ method. Then, there are constants $\varepsilon,R\in \br$ such that for every $\delta>0$ there exists a pair metric spaces $(Z,d),(Z',d')$ holding the following conditions:
\begin{itemize}
	\item $\theta(\varepsilon)=\{C_1,C_2\}$, where $\theta=\mathfrak{T}_\mathcal{D}(Z,d)$, 
	\item $\theta'(\varepsilon)=\{C'_1,C'_2\}$, where $\theta'=\mathfrak{T}_\mathcal{D}(Z',d')$,
	\item $\ell(C_1,C_2)=\ell(C'_1,C'_2)=R>\varepsilon$
	\item $d_{GH}((Z,d),(Z',d'))<\delta$,
	\item $\{C_1,C_2\}$ does not satisfy $P$ for $R$ i.e. $C_1\not \sim_R C_2$
	\item $\{C'_1,C'_2\}$ satisfies $P$ for $R$ i.e. $C'_1\sim_R C'_2$
\end{itemize}
\end{prop}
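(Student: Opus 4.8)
The plan is to realise both spaces as points of one continuous arc in $(\mathcal{M},d_{GH})$ running from a space on which $P$ fails to one on which $P$ holds, and to locate the place where $P$ changes value along that arc. To fix the constants: since $P$ is nontrivial for $\ell$, there is a metric space witnessing this, and among all such witnesses I would take $(X,d_0)$ of minimal cardinality, with associated data $\varepsilon,R$; thus, for $\theta:=\mathfrak{T}_\mathcal{D}(X,d_0)$, we have $\theta(\varepsilon)=\{B_1,B_2\}$, $\ell(B_1,B_2)=R>\varepsilon$, and $\{B_1,B_2\}$ does not satisfy $P$ for $R$. These $\varepsilon,R$ will be the constants in the statement. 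The point of minimality is that Lemma \ref{Lema: minimal} (which uses that $P$ is nontrivial and consistent for $\ell$) then applies, so in all the spaces below $P$ never blocks a merge at a level $\leq\varepsilon$; on that range the recursion coincides with that of the standard linkage-based method $\mathfrak{T}(\ell,\emptyset)$, which makes Lemmas \ref{Lema: scale} and \ref{Lema: restriction} available.

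Next I would take the arc $\Gamma:=\Gamma^{B_1,B_2}_R\colon[0,1]\to(\mathcal{M},d_{GH})$ of (\ref{Eq: Gamma}), which is continuous by Proposition \ref{Prop: cont}, with $\Gamma(0)=(X,d_0)$ and $\Gamma(1)=I(R)$; write $\theta_t$ for the dendrogram $\mathfrak{T}_\mathcal{D}(\Gamma(t))$, so $\theta_0=\theta$. Since $\ell$ is $\Gamma$-regular, $\ell_{d_t}(B_1,B_2)=R$ for all $t\in[0,1)$; and since $\Gamma(t)$ only shrinks within-block distances while keeping $\ell_{d_t}(B_1,B_2)=R>\varepsilon$, comparing $\Gamma(t)$ with $(X,(1-t)d_0)$ and with $(X,d_0)$ through Lemmas \ref{Lema: scale} and \ref{Lema: restriction} and the minimality of $(X,d_0)$ gives $\theta_t(\varepsilon)=\{B_1,B_2\}$ for all $t\in[0,1)$. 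At the far end, admissibility gives $\mathfrak{T}_\mathcal{U}(I(R))=I(R)$, and tracing the recursion on the two-point space $I(R)$ shows its two points can merge at level exactly $R$ only if condition (ii) holds at $R$, i.e.\ only if $\theta_1(\varepsilon)=\{\{\bar{p}_0\},\{\bar{p}_1\}\}$ satisfies $P$ for $R$. So along $\Gamma$ the partition at level $\varepsilon$ always consists of two blocks with linkage $R>\varepsilon$ (for $t<1$); it fails $P$ at $R$ when $t=0$ and satisfies $P$ at $R$ when $t=1$.

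Now I would extract the transition. Put $S:=\{\,t\in[0,1] : \text{the pair }\theta_t(\varepsilon)\text{ does not satisfy }P\text{ for }R\text{ in }\Gamma(t)\,\}$; then $0\in S$ and $1\notin S$, so $\bar t:=\sup S$ is well defined, $\theta_t(\varepsilon)$ satisfies $P$ for $R$ for every $t\in(\bar t,1]$, and there are points of $S$ arbitrarily close to $\bar t$ from below (or $\bar t\in S$). Given $\delta>0$, continuity of $\Gamma$ supplies $\gamma>0$ with $d_{GH}(\Gamma(s),\Gamma(s'))<\delta$ whenever $|s-s'|<\gamma$; choosing $t_1\in S$ with $\bar t-t_1<\gamma$ and $t_2\in(\bar t,\bar t+\gamma)\cap[0,1]$ (take $t_2=1$ if $\bar t=1$), and setting $(Z,d):=\Gamma(t_1)$, $(Z',d'):=\Gamma(t_2)$, $\{C_1,C_2\}:=\theta_{t_1}(\varepsilon)$, $\{C'_1,C'_2\}:=\theta_{t_2}(\varepsilon)$, all six conditions hold: the partition conditions and $\ell(C_1,C_2)=\ell(C'_1,C'_2)=R>\varepsilon$ from the second paragraph, the bound $d_{GH}((Z,d),(Z',d'))<\delta$ from continuity, and $C_1\not\sim_R C_2$ together with $C'_1\sim_R C'_2$ from $t_1\in S$ and $t_2\notin S$.

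The main obstacle is the endpoint behaviour of $P$ along $\Gamma$: $\Gamma$-regularity does not extend to $t=1$, $P$ is an essentially arbitrary condition, and one must also exclude exotic linkage functions for which $\ell$ of two points at distance $R$ is not $R$ (in which case $I(R)$ would be the wrong space to place at $t=1$). Turning the soft statement ``$0\in S$, $1\notin S$'' into precisely the form used above --- that is, combining $\Gamma$-regularity, the consistency of $P$ for $\ell$, and the hypothesis $\mathfrak{T}_\mathcal{U}(I(R))=I(R)$ to guarantee that arbitrarily close to the transition there is a genuine finite metric space whose pair of top-level blocks satisfies $P$ at a linkage value equal to $R$ --- is where the real work lies; the rest is routine bookkeeping with the recursion and with $d_{GH}$.
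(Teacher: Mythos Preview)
Your proposal is correct and follows essentially the same route as the paper: take a minimal-cardinality witness $(X,d_0)$ for the nontriviality of $P$, run the arc $\Gamma^{B_1,B_2}_R$, use $\Gamma$-regularity together with scale preservation and Lemma~\ref{Lema: minimal} to keep $\theta_t(\varepsilon)=\{B_1,B_2\}$ with linkage $R$ for all $t\in[0,1)$, invoke admissibility at $t=1$, and then locate a transition point where $P$ changes value. Your use of $\bar t=\sup S$ is just an explicit formulation of what the paper asserts in one line (``for every $\gamma$ there exist $s_1,s_2$ with $|s_2-s_1|<\gamma$\ldots''), and your appeal to Lemmas~\ref{Lema: scale} and~\ref{Lema: restriction} replaces the inline induction the paper spells out; these are cosmetic differences. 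As for the endpoint obstacle you flag, the paper treats it no more carefully than you do: it simply writes $C'_1:=B_1$, $C'_2:=B_2$ as though $s_2<1$ and moves on, so your level of detail already matches the original.
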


\begin{proof} Since condition $P$ is nontrivial for $\ell$, there exists some metric space $(X,d_0)$ such that, for $\mathfrak{T}_\mathcal{D}(X,d_0)=\theta_0$, $\theta_0(\varepsilon)=\{B_1,B_2\}$, $\ell(B_1,B_2)=R>\varepsilon$ and $\{B_1,B_2\}$ does not satisfy $P$ for $R$. Suppose that, given $P$, $\#(X)$ is the minimal cardinal for which there exists a metric space holding these conditions.

Let $\Gamma:=\Gamma^{B_1,B_2}_R \co [0,1]\to \mathcal{M}$ be defined as in (\ref{Eq: Gamma}).

As we proved in Proposition \ref{Prop: cont}, $\Gamma$ is a path in $\mathcal{M}$.

Now, let us see that $\Gamma$ is a path in $\mathcal{H}(\mathfrak{T},\varepsilon)$ from $(X,d_0)$ to $I(R)$. Since $\ell$ is $\Gamma$-regular, $\ell_{R,d_t}(B_1,B_2)=R>\varepsilon$.

Let us see that $\theta_{t}(\varepsilon)=\{B_1,B_2\}$ for every $t\in (0,1)$.

First, notice that since $\#(X)$ is minimal, by Lemma \ref{Lema: minimal}, if we apply the algorithm on $X$ with any metric, every pair of blocks satisfies $P$ as long as the pair of blocks involved do not include all the points in $X$.

Let $(X,d'):=(X,(1-t)d_0)$. Since $\ell$ is scale preserving, $\theta'((1-t)r)=\theta(r)$ where 
$\theta'=\mathfrak{T}_\mathcal{D}(X,d')$. In particular, $\theta'((1-t)\varepsilon)=\{B_1,B_2\}$.

Then, let us see that $\theta_t(r)=\theta'(r)$ for every $r\leq (1-t)\varepsilon$ where $\theta_{X_t}=\mathfrak{T}_\mathcal{D}(X,d_t)$. Let $\Theta_i$ denote the partitions defined in the construction of $\theta_t$ and $\Theta'_i$ denote the partitions defined in the construction of $\theta'$. Clearly, $\Theta_0=\Theta'_0$. Suppose $\Theta_{i-1}=\Theta'_{i-1}=\{V_1,...,V_n\}$ with $V_1\cup\cdots V_k=B_1$ and $V_{k+1}\cup\cdots V_n=B_2$. Let $R'_{i-1}:=\min\{\ell_{d'}(V,V')\, | \, V,V'\in \Theta'_i, \ V\neq V'\}$ and assume $R'_{i-1}\leq (1-t)\varepsilon$. Notice that since $\theta'((1-t)\varepsilon)=\{B_1,B_2\}$, if $\ell_{d'}(V,V')=R'_{i-1}$, then $V,V'$ are in the same block, $B_1$ or $B_2$. Now, since $\ell$ is monotonic (by definition), $\ell_{d_t}(V,V')=\ell_{d'}(V,V')$ if $V,V'$ are in the same block, $B_1$ or $B_2$, and $\ell_{d_t}(V,V')\geq \ell_{d'}(V,V')$ if $V\subset B_1$ and $V'\subset B_2$. Therefore, $R_{i-1}=R'_{i-1}$ and $\Theta_{i}=\Theta'_{i}$. Hence, it follows that $\theta_t(r)=\theta'(r)$ for every $r\leq (1-t)\varepsilon$. In particular, $\theta_t((1-t)\varepsilon)=\{B_1,B_2\}$.

Thus, since $\ell_{R,d_t}(B_1,B_2)=R>\varepsilon$,  $\theta_t(\varepsilon)=\{B_1,B_2\}$ for every $t\in (0,1)$.

Since $\mathfrak{T}$ is admissible, if $\mathfrak{T}_{\mathcal{D}}(I_R)=\theta_1$, $\theta_1(t)$ has two blocks $\{\bar{p}_0\},\{\bar{p}_1\}$ for every $t<R$.  

Then, $\Gamma(t)\in \mathcal{H}(\mathfrak{T},\varepsilon)$ for every $t\in [0,1]$.

Since $\Gamma$ is a path in $\mathcal{H}(\mathfrak{T},\varepsilon)$, for every $\gamma$ there exist $s_1,s_2\in [0,1]$ with $|s_2-s_1|<\gamma$ such that  $\gamma(s_1)=(X,d_{s_1})$ has two blocks $C_1:=B_1,C_2:=B_2$ with $C_1\not \sim_R C_2$ and $\Gamma(s_2)=(X,d_{s_2})$ has two blocks $C'_1:=B_1,C'_2:=B_2$ with $C'_1\sim_R C'_2$. 

Finally, as we saw in the proof of Proposition \ref{Prop: cont}, taking $\gamma<\frac{2\delta}{diam(X,d)+R}$, $d_{GH}((X,d_{s_1}),(X,d_{s_2}))<\delta$.  
\end{proof}

\begin{definicion} We say that a pair of $t$-components $B_1,B_2$ in $(X,d)$ are \textbf{$(t,R)$-bridged by a single edge} if there exist $b_1\in B_1$, $b_2\in B_2$ such that $d(b_1,b_2)=R>t$ and for every $(b_1,b_2)\neq (x,y) \in B_1\times B_2$, $d(x,y)>R$. If the constants $t,R$ are not relevant we say that $B_1,B_2$  are \textbf{bridged by a single edge}.  
\end{definicion}

\begin{definicion} Given $\mathfrak{T}(\ell^{SL},P)$, we say that $P$ is \textbf{bridge-unchaining} 
if there is a metric space $(X,d)$ and a nontrivial partition $B_1,B_2$ such that  $B_1,B_2$ are bridged by a single edge and
$\{B_1,B_2\}$ does not satisfy $\{P\}$ for $R$.
\end{definicion}

\begin{definicion} Given $\mathfrak{T}(\ell^{SL},P)$, we say that $P$ is \textbf{minimally bridge-unchaining} if: 
\begin{itemize}
	\item there is a metric space $(X,d)$ and a nontrivial partition $B_1,B_2$ such that  $B_1,B_2$ are bridged by a single edge and 
$\{B_1,B_2\}$ does not satisfy $\{P\}$ for $R$, 
	\item $(X,d)$ can be chosen so that for every metric space $X'$ with $\#(X')<\#(X)$ and any nontrivial partition $B'_1,B'_2$ of 
$X'$ with $\ell^{SL}(B_1,B_2)=\varepsilon$, then $\{B'_1,B'_2\}$ satisfies $\{P\}$ for $\varepsilon$.
\end{itemize}
\end{definicion}

\begin{prop}\label{Prop: bridge} For every $\alpha\geq 1$, $P_\alpha$ (the unchaining condition of $SL(\alpha)$) is minimally bridge-unchaining.
\end{prop}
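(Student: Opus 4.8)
The plan is to split the statement into its two requirements — that $P_\alpha$ is bridge-unchaining, and that there is a \emph{minimal-cardinality} witnessing space — and to dispatch both at once by first reducing $P_\alpha$, in the presence of a single-edge bridge, to an elementary inequality between dimensions of Rips complexes. Concretely, I would record the following reduction. Suppose $B_1,B_2$ is a nontrivial partition of $(X,d)$ and $B_1,B_2$ are $(t,R)$-bridged by a single edge via $b_1\in B_1$, $b_2\in B_2$, so $d(b_1,b_2)=R$ and $d(x,y)>R$ for every cross-pair $(x,y)\neq(b_1,b_2)$. Since $B_1\cup B_2=X$, any simplex $\Delta\in F_R(B_1\cup B_2)$ with $\Delta\cap B_1\neq\emptyset$ and $\Delta\cap B_2\neq\emptyset$ must satisfy $d(u,v)\le R$ for each $u\in\Delta\cap B_1$ and $v\in\Delta\cap B_2$; hence $(u,v)=(b_1,b_2)$, which forces $\Delta=\{b_1,b_2\}$. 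Thus the bridge edge is the only simplex of $F_R(B_1\cup B_2)$ meeting both blocks, and it has dimension $1$. Consequently $\{B_1,B_2\}$ satisfies $P_\alpha$ for $R$ if and only if $\alpha\ge\min\{\dim F_R(B_1),\dim F_R(B_2)\}$.

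Next I would build the witnessing space. Let $n$ be the least integer with $n-1>\alpha$ (so $n\ge 3$, since $\alpha\ge 1$), and take $X=B_1\sqcup B_2$ with $B_1=\{b_1=x_1,\dots,x_n\}$, $B_2=\{b_2=y_1,\dots,y_n\}$ and metric $d(x_i,x_j)=d(y_i,y_j)=2$ for $i\neq j$, $d(b_1,b_2)=3$, and $d(x_i,y_j)=4$ for every other cross-pair. Every positive distance lies in $[2,4]=[2,2\cdot 2]$, so the triangle inequality holds automatically. For any $t\in(2,3)$ the sets $B_1,B_2$ are exactly the $t$-components of $(X,d)$ (internal distances $2<t$; all cross distances $\ge 3>t$), and with $R:=d(b_1,b_2)=3>t$ every other cross distance equals $4>R$, so $B_1,B_2$ are $(t,R)$-bridged by the single edge $b_1b_2$. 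Since $F_R(B_i)$ is the full $(n-1)$-simplex, the reduction shows $\{B_1,B_2\}$ satisfies $P_\alpha$ for $R$ iff $\alpha\ge n-1$, which fails by the choice of $n$. Hence $P_\alpha$ is bridge-unchaining, witnessed by a space with $\#(X)=2n$.

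It then remains to verify that this cardinality is minimal. Let $(X',d')$ be any metric space with $\#(X')<2n$ and $B'_1,B'_2$ a nontrivial partition, and set $\varepsilon:=\ell^{SL}(B'_1,B'_2)$; I would show $\{B'_1,B'_2\}$ satisfies $P_\alpha$ for $\varepsilon$. Choose $b'_1\in B'_1$, $b'_2\in B'_2$ with $d'(b'_1,b'_2)=\varepsilon$, so that $\{b'_1,b'_2\}$ is a $1$-dimensional simplex of $F_\varepsilon(B'_1\cup B'_2)$ meeting both blocks. If $\{B'_1,B'_2\}$ failed $P_\alpha$ for $\varepsilon$, then in particular $\alpha=\alpha\cdot 1<\min\{\dim F_\varepsilon(B'_1),\dim F_\varepsilon(B'_2)\}$, so $\dim F_\varepsilon(B'_i)>\alpha$ for $i=1,2$; as $n-1$ is the least integer exceeding $\alpha$ (this is precisely why $n$ is calibrated by $n-1>\alpha$), we get $\dim F_\varepsilon(B'_i)\ge n-1$, hence $\#(B'_i)\ge\dim F_\varepsilon(B'_i)+1\ge n$, and therefore $\#(X')=\#(B'_1)+\#(B'_2)\ge 2n$, a contradiction. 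So $\{B'_1,B'_2\}$ satisfies $P_\alpha$ for $\varepsilon$, which is the second clause in the definition of minimally bridge-unchaining.

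The only genuinely delicate point is this last step: one must be sure that a failure of $P_\alpha$ across an edge (hence across a simplex of dimension $1$) forces \emph{each} block to contain an $\varepsilon$-clique of size at least $n$, which rests on $n-1$ being exactly the least integer above $\alpha$. Everything else — the triangle inequality for the constructed metric, the identification of the $t$-components, and the dimension count for the full simplices $F_R(B_i)$ — is routine.
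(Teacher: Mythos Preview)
Your proof is correct and follows essentially the same approach as the paper: both construct two equal-size cliques joined by a single cross edge, observe that the only cross simplex in the Rips complex at level $R$ is that edge, and then argue minimality by noting that a failure of $P_\alpha$ across a $1$-simplex forces each block to contain a clique of size exceeding $\alpha+1$. Your version is slightly more careful in that it handles non-integer $\alpha$ explicitly (via $n-1=\lfloor\alpha\rfloor+1$) and spells out the reduction step, whereas the paper tacitly treats $\alpha$ as an integer when writing $B_1=\{x_0,\dots,x_{\alpha+1}\}$.
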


\begin{proof} Consider $X=B_1\cup B_2$ as follows. Let $B_1:=\{x_0,...,x_{\alpha+1}\}$ with $d(x_i,x_j)=1$ for every $i\neq j$ and let $B_2:=\{y_0,...,y_{\alpha+1}\}$ 
with $d(y_i,y_j)=1$ for every $i\neq j$. Suppose $d(x_0,y_0)=2$ and $d(x_i,y_j)>2$ for every $(i,j)\neq (0,0)$.

It is immediate to check that $B_1,B_2$ are $(1,2)$-bridged by a single edge which is $(x_0,y_0)$. 

Now, $F_2(X)$ has a unique simplex, $\Delta=[x_0,y_0]$, intersecting $B_1$ and $B_2$ and $dim(\Delta)=1$. Nevertheless, $dim(F_2(B_1))=dim(F_2(B_2))=\alpha+1$. Therefore, $B_1,B_2$ does not satisfy $P_\alpha$ for $2$.

Notice that $\#(X)=2\alpha +4$. Suppose $X'=B'_1\cup B'_2$ such that $B'_1,B'_2$ does not satisfy $P_\alpha$ for some $R$. Then, in particular, $\min\{dim(F_R(B_1)),dim(F_R(B_2))\}>\alpha$. Hence, $\#(B_1),\#(B_2)>\alpha+1$ and $\#(X')\geq \#(X)$. Thus, $P_\alpha$ is minimally bridge unchaining.
\end{proof}

\begin{definicion} We say that a $HC$ method $\mathfrak{T}(\ell;P)$ is \textbf{ordinary} if $\mathfrak{T}_\mathcal{U}(I(R))=I(R)$, 
$\mathfrak{T}_\mathcal{U}(I(\delta,R))=\mathfrak{T}_\mathcal{U}^{SL}(I(\delta,R))$ and $\mathfrak{T}_\mathcal{U}(I(\delta,R,\delta))=\mathfrak{T}_\mathcal{U}^{SL}(I(\delta,R,\delta))$ for every $\delta,R>0$.
\end{definicion}

\begin{obs} For every $\alpha\geq 1$,  $SL(\alpha)$ is ordinary.
\end{obs}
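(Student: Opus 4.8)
The plan is to establish the stronger fact that $SL(\alpha)$ and single linkage produce the same ultrametric on every finite metric space with at most five points, and then to observe that $I(R)$, $I(\delta,R)$ and $I(\delta,R,\delta)$ have two, three and four points. Granting this, $\mathfrak{T}^{SL(\alpha)}_\mathcal{U}(I(\delta,R))=\mathfrak{T}^{SL}_\mathcal{U}(I(\delta,R))$ and $\mathfrak{T}^{SL(\alpha)}_\mathcal{U}(I(\delta,R,\delta))=\mathfrak{T}^{SL}_\mathcal{U}(I(\delta,R,\delta))$ are immediate; and since every two-point metric space is an ultrametric, faithfulness of $\mathfrak{T}^{SL}$ (Proposition \ref{Prop: faithful SL}) gives $\mathfrak{T}^{SL(\alpha)}_\mathcal{U}(I(R))=\mathfrak{T}^{SL}_\mathcal{U}(I(R))=I(R)$. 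These are precisely the three conditions in the definition of an ordinary method.

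The crux is a counting bound on when $P_\alpha$ can fail. Let $B,B'$ be disjoint nonempty blocks with $\ell^{SL}(B,B')\leq R$, and choose $b\in B$, $b'\in B'$ with $d(b,b')\leq R$; then $[b,b']$ is a $1$-simplex of $F_R(B\cup B')$ meeting both $B$ and $B'$. Consequently, if $\{B,B'\}$ fails $P_\alpha$ for $R$, this particular simplex must violate the dimension inequality, i.e. $\alpha\cdot 1<\min\{\dim F_R(B),\dim F_R(B')\}$; since $\alpha\geq 1$ and simplicial dimensions are integers this forces $\dim F_R(B)\geq 2$ and $\dim F_R(B')\geq 2$. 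A Rips complex of dimension at least $2$ contains a $2$-simplex, hence three mutually $R$-close vertices, so $\#(B)\geq 3$ and $\#(B')\geq 3$, and therefore $\#(B\cup B')\geq 6$. Contrapositively, on any metric space with at most five points every pair of blocks $B,B'$ with $\ell^{SL}(B,B')\leq R$ satisfies $P_\alpha$ for $R$.

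I would then feed this into the recursion. On a space $(X,d)$ with $\#(X)\leq 5$, at every scale the test $\ell^{SL}(B,B')\leq R_i$ already implies that $\{B,B'\}$ satisfies $P_\alpha$ for $R_i$, so condition ii) in the construction of $G_\alpha^{R_i}$ is automatic; hence $G_\alpha^{R_i}$ carries exactly the edges of the single-linkage merging graph at scale $R_i$, and, in the almost-standard formulation, the pathological branch c) is never reached (reaching it would require a pair of blocks with $\ell^{SL}\leq R_{i-1}$ left unmerged at the previous stage, which cannot happen once condition ii) is vacuous; cf. Remark \ref{Obs: ordered}). A routine induction on the recursion then shows that $SL(\alpha)$ and single linkage build the same partition at every stage, hence yield the same dendrogram and the same ultrametric on $(X,d)$.

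The mathematical content is entirely in the counting bound of the second paragraph; everything else is bookkeeping. The one point that needs a little care is identifying the two recursions stage by stage --- matching the threshold sequences and ruling out branch c) --- but this is the same verification already carried out for $\ell^{SL}$ in Remark \ref{Obs: ordered}. In particular, the various orderings of $\delta$ and $R$ that a direct computation on $I(\delta,R)$ and $I(\delta,R,\delta)$ would force one to treat separately are all absorbed into the five-point argument, so no case analysis is required.
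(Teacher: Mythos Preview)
Your argument is correct. The paper states this observation without proof, so there is no ``paper's own proof'' to compare against; your counting bound --- that failure of $P_\alpha$ at scale $R$ forces $\dim F_R(B)\geq 2$ and $\dim F_R(B')\geq 2$, hence $\#(B\cup B')\geq 6$ --- is exactly the computation the paper carries out a few lines earlier in the proof of Proposition~\ref{Prop: bridge} (there with the sharper bound $\#(X')\geq 2\alpha+4$), so your approach is the intended one made explicit. One small remark: for $SL(\alpha)$ the paper's own definition already fixes $\mathcal{R}=\Delta(X,d)$ at the outset, so the branch~c) issue you flag never arises for this particular method and that part of your discussion, while not wrong, is unnecessary.
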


\begin{teorema}\label{Teorema: stable} If $\mathfrak{T}(\ell^{SL};P)$ is an ordinary almost-standard linkage-based $HC$ method and $P$ is minimally bridge-unchaining, then $\mathfrak{T}$ is not stable in the Gromov-Hausdorff sense.
\end{teorema}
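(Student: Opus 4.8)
The plan is to exhibit a single point at which the map $\mathfrak{T}_\mathcal{U}\colon(\mathcal{M},d_{GH})\to(\mathcal{U},d_{GH})$ fails to be continuous.

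\emph{Step 1: unwinding the hypotheses.} Since $P$ is minimally bridge-unchaining, fix a finite metric space $(X,d_0)$ of \emph{minimal} cardinality carrying a nontrivial partition $B_1,B_2$ which are the $t$-components of $(X,d_0)$, with a bridge edge $(b_1,b_2)$, $d_0(b_1,b_2)=R>t$, every other cross distance strictly larger than $R$, and $\{B_1,B_2\}$ not satisfying $P$ for $R$. By minimality of $\#(X)$ and Lemma \ref{Lema: minimal}, in the construction of $\mathfrak{T}_\mathcal{D}(X,d_0)$ the condition $P$ never blocks the merge of two blocks whose union is a proper subset of $X$; hence below scale $R$ the algorithm coincides with single linkage, so $\theta_X(\varepsilon)=\{B_1,B_2\}$ for every $\varepsilon\in[t,R)$, and $\ell^{SL}(B_1,B_2)=R$. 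Since $\{B_1,B_2\}$ fails $P$ for $R$ the two blocks are not merged at $R$; as $\mathfrak{T}$ is a genuine $HC$ method they are merged eventually, and necessarily at some fixed scale $\rho>R$, namely the first $r\ge R$ for which $\{B_1,B_2\}$ satisfies $P$. Thus, writing $u=\mathfrak{T}_\mathcal{U}(X,d_0)$ for the resulting ultrametric, $u$ equals $\rho$ on every pair with one point in $B_1$ and one in $B_2$, and is $\le t$ on every pair inside a single block. Record the two fixed positive numbers $\rho-R$ and $R-t$.

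\emph{Step 2: the approximating sequence.} The aim is to build $(X_k,d_k)$, $k\in\bn$, with $d_{GH}((X_k,d_k),(X,d_0))\le \tfrac1{2k}\to0$ but $\mathfrak{T}_\mathcal{U}(X_k,d_k)$ bounded away from $u$. The idea is to perturb $(X,d_0)$ only near the bridge, inserting a small number of near-duplicates of the endpoints $b_1,b_2$ (each at distance $\tfrac1k$ from the point it clones and with the same distances as that point to everything else, which keeps the perturbation of size $O(1/k)$ and keeps every within-block cardinality $<\#(X)$), arranged so that a sub-configuration built from the new points together with $b_1,b_2$ is isometric to one of the interval spaces $I(R)$, $I(\tfrac1k,R)$ or $I(\tfrac1k,R,\tfrac1k)$. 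On those spaces $\mathfrak{T}$ agrees with single linkage because $\mathfrak{T}$ is ordinary, and on everything of cardinality $<\#(X)$ it agrees with single linkage by minimality (Lemmas \ref{Lema: minimal}, \ref{Lema: restriction}, \ref{Lema: scale}); engineered correctly these two facts force the block $B_1^k\supseteq B_1$ and the block $B_2^k\supseteq B_2$ to be merged at scale $R$. Then $u_k=\mathfrak{T}_\mathcal{U}(X_k,d_k)$ equals $R$ on all $B_1^k$--$B_2^k$ cross pairs and is $\le t$ inside each block, while the correspondence identifying each inserted point with the endpoint it cloned shows $d_{GH}((X_k,d_k),(X,d_0))\le\tfrac1{2k}$. (For $SL(\alpha)$ this is concrete: inserting one point at distance $\tfrac1k$ from $b_1$ inside $B_1$ creates a $2$-simplex through the bridge in $F_R$, and $\alpha\cdot 2\ge\min\{\dim F_R(B_1^k),\dim F_R(B_2)\}$ for $\alpha\ge1$, so $\{B_1^k,B_2\}$ satisfies $P_\alpha$ for $R$.)

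\emph{Step 3: a Gromov--Hausdorff lower bound.} Both $u_k$ and $u$ carry a two-block structure: $u_k$ equals $R$ across the blocks and is $\le t$ inside, $u$ equals $\rho$ across the blocks and is $\le t$ inside, with $t<R<\rho$. Given any correspondence $\tau\in\mathcal{T}(X_k,X)$, look at the induced correspondence $\bar\tau$ between the two two-element sets of blocks. If $\bar\tau$ is the identity or the transposition, then $\tau$ carries a pair that is across the blocks of $X_k$ to a pair that is across the blocks of $X$; that is, there are $(z_1,z_1'),(z_2,z_2')\in\tau$ with $u_k(z_1,z_2)=R$ and $u(z_1',z_2')=\rho$, so $|u_k(z_1,z_2)-u(z_1',z_2')|=\rho-R$. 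Otherwise some block on one side is $\bar\tau$-related to both blocks on the other side, and one finds $(a,a'),(b,b')\in\tau$ with the pair inside one block on one side and across the blocks on the other side, so $|u_k(a,b)-u(a',b')|\ge\min\{\rho-t,\,R-t\}=R-t$. In every case $\sup_\tau$ of the discrepancy is at least $\min\{\rho-R,\,R-t\}$, whence $d_{GH}(u_k,u)\ge\tfrac12\min\{\rho-R,\,R-t\}>0$ for all $k$. Since $d_{GH}((X_k,d_k),(X,d_0))\to0$, this contradicts continuity of $\mathfrak{T}_\mathcal{U}$, so $\mathfrak{T}$ is not stable in the Gromov--Hausdorff sense; in particular, since $SL(\alpha)$ is ordinary and $P_\alpha$ is minimally bridge-unchaining (Proposition \ref{Prop: bridge}), $SL(\alpha)$ is not stable.

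\emph{The main obstacle} is Step 2. Proposition \ref{Prop: main_1} already provides arbitrarily close metric spaces on one of which the pair of bridge blocks satisfies $P$ and on the other of which it does not, but it gives no control over the scale at which the ``bad'' pair is eventually merged, and for an arbitrary unchaining condition $P$ there is no formula for when $P$ holds. Hence the perturbation must be set up so that the merge at scale $R$ in $(X_k,d_k)$ is certified \emph{only} through the two inputs we do control: the fact that $\mathfrak{T}$ is ordinary (which pins $\mathfrak{T}$ down on $I(R)$, $I(\tfrac1k,R)$ and $I(\tfrac1k,R,\tfrac1k)$) and the minimality of $\#(X)$ (which pins $\mathfrak{T}$ down on every smaller space). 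Placing the interval sub-configuration isometrically inside $(X_k,d_k)$ while respecting all the triangle inequalities and keeping the inserted points genuine $O(1/k)$-duplicates of $b_1,b_2$ is where the real work lies; the Gromov--Hausdorff estimate of Step 3 and the bookkeeping of Step 1 are routine by comparison.
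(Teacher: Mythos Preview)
Your Step~1 and Step~3 are fine, but Step~2 has a genuine gap that your ``main obstacle'' paragraph correctly identifies and then does not close.

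The difficulty is this. You enlarge $X$ to $X_k$ by inserting near-duplicates of $b_1,b_2$, and you want to conclude that the two blocks $B_1^k,B_2^k$ of $(X_k,d_k)$ \emph{do} satisfy $P$ for $R$. But neither of your two inputs delivers this. Ordinariness only tells you what $\mathfrak{T}$ does on the \emph{standalone} spaces $I(R)$, $I(\tfrac1k,R)$, $I(\tfrac1k,R,\tfrac1k)$; it says nothing about how the abstract condition $P$ evaluates on the large pair $\{B_1^k,B_2^k\}$ inside $(X_k,d_k)$, regardless of what sub-configurations are present. And minimality of $\#(X)$ is useless here because $\#(X_k)>\#(X)$: it controls $P$ only on strictly smaller spaces, so it governs the dendrogram up to scale $\varepsilon$ but gives no leverage at scale $R$. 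Your parenthetical $SL(\alpha)$ computation works precisely because you use the explicit formula for $P_\alpha$; for a general minimally bridge-unchaining $P$ there is no such formula, and adding a duplicate could just as well make $P$ \emph{harder} to satisfy.

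The paper avoids this by keeping the underlying set fixed and deforming the metric along a continuous path $\Gamma\colon[0,1]\to(\mathcal{M},d_{GH})$ from $(X,d_0)$ to $I(\delta,R)$ (or $I(\delta,R,\delta)$), via explicit convex-combination formulas for $d_t$. For $t\in[0,1)$ the set is still $X$ with blocks $B_1,B_2$; minimality guarantees $\theta_t(\varepsilon)=\{B_1,B_2\}$, and a direct check shows no cross distance lies in $(R,R+\delta)$. At $t=1$ the space \emph{is} an interval space, so ordinariness applies directly and the two blocks are merged at $R$. Since at $t=0$ they are not, continuity of $\Gamma$ produces $s_1,s_2$ arbitrarily close with $B_1\not\sim_R B_2$ at $s_1$ and $B_1\sim_R B_2$ at $s_2$; the gap $(R,R+\delta)$ then forces the output ultrametrics to differ by at least $\delta/2$. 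The point is that the path reaches a space where ordinariness bites \emph{literally}, not through a sub-configuration argument; that is what your insertion scheme cannot replicate for general $P$.
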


\begin{proof} Since $P$ is bridge-unchaining, there exists some metric space $(X,d)$ with two $\varepsilon$-components $B_1,B_2$ bridged by a single edge $(b_1,b_2)$ with  $b_1\in B_1$, $b_2\in B_2$, $d(b_1,b_2)=R>\varepsilon$ and 
such that $\{B_1,B_2\}$ does not satisfy $P$ for $R$. Suppose that $\#(X)$ is the minimal cardinal for which there is such a pair.

Let $R=t_i$ for some $1\leq i \leq n$. Then $\varepsilon\leq t_{i-1}$, $B_1,B_2\in \theta(t_{i-1})$ and $B_1\not \sim_R B_2$. 

Since $\mathfrak{T}$ is ordinary, either $B_1$ or $B_2$ has at least two points. Let us assume that $\#(B_1)>1$.

Let $\delta =\min_{1\leq i \leq n}\{t_i-t_{i-1}\}$ (in particular, $\delta\leq t_1=min_{x\neq x'}d(x,x')$).

If  $\#(B_2)=1$, let  $\Gamma\co [0,1]\to (\mathcal{M},d_{GH})$ be such that $\Gamma(0)=(X,d)$, $\Gamma(1)=I(\delta,R)$ and for every $t\in (0,1)$, $\Gamma(t)=(X,d_t)$ with

$d_t(x,y)=\left\{ 
\begin{tabular}{l}
$(1-t)d(x,y)  \qquad \mbox{ if } x,y\in B_1\backslash\{b_1\}  $\\
$(1-t)d(x,y) +t \delta \quad \mbox{ if }  x=b_1, \, y\in B_1\backslash\{b_1\} \mbox{ or } y=b_1, \, x\in B_1\backslash\{b_1\} $ \\ 
$(1-t)d(x,y)+t(R+\delta)  \  \mbox{ if } x\in B_1\backslash \{b_1\}, y=b_2 \mbox{ or } y\in B_1\backslash \{b_1\}, x=b_2 $\\
$\qquad R \qquad \qquad  \mbox{ if } x=b_1, y=b_2 \mbox{ or } y=b_1, x=b_2 $  \end{tabular}
\right.$

If $\#(B_2)>1$, let $\Gamma\co [0,1]\to (\mathcal{M},d_{GH})$ be such that $\Gamma(0)=(X,d)$, $\Gamma(1)=I(\delta,R,\delta)$  and for every $t\in (0,1)$, $\Gamma(t)=(X,d_t)$ with $d_t(x,y)=$

$=\left\{ 
\begin{tabular}{l}
$(1-t)d(x,y) \qquad  \qquad \quad \ \ \mbox{ if } x,y\in B_1\backslash\{b_1\} \mbox{ or } x,y\in B_2\backslash\{b_2\} \qquad \qquad$\\
$(1-t)d(x,y) +t \delta \qquad \quad \ \ \mbox{ if }  x=b_1, \, y\in B_1\backslash\{b_1\} \mbox{ or } y=b_1, \, x\in B_1\backslash\{b_1\}  $\\ 
$(1-t)d(x,y) +t \delta \qquad \quad \ \ \mbox{ if }  x=b_2, \, y\in B_2\backslash\{b_2\} \mbox{ or } y=b_2, \, x\in B_2\backslash\{b_2\}  $\\ 
$(1-t)d(x,y)+t(R+\delta) \quad  \mbox{ if } x\in B_1\backslash \{b_1\}, y=b_2 \mbox{ or } y\in B_1\backslash \{b_1\}, x=b_2 $\\
$(1-t)d(x,y)+t(R+\delta) \quad  \mbox{ if }  x=b_1, y\in B_2\backslash \{b_2\}  \mbox{ or } y=b_1, x\in B_2\backslash \{b_2\}$\\
$(1-t)d(x,y)+t(R+2\delta) \ \, \mbox{ if } x\in B_1\backslash \{b_1\}, y\in B_2\backslash \{b_2\} \mbox{ or } y\in B_1\backslash \{b_1\}, x\in B_2\backslash \{b_2\}$\\
$\qquad \ \ R \qquad \qquad \, \qquad \qquad  \mbox{ if } x=b_1, y=b_2 \mbox{ or } y=b_1, x=b_2$ \end{tabular}
\right.$

\begin{figure}[ht]
\centering
\includegraphics[scale=0.5]{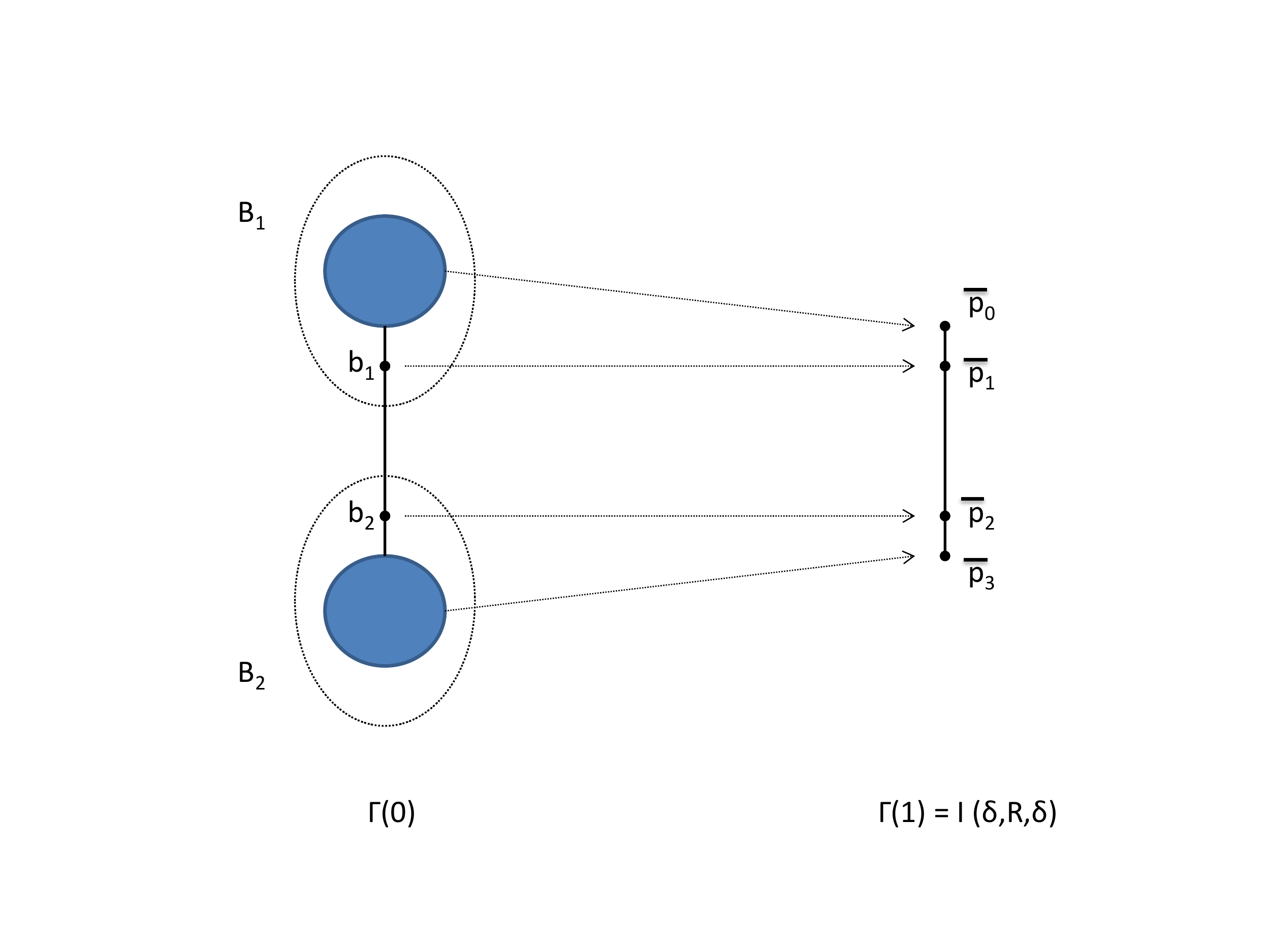}
\caption{$\Gamma$ defines a path in $\mathcal{H}(\mathfrak{T},\varepsilon)$ from $(X,d)$ to $I(\delta,R)$ or $I(\delta,R,\delta)$.}
\label{Fig: Path}
\end{figure}

In either case, let us see that $d_t$ is a metric for every $t\in [0,1)$. It suffices to check that the triangle inequality holds for every $t\in [0,1)$.

First, notice that given $x,y,z\in X$, since $d$ is a metric, for every $0\leq t\leq 1$, $$(1-t)d(x,z)\leq (1-t)d(x,y)+(1-t)d(y,z).$$

Consider the following cases:
\begin{itemize}
	\item If $x,z$ are in the same $t$-component $B_i$. 
	
Then, if $b_i\not \in \{x,z\}$, $d_t(x,z)=(1-t)d(x,z)\leq (1-t)d(x,y)+(1-t)d(y,z)\leq d_t(x,y)+d_t(y,z)$. 

If $b_i \in \{x,z\}$, $d_t(x,z)=(1-t)d(x,z)+t\delta \leq (1-t)d(x,y)+(1-t)d(y,z)+t\delta= d_t(x,y)+d_t(y,z)$.
	
	\item If $x,z$ are not in the same $t$-component. 
	
Then, if $ \{x,z\}=\{b_1,b_2\}$, $d_t(x,z)=(1-t)d(x,z)+tR\leq (1-t)d(x,y)+(1-t)d(y,z)+tR< d_t(x,y)+d_t(y,z)$. 

If $ \{x,z\}\cap \{b_1,b_2\}=b_i$, $i=1,2$, $d_t(x,z)=(1-t)d(x,z)+tR+ t\delta \leq (1-t)d(x,y)+(1-t)d(y,z)+tR+t\delta \leq  d_t(x,y)+d_t(y,z)$. 

If $ \{x,z\}\cap \{b_1,b_2\}=\emptyset$ $d_t(x,z)=(1-t)d(x,z)+tR+ 2t\delta \leq (1-t)d(x,y)+(1-t)d(y,z)+tR+2t\delta \leq  d_t(x,y)+d_t(y,z)$.
\end{itemize}

Claim 1. $\Gamma$ is a path in $(\mathcal{M},d_{GH})$ 
from $(X,d)$ to  $I(\delta,R)$ or $I(\delta,R,\delta)$ respectively.

Consider for every $t_1<t_2<1$ the correspondence induced by the identity, $\tau=\{(x,x) \, | \, x\in X\}$. Then, for every pair of points $x,y\in X$, $|d_{t_1}(x,y)-d_{t_2}(x,y)|\leq (t_2-t_1)[d(x,y)+R+2\delta]$. Hence, $\Gamma$ is continuous on $[0,1)$. 

To check the continuity at $t=1$, consider the correspondence $\tau'$ given by $(B_1\backslash \{b_1\},\bar{p}_0)$ , 
$(b_1, \bar{p}_1)$, and $(b_2, \bar{p}_r)$ if $\#(B_2)=1$ or $(B_1\backslash \{b_1\},\bar{p}_0)$, $(b_1, \bar{p}_1)$,  
$(p_2, \bar{p}_2)$ and $(B_2\backslash \{b_2\},\bar{p}_3)$ if $\#(B_2)>1$. 

Then, given $(x,a),(y,b)\in \tau'$, in every case $|d_{t}(x,y)-d_{1}(a,b)|\leq (1-t)[d(x,y)+R+2\delta]$ and $\Gamma$ is continuous.

Claim 2. In either case, $\Gamma(t)\in \mathcal{H}(\mathfrak{T},\varepsilon)$ for every $t\in [0,1]$. We assumed that $\#(X)$ is the minimal cardinal for which there is a pair of blocks bridged by a single edge which does not satisfy $P$. Since $P$ is minimally bridge-unchaining, this implies that $\#(X)$ is the minimal cardinal for which there is a pair of blocks which does not satisfy $P$. Hence, we may assume that $P$ is always satisfied as long as the blocks involved are not a partition of $X$. 
$\mathfrak{T}=\mathfrak{T}^SL$.

\begin{quote} Notice that this is the unique step where we need the assumption of being \emph{minimally} bridge unchaining. Therefore, this theorem can be rewritten asking $P$ to be bridge unchaining and any other condition that allows us to assume that condition $P$ is satisfied in the construction of the dendrogram for every $R_i\leq \varepsilon$. 
\end{quote}

Also, since $\delta\leq t_1$, it is immediate to check that for every $t\in (0,1)$, $d_t(x,y)\leq d(x,y)$ for every $x,y\in B_r$ with $r=1,2$. Thus, $B_1,B_2$ are $\varepsilon$-connected in $(X,d_t)$.  Then, if 
$\mathfrak{T}_\mathcal{D}(X,d_t)=\theta_t$, and $P$ is always satisfied for every $t\leq \varepsilon$, 
$\theta_t(\varepsilon)=\{B_1,B_2\}$.

Claim 3. For every $t\in [0,1)$ there is no pair of points $x\in B_1$, $y\in B_2$  with $R< d_t(x,y) < R+\delta$.  Notice that if $R=t_i$, $t_{i+1}\geq R+\delta$ and for every $x\in B_1$, $y\in B_2$, then either $x=c_1$ and $y=c_2$ with $d_t(x,y)=R$, or $d(x,y)\geq t_{i+1}\geq R+\delta$ and $d_t(x,y)\geq (1-t)d(x,y)+t(R+\delta) \geq R+\delta$.

Since $\mathfrak{T}(\ell^{SL};P)$ is an ordinary $HC$ method, $\mathfrak{T}(I(\delta,R))=\mathfrak{T}_{SL}(I(\delta,R))$ and 
$\mathfrak{T}(I(\delta,R,\delta))=\mathfrak{T}_{SL}(I(\delta,R,\delta))$. Since $\#(X)$ is the minimal cardinal for which condition 
$P$ has an effect, $\theta_{t}(\varepsilon)=\{B_1,B_2\}$ for every $t\in [0,1)$. 
Since $\Gamma$ is continuous, for any $\delta_1>0$ there exist $s_1,s_2\in [0,1]$ with $|s_2-s_1|<\delta_1$ such that $\Gamma(s_1)=(X,d_{s_1})$ has two $\varepsilon$-components $B_1,B_2$ with 
$B_1\not \sim_R B_2$ and $\Gamma(s_2)=(X,d_{s_2})$ has two $\varepsilon$-components $B'_1,B'_2$ with $B'_1\sim_R B'_2$. If $s_2<1$ let $B'_1:=B_1$ and $B'_2:=B_2$. If $s_2=1$ and $\Gamma(1)=I(\delta,R)$ let $B'_1:=\{\bar{p}_0,\bar{p}_1\}$ and $B'_2:=\{\bar{p}_2\}$. If $s_2=1$ and $\Gamma(1)=I(\delta,R,\delta)$ let $B'_1:=\{\bar{p}_0,\bar{p}_1\}$ and $B'_2:=\{\bar{p}_2,\bar{p}_3\}$.

Let $\mathfrak{T}((X,d_{s_1}))=(X,u_{s_1})$ and $\mathfrak{T}((X,d_{s_2}))=(X,u_{s_2})$. For every $x\in B_1$, $y\in B_2$, 
$u_{s_2}(x,y)=R$ and, since  $d_{s}(x,y)\not \in (R, R+\delta)$ for any $s\in [0,1]$, $u_{s_1}(x,y)\geq R+\delta$. 

Claim 4. $d_{GH}((X,u_{s_1}),(X,u_{s_2}))\geq \frac{\delta}{2} $. 

Suppose there is a correspondence $\tau\subset X\times X$ such that 
\begin{equation}\label{Correspondence} \sup_{(x,y),(x',y')\in \tau}|u_{s_1}(x,x')-u_{s_2}(y,y')|<\delta .\end{equation}

First, let us check that if $(x,y),(x',y')\in \tau$ and $x,x'$ are in the same $\varepsilon$-component of $(X,d)$, then $y,y'$ are also in the same $\varepsilon$-component of $(X,d)$. Let $x=x_0,...,x_n=x'$ be a $\varepsilon$-chain in $(X,d)$. Then, it is a $\varepsilon$-chain in $(X,d_{s_1})$ and, since $\#(C_1)<\#(X)$, condition $P$ does not apply. Hence ($B_1$ is a block in $\theta_{s_1}(\varepsilon)$ and) $u_{s_1}(x,x')\leq \varepsilon=t_{i-1}$. If $y,y'$ are not in the same $\varepsilon$-component, then $u^{s_1}_{SL}(y,y'),u^{s_2}_{SL}(y,y')\geq R=t_i$. Therefore, $|u_{s_1}(x,x')-u_{s_2}(y,y')|\geq t_i-t_{i-1}\geq \delta$ which contradicts (\ref{Correspondence}).

Let $(b_1,y)(b_2,y')\in \tau$. Let us recall that $|u_{s_1}(c_1,c_2)|=R$ and $|u_{s_2}(y,y')|\geq R+\delta$ since $y,y'$ are in different $\varepsilon$-components.  Hence, $|u_{s_1}(b_1,b_2)-u_{s_2}(y,y')|\geq \delta$ which contradicts (\ref{Correspondence}).

Hence, $\sup_{(x,y),(x',y')\in \tau}|u_{s_1}(x,x')-u_{s_2}(y,y')|\geq \delta$ for every correspondence $\tau$, this is, 
$d_{GH}((X,u_{s_1}),(X,u_{s_2}))\geq \frac{\delta}{2} $. Since $\Gamma$ is continuous and $s_2-s_1$ is arbitrarily small, then 
$d_{GH}((X,d_{t_1}),(X,d_{t_2}))$ is also arbitrarily small. Therefore, $\mathfrak{T}$ is not stable.
\end{proof}

From Proposition \ref{Prop: bridge} and Theorem \ref{Teorema: stable},

\begin{cor} $SL(\alpha)$ is not stable in the Gromov-Hausdorff sense.
\end{cor}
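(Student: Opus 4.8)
The plan is to recognize $SL(\alpha)$ as an instance of the situation treated in Theorem \ref{Teorema: stable} and then simply quote that theorem; all the real work has already been done in Proposition \ref{Prop: bridge} and in the proof of Theorem \ref{Teorema: stable}. Concretely, I would verify three things. First, recall from the example following the definition of almost-standard linkage-based methods that $SL(\alpha)=\mathfrak{T}(\ell^{SL};P_\alpha)$, where $\{B_j,B_k\}$ satisfies $P_\alpha$ for $R_i$ exactly when there is a simplex $\Delta\in F_{R_i}(B_j\cup B_k)$ meeting both blocks with $\alpha\cdot\dim(\Delta)\ge\min\{\dim F_{R_i}(B_j),\dim F_{R_i}(B_k)\}$. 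In particular the linkage function of $SL(\alpha)$ is $\ell^{SL}$, so $SL(\alpha)$ has the form $\mathfrak{T}(\ell^{SL};P)$ hypothesized by the theorem.

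Second, I would check that $SL(\alpha)$ is ordinary, i.e.\ that $\mathfrak{T}_\mathcal{U}^{SL(\alpha)}$ agrees with $\mathfrak{T}_\mathcal{U}^{SL}$ on $I(R)$, $I(\delta,R)$ and $I(\delta,R,\delta)$ for all $\delta,R>0$. On each of these spaces every block that ever appears is a single point or a pair of points, so its Rips complex has dimension at most $1$; and at the scale $R$ where the two blocks flanking the length-$R$ edge are compared, that edge is a $1$-simplex of $F_R$ meeting both blocks, so the defining inequality of $P_\alpha$ reads $\alpha\cdot1\ge\min\{\dim F_R(B_1),\dim F_R(B_2)\}$ with right-hand side $\le1\le\alpha$. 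Hence $P_\alpha$ never prevents a merge on these configurations and $SL(\alpha)$ reproduces the single-linkage dendrogram, which is precisely the definition of ordinary. Third, I would invoke Proposition \ref{Prop: bridge}, which establishes that $P_\alpha$ is minimally bridge-unchaining for every $\alpha\ge1$: the witness is $X=B_1\cup B_2$ with $B_1,B_2$ two equilateral configurations of $\alpha+2$ points at mutual distance $1$, joined by a single edge of length $2$; there the only simplex of $F_2(X)$ meeting both blocks is that edge (dimension $1$) while $\dim F_2(B_i)=\alpha+1>\alpha$, so $P_\alpha$ fails, and any space on which $P_\alpha$ fails for a pair of blocks must contain blocks of Rips dimension $>\alpha$, hence has at least $\#X$ points.

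With these three facts in hand, Theorem \ref{Teorema: stable} applies verbatim with $\mathfrak{T}=SL(\alpha)$, $\ell=\ell^{SL}$ and $P=P_\alpha$, and yields that $SL(\alpha)$ is not stable in the Gromov-Hausdorff sense. I do not anticipate any genuine obstacle in the corollary itself; the substantive content lives in Proposition \ref{Prop: bridge} (the explicit minimal witness and its cardinality bookkeeping) and in Theorem \ref{Teorema: stable} (the continuous path $\Gamma$ deforming the single-edge-bridged space to $I(\delta,R)$ or $I(\delta,R,\delta)$, and the lower bound $d_{GH}\ge\frac{\delta}{2}$ on the ultrametric outputs at two nearby endpoints). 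The one spot in the assembly that requires computation rather than a citation is the verification that $SL(\alpha)$ is ordinary, namely the dimension bookkeeping on $F_R$ of the blocks of $I(R)$, $I(\delta,R)$ and $I(\delta,R,\delta)$ sketched above.
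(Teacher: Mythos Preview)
Your proposal is correct and matches the paper's own argument, which simply cites Proposition \ref{Prop: bridge} and Theorem \ref{Teorema: stable}; the fact that $SL(\alpha)$ is ordinary is stated in the paper as a remark without proof, and you supply exactly the dimension check on $I(R)$, $I(\delta,R)$, $I(\delta,R,\delta)$ that justifies it.
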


\section{Discussion}\label{Section: Conclusions}

Using some kind of metric between metric spaces and how the distance between the input determines the distance between the output is a natural way of measuring stability. One of the advantages of using Gromov-Hausdorff metric is that it allows us 
to measure the distance between metric spaces which do not have the same number of points and this is necessary to check the stability of a method when the input has no fixed number of points. However, we prove that if the input does not correspond to an ultrametric space, Gromov-Hausdorff stability is a rare property.

Among the typical linkage-based methods, only single linkage is stable. If we introduce an unchaining property, then we may say that surely the method is going to be unstable. 

We believe that almost-standard linkage-based algorithms can be used to treat different problems related with the chaining effect. Stability is, of course, a strongly desirable property for the method. From our results, we conclude that Gromov-Hausdorff metric is not the appropriate metric to measure the stability of an almost-standard linkage-based algorithm.  Other possible metrics will be analyzed in future research.


\begin{thebibliography}{99}



\bibitem{ABL} M. Ackerman, S. Ben-David and D. Loker. \emph{Characterization of Linkage-based Clustering}. 



\bibitem{AB_11} M. Ackerman and S. Ben-David. \emph{Discerning Linkage-Based Algorithms Among Hierarchical Clustering Methods}. International Joint Conference on Artificial Intelligence (IJCAI 2011).



\bibitem{BL} S. Ben-David, U. von Luxburg, and D. P\'{a}l. \emph{A sober look at clustering
stability}. In G\'{a}bor Lugosi and Hans-Ulrich Simon, editors, COLT, volume 4005
of Lecture Notes in Computer Science, pages 5--19. Springer, (2006). 


\bibitem{BE} O. Bousquet and A. Elisseeff. \textit{Stability and generalization.} JMLR, \textbf{2} (3), (2002) 499--526.

\bibitem{BBI} D. Burago, Y.  Burago and S. Ivanov. \emph{A course in metric
geometry.} Graduate Studies in Mathematics. \textbf{33}, AMS, Providence, RI, (2001).



\bibitem{CM} G. Carlsson and F. Mémoli. \emph{Characterization, Stability and Convergence of Hierarchical Clustering Methods}. Journal of Machine Learning Research, \textbf{11} (2010) 1425--1470.

\bibitem{CM2} G. Carlsson and F. Mémoli. \emph{Multiparameter Hierarchical Clustering Methods}. Preprint.


\bibitem{EKSX} M. Ester, H. Kriegel, J. Sander, X. Xu.  \emph{A density-based algorithm for Discovering clusters in large spatial databases with noise}. In Evangelos Simoudis, Jiawei Han, Usama M. Fayyad. Proceedings of the Second International Conference on Knowledge Discovery and Data Mining (KDD-96). AAAI Press. pp.226-231. ISBN 1-57735-004-9.

\bibitem{Gr} M. Gromov. \emph{Metric structures for Riemannian and non-Riemannian spaces}. Modern
Birkhäuser Classics. Birkhäuser Boston Inc., Boston, MA, english edition, (2007).

\bibitem{Hug} B. Hughes \emph{Trees and ultrametric spaces: a categorical equivalence.} Advances in Mathematics, \textbf{189}, (2004) 148--191.



\bibitem{KN} S. Kutin and P. Niyogi. \emph{Almost-everywhere algorithmic stability and generalization
error.} Technical report, TR-2002-03, University of of Chicago, (2002).

\bibitem{LW} G. N. Lance and W. T. Williams. \emph{A general theory of classificatory sorting strategies 1. Hierarchical systems.} Computer Journal, \textbf{9} (4) 373--380.


\bibitem{M-M} A. Martinez-Perez, A. and M. A.  Morón. \emph{Uniformly continuous maps between ends of $\mathbb{R}$-trees}. Math. Z., \textbf{263}, No. 3, (2009) 583--606.


\bibitem{M-P2} A. Martínez-Pérez. \emph{A density-sensitive hierarchical clustering method.} Preprint.

\bibitem{RC} A. Rakhlin and A. Caponnetto. \emph{Stability properties of empirical risk minimization
over donsker classes.} Technical report, MIT AI Memo 2005-018, (2005).


\bibitem{W} D. Wishart. \emph{Mode analysis: a generalization of nearest neighbor which reduces chaining effects.} In Numerical Taxonomy, pages  (1969) 282--311. Academic Press.

\end{thebibliography}
\end{document}